\newcommand{\nc}{\newcommand}
\theoremstyle{definition}
\newtheorem{theorem}{Theorem}[section]
\newtheorem{definition}[theorem]{Definition}
\newtheorem{lemma}[theorem]{Lemma}
\theoremstyle{remark}
\newtheorem{example}{Example}[section]
\nc\remove[1]{}
\nc\bfa{{\boldsymbol a}}\nc\bfA{{\mathbf A}}\nc\cA{{\mathcal A}}
\nc\bfb{{\boldsymbol b}}\nc\bfB{{\mathbf B}}\nc\cB{{\mathcal B}}
\nc\bfc{{\boldsymbol c}}\nc\bfC{{\mathbf C}}\nc\cC{{\mathcal C}}
\nc\bfd{{\boldsymbol d}}\nc\bfD{{\mathbf D}}\nc\cD{{\mathcal D}}\nc\sD{{\mathscr D}}
\nc\bfe{{\boldsymbol e}}\nc\bfE{{\mathbf E}}\nc\cE{{\EuScript E}}
\nc\bff{{\boldsymbol f}}\nc\bfF{{\mathbf F}}\nc\cF{{\mathcal F}}
\nc\bfg{{\boldsymbol g}}\nc\bfG{{\mathbf G}}\nc\cG{{\mathcal G}}
\nc\bfh{{\boldsymbol h}}\nc\bfH{{\mathbf H}}\nc\cH{{\mathcal H}}
\nc\bfi{{\boldsymbol i}}\nc\bfI{{\mathbf I}}\nc\cI{{\mathcal I}}
\nc\bfj{{\boldsymbol j}}\nc\bfJ{{\mathbf J}}\nc\cJ{{\mathcal J}}
\nc\bfk{{\boldsymbol k}}\nc\bfK{{\mathbf K}}\nc\cK{{\mathcal K}}
\nc\bfl{{\boldsymbol l}}\nc\bfL{{\mathbf L}}\nc\cL{{\mathcal L}}\nc\sL{{\mathscr L}}
\nc\bfm{{\boldsymbol m}}\nc\bfM{{\mathbf M}}\nc\cM{{\mathcal M}}
\nc\bfn{{\boldsymbol n}}\nc\bfN{{\mathbf N}}\nc\cN{{\mathcal N}}
\nc\bfo{{\boldsymbol o}}\nc\bfO{{\mathbf O}}\nc\cO{{\mathcal O}}
\nc\bfp{{\boldsymbol p}}\nc\bfP{{\mathbf P}}\nc\cP{{\mathcal P}}
\nc\bfq{{\boldsymbol q}}\nc\bfQ{{\mathbf Q}}\nc\cQ{{\mathcal Q}}\nc\sQ{{\mathscr Q}}
\nc\bfr{{\boldsymbol r}}\nc\bfR{{\mathbf R}}\nc\cR{{\mathcal R}}
\nc\bfs{{\boldsymbol s}}\nc\bfS{{\mathbf S}}\nc\cS{{\mathcal S}}
\nc\bft{{\boldsymbol t}}\nc\bfT{{\mathbf T}}\nc\cT{{\mathcal T}}\nc\sT{{\mathscr T}}
\nc\bfu{{\boldsymbol u}}\nc\bfU{{\mathbf U}}\nc\cU{{\mathcal U}}
\nc\bfv{{\boldsymbol v}}\nc\bfV{{\mathbf V}}\nc\cV{{\mathcal V}}
\nc\bfw{{\boldsymbol w}}\nc\bfW{{\mathbf W}}\nc\cW{{\mathcal W}}\nc\sW{{\mathscr W}}
\nc\bfx{{\boldsymbol x}}\nc\bfX{{\mathbf Z}}\nc\cX{{\EuScript X}}
\nc\bfy{{\boldsymbol y}}\nc\bfY{{\mathbf Y}}\nc\cY{{\EuScript Y}}\nc\sY{{\mathscr Y}}
\nc\bfz{{\boldsymbol z}}\nc\bfZ{{\mathbf Z}}\nc\cZ{{\mathcal Z}}\nc\sZ{{\mathscr Z}}
\def\h_q{\qopname\relax{no}{h_q}}
\def\h{\qopname\relax{no}{h}}
\title{Revisiting Decomposable Submodular Function Minimization with Incidence Relations}
\author{
  Pan Li \\
  UIUC \\
  \texttt{panli2@illinois.edu} \\
  \And
    Olgica Milenkovic \\
  UIUC \\
  \texttt{milenkov@illinois.edu} \\
}
\begin{document}

\maketitle
\vspace{-0.5cm}
\begin{abstract}
We introduce a new approach to decomposable submodular function minimization (DSFM) that exploits incidence relations. Incidence relations describe which variables effectively influence the component functions, and when properly utilized, they allow for improving the convergence rates of 
DSFM solvers. Our main results include the precise parametrization of the DSFM problem based on incidence relations, the development of new scalable alternative projections and parallel coordinate descent methods and an accompanying rigorous analysis of their convergence rates.~\footnote{The code for this work can be found in https://github.com/lipan00123/DSFM-with-incidence-relations.}
\end{abstract}
\vspace{-0.5cm}
\section{Introduction}
\vspace{-0.2cm}
A set function $F: 2^{[N]} \rightarrow \mathbb{R}$ over a ground set $[N]$ is termed submodular if for all pairs of sets $S_1, S_2 \subseteq [N]$, one has $F(S_1) + F(S_2 )  \geq F(S_1\cap S_2 ) + F(S_1\cup S_2)$. Submodular functions capture the ubiquitous phenomenon of diminishing marginal costs~\cite{fujishige2005submodular} and they frequently arise as part of the objective function of various machine learning optimization problems~\cite{wei2015submodularity, li2017inhomogeneous, li2018submodular, kohli2009robust, lin2011class, krause2007near}.



Among the various submodular function optimization problems, submodular function minimization (SFM), which may be stated as $\min_{S\subseteq[N]} F(S)$, is one of the most important and commonly studied questions. The current fastest known SFM algorithm has complexity $O(N^4 \log^{O(1)}N+ \tau N^3)$, where $\tau$ denotes the time needed to evaluate the submodular function~\cite{lee2015faster}.
Although SFM solvers operate in time polynomial in $N$, the high-degree of the underlying polynomial prohibits their use in practical large-scale settings. For this reason, a recent line of work has focused on developing scalable and parallelizable algorithms for solving the SFM problem by leveraging the property of \emph{decomposability}~\cite{stobbe2010efficient}. Decomposability asserts that the submodular function may be written as a sum of ``simpler'' submodular functions that may be optimized sequentially or in parallel. Formally, the underlying problem, referred to as decomposable SFM (DSFM), may be stated as: 
\begin{align}\label{DSFM}
\text{DSFM:} \quad \min_S \sum_{r\in [R]} F_{r}(S),
\end{align}
where $F_r: 2^{[N]} \rightarrow \mathbb{R}$ is a submodular function for all $r\in[R]$. Algorithmic solutions for the DSFM problem fall into two categories, combinatorial optimization approaches~\cite{kolmogorov2012minimizing,ene2017decomposable} and continuous function optimization methods~\cite{bach2013learning}.
In the latter setting, a crucial concept is the Lov$\acute{\text{a}}$sz extension of the submodular function which is 
convex~\cite{lovasz1983submodular} and lends itself to a norm-regularized convex optimization framework. 
Prior work in continuous DSFM has focused on devising efficient algorithms for solving the convex problem and deriving matching convergence results. The best known approaches include the alternating projection (AP) methods~\cite{jegelka2013reflection, nishihara2014convergence} and the coordinate descent (CD) methods~\cite{ene2015random}.

Despite some simplifications offered through decomposibility, DSFM algorithms still suffer from scalability issues and have convergence guarantees that are suboptimal. To address the first issue, one needs to identify additional problem constraints that allow for parallel implementations. To resolve the second issue and more precisely characterize and improve the convergence rates, one needs to better understand how the individual submodular components jointly govern the global optimal solution. In both cases, it is crucial to utilize \emph{incidence relations} that describe which subsets of variables directly affect the value of any given component function. Often, incidences involve relatively small subsets of elements, which leads to desirable sparsity constraints. This is especially the case for min-cut problems on graphs and hypergraphs (where each submodular component involves two or several vertices)~\cite{karger1993global,chekuri2017computing} and MAP inference with higher-order potentials (where each submodular component involves variables corresponding to adjacent pixels)~\cite{stobbe2010efficient}. Although incidence relations have been used to parametrize the algorithmic complexity of combinatorial optimization methods for solving DSFM problems~\cite{kolmogorov2012minimizing}, they have been largely overlooked in continuous optimization methods. Some prior work considered merging decomposable parts with nonoverlapping  support into one submodular function, thereby creating a coarser decomposition that may be processed more efficiently~\cite{jegelka2013reflection, nishihara2014convergence, ene2015random}, but the accompanying algorithms were neither designed in a form that can optimally use this information nor analyzed precisely with respect to their convergence rates and merging strategies. In an independent work, Djolonga and Krause found that the variational inference problem in L-FIELD reduced to a DSFM problem with sparse incidence relations~\cite{djolonga2015scalable}, while their analysis only worked for regular cases and the obtained results were not as tight as those in this work.

Here, we revisit two benchmark algorithms for continuous DSFM -- AP and CD -- and describe how to modify them to exploit incidence relations that allow for significantly improved computational complexity. Furthermore, we provide a complete theoretical analysis of the algorithms parametrized by incidence relations with respect to their convergence rates. 
AP-based methods that leverage incidence relations achieve better convergence rates than classical AP algorithms both in the sequential and parallel optimization scenario. The random CD method (RCDM) and accelerated CD method (ACDM) that incorporate incidence information can be parallelized. The complexity of sequential CD methods cannot be improved using incidence relations, but the convergence rate of parallel CD methods strongly depends on how the incidence relations are used for coordinate sampling: while a new specialized combinatorial sampling based on equitable coloring~\cite{meyer1973equitable} is optimal, uniformly at random sampling produces a $2$-approximation. It also leads to a greedy method that empirically outperforms random sampling. A summary of these and other findings is presented in Table~\ref{tab:results}. 
 \begin{table*}[h] 
\begin{tabular}{|p{0.9cm}<{\centering}|p{1.3cm}<{\centering}|p{1.8cm}<{\centering}|p{1.8cm}<{\centering}|p{5.8cm}<{\centering}|}
\hline
\multirow{ 2}{*}{} & \multicolumn{2}{c|}{Prior work} & \multicolumn{2}{c|}{This work} \\
\cline{2-5}
& Sequential & Parallel & Sequential & Parallel \\
\hline
AP & $O(N^2R^2)$ & $O(N^2R^2/K)$ & $O(N\|\mu\|_1R)$ &  $O(N\|\mu\|_1R/K)$ \\
\hline
RCDM & $O(N^2R)$& - & $O(N^2R)$ & $O\left(\left(\frac{R-K}{R-1} N^2 +  \frac{K-1}{R-1} N\|\mu\|_1\right)R/K\right)$\\
\hline
ACDM & $O(NR)$ & - & $O(NR)$ & $O\left(\left(\frac{R-K}{R-1} N^2 +  \frac{K-1}{R-1} N\|\mu\|_1\right)^{1/2}R/K\right)$ \\
\hline
\end{tabular}
\centering 
\caption{Overview of known and new results: each entry contains the required number of iterations to achieve an $\epsilon$-optimal solution (the dependence on $\epsilon$ is the same for all algorithms and hence omitted). Here, $\|\mu\|_1 = \sum_{i\in[N]} \mu_i,$ where for all $i\in[N]$, $\mu_i$ equals the number of submodular functions that involve element $i$; $K$ is a parallelization parameter that equals the number of min-norm points problems that have to be solved within each iteration.}
\label{tab:results}
\vspace{-0.6cm}
\end{table*}
\section{Background, Notation and Problem Formulation}  \label{sec:preliminaries}
\vspace{-0.1cm}
We start our exposition by reviewing several recent lines of work for solving the DSFM problem, and focus on approaches that transform the DSFM problem into a continuous optimization problem. Such approaches exploit the fact that the Lov$\acute{\text{a}}$sz extension of a submodular function is convex. Without loss of generality, we tacitly assume that all submodular functions $F_r$ are normalized, i.e., that $F_r(\emptyset)=0$ for all $r \in [R]$. Also, we define given a vector $z\in \mathbb{R}^{N}$ and $S\subseteq [N]$, $z(S) = \sum_{i\in S} z_i$. Then, the \emph{base polytope} of the $r$-th submodular function $F_r$ is defined as 
\begin{align*}
\mathcal{B}_r \triangleq \{y_r\in\mathbb{R}^N|&y_r(S)\leq F_r(S),\;\text{for any }S\subset [N], \text{and}\;y_r([N])=F_r([N])\}.
\end{align*}
The \emph{Lov$\acute{\text{a}}$sz extention}~\cite{lovasz1983submodular} $f_r(\cdot): \mathbb{R}^N\rightarrow \mathbb{R}$ of a submodular function $F_r$ is defined as $f_r(x) = \max_{y_r \in \mathcal{B}_r} \langle y_r, x\rangle,$
where $\langle \cdot, \cdot \rangle$ denotes the inner product of two vectors. The DSFM problem can be solved through continuous optimization, $\min_{x\in [0,1]^N} \sum_{r} f_{r}(x)$. To counter the nonsmoothness of the objective function, a proximal formulation of a generalization of the above optimization problem is considered instead~\cite{jegelka2013reflection},
 \begin{align}\label{smooth}
\min_{x\in \mathbb{R}^N} \sum_{r\in [R]} f_{r}(x)  + \frac{1}{2} \|x\|_2^2.
\end{align}
As the problem~\eqref{smooth} is strongly convex, it has a unique optimal solution, denoted by $x^*$. The exact discrete solution to the DSFM problem equals $S^* = \{i\in [N] | \, x_i^* > 0\}$. 

For convenience, we denote the product of base polytopes as $\mathcal{B} = \otimes_{r=1}^R \mathcal{B}_r$, and write $y = (y_1, y_2, ...,y_R) \in \mathcal{B}$. Also, we let $A$ be a simple linear mapping $\otimes_{r=1}^R \mathbb{R}^N \rightarrow \mathbb{R}^N,$ which given a point $a =  (a_1, a_2, ...,a_R) \in \otimes_{r=1}^R \mathbb{R}^N$ outputs $Aa =\sum_{r\in [R]} a_r$. 
The AP and CD algorithms for solving~\eqref{smooth} use the dual form of the problem, described in the next lemma. 
\begin{lemma}[\cite{jegelka2013reflection}] \label{lem:jeg}
The dual problem of~\eqref{smooth} reads as 
\begin{align}\label{distance}
\min_{a,y} \|a-y\|_2^2 \quad \text{s.t.} \quad Aa= 0,\; y\in \mathcal{B}.
\end{align}
Moreover, problem \eqref{distance} may be written in the more compact form
\begin{align}\label{compact}
\min_{y} \|Ay\|_2^2 \quad \text{s.t.} \quad y\in \mathcal{B}.
\end{align}
For both problems, the primal and dual variables are related according to $x = - Ay$. In what follows, for notational simplicity, we write $g(y) = \frac{1}{2} \|Ay \|_2^2$.
\end{lemma}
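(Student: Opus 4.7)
The plan is to derive the dual by combining three ingredients: the support-function representation of each Lov\'asz extension, a minimax swap, and an orthogonal decomposition in the product space. First, I would substitute $f_r(x) = \max_{y_r \in \mathcal{B}_r} \langle y_r, x\rangle$ into~\eqref{smooth} and use $\sum_r \langle y_r, x\rangle = \langle Ay, x\rangle$ to rewrite the primal as the saddle-point problem
\[
\min_{x \in \mathbb{R}^N} \max_{y \in \mathcal{B}} \; \langle Ay, x\rangle + \frac{1}{2}\|x\|_2^2.
\]
The objective is linear (hence concave and continuous) in $y$ and strongly convex in $x$, and $\mathcal{B} = \otimes_{r=1}^R \mathcal{B}_r$ is compact and convex as a product of polymatroid base polytopes. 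Sion's minimax theorem therefore permits exchanging the order of optimization. The inner unconstrained quadratic in $x$ is solved at $x = -Ay$ with optimal value $-\frac{1}{2}\|Ay\|_2^2$; negating and minimizing over $y \in \mathcal{B}$ yields the compact form~\eqref{compact} (the scalar $\frac{1}{2}$ does not affect the argmin), and the first-order condition $x = -Ay$ simultaneously records the primal--dual map.

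Next I would show that~\eqref{distance} and~\eqref{compact} share the same optimal $y$. For fixed $y \in \mathcal{B}$, the inner problem $\min_a \|a - y\|_2^2$ subject to $Aa = 0$ is an orthogonal projection onto the linear subspace $V = \{a : Aa = 0\}$ of the product space $\otimes_{r=1}^R \mathbb{R}^N$. A short calculation shows that $V^\perp$ is the diagonal $\{(v,\ldots,v) : v \in \mathbb{R}^N\}$, so the optimal residual is the projection of $y$ onto the diagonal, namely $(Ay/R, \ldots, Ay/R)$, whose squared norm is $\|Ay\|_2^2/R$. Thus~\eqref{distance} reduces to $\min_{y \in \mathcal{B}} \|Ay\|_2^2/R$, which has the same minimizer as~\eqref{compact} and the same primal--dual relation $x = -Ay$.

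The step I expect to require the most care is the minimax exchange, since that is what underlies strong duality. Beyond verifying Sion's hypotheses (closed convexity and compactness of $\mathcal{B}$, joint continuity, and coercivity of the inner problem in $x$), one could alternatively route the derivation through Fenchel--Rockafellar duality using that the convex conjugate of the support function $f_r$ is the indicator $\iota_{\mathcal{B}_r}$; this converts~\eqref{smooth} directly into~\eqref{compact} together with the stationarity condition $x = -Ay$. Either way the bulk of the remaining work is the bookkeeping of the product-space projection sketched above, which is routine once one identifies the diagonal as the orthogonal complement of $\ker A$ inside $\otimes_{r=1}^R \mathbb{R}^N$.
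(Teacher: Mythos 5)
Your proposal is correct. The paper itself does not prove Lemma~\ref{lem:jeg} but imports it from the cited reference; the closest in-paper analogue is the proof of Lemma~\ref{dualprob} in the Supplement, which runs the same duality argument (support-function representation of $f_r$, a min--max swap via a Lagrange multiplier for the constraint $Aa=0$, and elimination of the inner quadratic) for the weighted version of~\eqref{distance}. Your derivation is essentially that argument: the only cosmetic difference is that you eliminate $a$ geometrically, by identifying the orthogonal complement of $\ker A$ in $\otimes_{r=1}^R\mathbb{R}^N$ as the diagonal so that $\min_{a:Aa=0}\|a-y\|_2^2=\|Ay\|_2^2/R$, whereas the paper carries out the same elimination through explicit multiplier calculus; both yield~\eqref{compact} and the relation $x=-Ay$.
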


The AP~\cite{nishihara2014convergence} and RCD algorithms~\cite{ene2015random} described below provide solutions to the problems~\eqref{distance} and~\eqref{compact}, respectively. They both rely on repeated projections $\Pi_{\mathcal{B}_r}(\cdot)$ onto the base polytopes $\mathcal{B}_r$, $r\in[R]$. These projections are typically less computationally intense than projections onto the complete base polytope of $F$ as they involve fewer data dimensions. The projection operation $\Pi_{\mathcal{B}_r}(\cdot)$ requires one to solve a min-norm problem by either exploiting the special forms of $F_r$ or by using the general purpose algorithm of Wolfe~\cite{wolfe1976finding}. The complexity of the method is typically characterized by the number of required projections $\Pi_{\mathcal{B}_r}(\cdot)$. 
 
\textbf{The AP algorithm.} Starting with $y=y^{(0)}$, iteratively compute a sequence $(a^{(k)}, y^{(k)})_{k=1,2,...}$ such that for all $r\in[R]$,
$a_r^{(k)} = y_r^{(k-1)} -  Ay^{(k-1)}/R$, $y_r^{(k)} = \Pi_{\mathcal{B}_r}(a_r^{(k)}),$ until a stopping criteria is met.

\textbf{The RCDM algorithm.} In each iteration $k$, chose uniformly at random a subset of elements in $y$ associated with one atomic function in the decomposition~\eqref{DSFM}, say the one with index $r_k$. Then, compute the sequence $(y^{(k)})_{k=1,2,...}$ according to $y_{r_k}^{(k)} = \Pi_{B_{r_k}}\left(- \sum_{r\neq r_k }y_r^{(k-1)}\right)$, $y_r^{(k)} = y_r^{(k-1)},$ for $r\neq r_k$. 

Finding an $\epsilon$-optimal solution for both the AP and RCD methods requires $O(N^2 R \log(\frac{1}{\epsilon}))$ iterations. In each iteration, the AP algorithm computes the projections onto all $R$ base polytopes, while the RCDM only computes one projection. Therefore, as may be seen from Table~\ref{tab:results}, the sequential AP solver, which computes one projection in each iteration, requires $O(N^2 R^2 \log(\frac{1}{\epsilon}))$ iterations. However, the projections within one iteration of the AP method can be generated in parallel, while the projections performed in the RCDM have to be generated sequentially.
\vspace{-0.2cm}
\subsection{Incidence Relations and Related Notations}

We next formally introduce one of the key concepts used in this work: \emph{incidence relations} between elements of the ground set and the component submodular functions. 

We say that an element $i\in [N]$ is \emph{incident} to a submodular function $F$ iff there exists a $S\subseteq  [N]/\{i\}$ such that $F(S\cup \{i\}) \neq F(S)$; similarly, we say that the submodular function $F$ is \emph{incident} to an element $i$ iff $i$ is incident to $F$. To verify whether an element $i$ is incident to a submodular function $F$, one needs to verify that $F(\{i\}) = 0$ and that $F( [N]) = F([N]/\{i\})$ since for any $S\subseteq  [N]/\{i\}$
\begin{align*}
F(\{i\}) \geq F(S\cup \{i\}) - F(S) \geq F( [N]) - F([N]/\{i\}).
\end{align*}
Furthermore, note that if $i\in[N]$ is not incident to $F_r$, then for any $y_r\in \mathcal{B}_r$, one has $y_{r,i} = 0$. Let $S_r$ be the set of all elements incident to $F_r$. For each element $i$, denote the number of submodular functions that are incident to $i$ by $\mu_i = |\{r\in[R]: i \in S_r\}|$. We also refer to $\mu_i$ as the degree of element $i$. We find it useful to partition the set of submodular functions into different groups. Given a group $C\subseteq [R]$ of submodular functions, we define the degree of the element $i$ within $C$, $\mu^C_{i}$, as $\mu^C_{i} = |\{r\in C: i \in S_r\}|$.

We also define a skewed norm involving two vectors $w\in \mathbb{R}_{>0}^{N}$ and $z\in\mathbb{R}^{N}$ according to $\|z\|_{2,w} \triangleq \sqrt{\sum_{i\in [N]} w_{i}z_{i}^2}$. With a slight abuse of notation, for two vectors $\theta = (\theta_1, \theta_2, ..., \theta_R) \in \otimes_{r=1}^R\mathbb{R}_{> 0}^N$ and $y\in\otimes_{r=1}^R\mathbb{R}^N$, we also define the norm $\|y\|_{2,\theta} \triangleq \sqrt{\sum_{r\in [R]} \|y_{r}\|_{2,\theta_r}^2}$. Which of the norms we refer to should be clear from the context. In addition, we let $\|\theta\|_{1,\infty} = \sum_{i\in[N]} \max_{r\in[R]: i\in S_r}\theta_{r,i}$. For a closed set $\mathcal{K} \subseteq  \otimes_{r=1}^R \mathbb{R}^N$ and a positive vector $\theta \in \otimes_{r=1}^R\mathbb{R}_{> 0}^N$, the distance between $y$ and $\mathcal{K}$ is defined as $d_{\theta}(y, \mathcal{K}) = \min \{\|y- z\|_{2,\theta} | z \in \mathcal{K}\}$.  Also, given a set $\Omega\subseteq \mathbb{R}^N$, we let $\Pi_{\Omega, w}(\cdot)$ denote the projection operation onto $\Omega$ with respect to the norm $\|\cdot\|_{2, w}$. 

Given a vector $w\in \mathbb{R}_{>0}^N $, we also make use of an induced vector $I(w) \in \otimes_{r=1}^R\mathbb{R}^N$ whose $r$-th entry satisfies $(I(w))_r = w$. It is easy to check that $\|I(w)\|_{1,\infty}=\|w\|_{1}$. Of special interest are induced vectors based on pairs of  $N$-dimensional vectors, $\mu=(\mu_1, \mu_2,...,\mu_N)$, $\mu^C=(\mu^C_1, \mu^C_2,..., \mu^C_N)$. Finally, for $w, w'\in \mathbb{R}^{N}$, we denote the element-wise power of $w$ by $w^{\alpha} = (w_1^{\alpha}, w_2^{\alpha}, ..., w_N^{\alpha} )$, for some $\alpha\in \mathbb{R},$ and the element-wise product of $w$ and $w'$ by $w\odot w' = (w_1w'_1, w_2w'_2,...,w_Nw'_N)$.

Next, recall that $x^*$ is the unique optimal solution of the problem~\eqref{smooth} and let $\mathcal{Z} = \{\xi\in \otimes_{r=1}^R \mathbb{R}^N|A\xi =-x^*, \xi_{r,i} = 0, \forall i\in S_r, \forall r\in[R]\}$. Then, due to the duality relationship of Lemma~\ref{lem:jeg}, $\Xi =\mathcal{Z}  \cap \mathcal{B}$ is the set of optimal solutions $\{{y\}}$. 

\vspace{-0.2cm}
\section{Continuous DSFM Algorithms with Incidence Relations} \label{sec:algs}
\vspace{-0.1cm}
In what follows, we revisit the AP and CD algorithms and describe how to improve their performance and analytically establish their convergence rates. Our first result introduces a modification of the AP algorithm~\eqref{distance} that exploits incidence relations so as to decrease the required number of iterations from $O(N^2R)$ to $O(N\|\mu\|_1)$.  Our second result is an example that shows that the convergence rates of CD algorithms~\cite{ene2017decomposable} cannot be directly improved by exploiting the functions' incidence relations even when the incidence matrix is extremely sparse. Our third result is a new algorithm that relies of coordinate descent steps but can be parallelized. In this setting, incidence relations are essential to the parallelization process. 

To analyze solvers for the continuous optimization problem~\eqref{smooth} that exploit the incidence structure of the functions, we make use of the skewed norm $\|\cdot\|_{2,w}$ with respect to some positve vector $w$ that accounts for the fact that incidences are, in general, nonuniformly distributed. In this context, the projection $\Pi_{\mathcal{B}_r, w}(\cdot)$ reduces to solving a classical min-norm problem after a simple transformation of the underlying space which does not incur significant complexity overheads. To see this, note that in order to solve a generic min-norm point problem, one typically uses either Wolfe's algorithm (continuous) or a divide-and-conquer procedure (combinatorial). The complexity of the former is at most quadratic in $F_{r,\text{max}}\triangleq\max_{v, S} |F_r(S\cup \{v\})-F_r(S)|$~\cite{chakrabarty2014provable}, while the complexity of the latter merely depends on $\log F_{r,\text{max}}$~\cite{jegelka2013reflection} (see Section~\ref{sec:discdiscreteopt} in the Supplement). It is unclear if including the weight vector $w$ into the projection procedure increases or decreases $F_{r,\text{max}}$. In either case, given that in our derivations all elements of $w$ are contained in $[1, \max_{i\in[N]}\mu_i]$ instead of $N$ or $R$, we do not expect to see significant changes in the complexity of the projection operation. Hence, throughout the remainder of our exposition, we regard the projection operation as an oracle and measure the complexity of all algorithms in terms of the number of projections performed.

Also, observe that one may avoid computing projections in skewed-norm spaces by introducing in~\eqref{smooth} a weighted rather than an unweighted proximal term. This gives another continuous objective that still provides a solution to the discrete problem~\eqref{DSFM}. Even in this case, we can prove that the numbers of iterations used in the different methods listed Table~\ref{tab:results} remain the same. Furthermore, by combining projections in skewed-norm spaces and weighted proximal terms, it is possible to actually reduce the number of iterations given in Table~\ref{tab:results}. However, for simplicity, we focus on the objective~\eqref{smooth} and projections in skewed-norm spaces. Methods using weighted proximal terms with and without skewed-norm projections are analyzed in a similar manner in Section~\ref{sec:orthproj} of the Supplement.

We make frequent use of the following result which generalizes Lemma 4.1 of~\cite{ene2017decomposable}.\begin{lemma}\label{lemmakappabound}
Let $\theta\in \otimes_{r=1}^R\mathbb{R}_{> 0}^{N}, w\in \mathbb{R}_{> 0}^{N}$ be two positive vectors. Let $y\in \mathcal{B}$ and let $z$ be in the base polytope of the submodular function $F$. Then, there exists a point $\xi\in  \mathcal{B}$ such that $A\xi =z$ and $\|\xi-y\|_{2, \theta}\leq \sqrt{\frac{\|\theta\|_{1,\infty}}{2}} \|Ay - z\|_1$. Moreover, $\|\xi-y\|_{2, \theta}\leq \sqrt{\frac{\|\theta\|_{1,\infty}\|w^{-1}\|_1}{2}} \|Ay - z\|_{2, w}$.
\end{lemma}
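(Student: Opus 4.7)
The plan is to prove the $\ell_1$-type bound first and then derive the weighted $\ell_2$-type bound from it. The second bound follows from the first by a coordinatewise Cauchy--Schwarz:
\[
\|Ay-z\|_1 \;=\; \sum_{i\in[N]} w_i^{-1/2} \cdot w_i^{1/2}\,\bigl|(Ay-z)_i\bigr| \;\leq\; \sqrt{\|w^{-1}\|_1}\; \|Ay-z\|_{2,w},
\]
and substituting this into the first bound yields the second. So the main work is establishing $\|\xi - y\|_{2,\theta} \leq \sqrt{\|\theta\|_{1,\infty}/2}\,\|Ay-z\|_1$.

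For the $\ell_1$ bound, my plan is to construct a feasible $\xi$ by an incidence-aware mass-transport argument that generalizes Lemma~4.1 of~\cite{ene2017decomposable}. The fiber $\{\xi \in \mathcal{B} : A\xi = z\}$ is nonempty by the classical identity $\mathcal{B}_F = \mathcal{B}_1 + \cdots + \mathcal{B}_R$ for $F = \sum_r F_r$, which guarantees that $z$ admits some decomposition $z = \sum_r z_r$ with $z_r \in \mathcal{B}_r$. Writing $\delta = z - Ay$, we have $\mathbf{1}^\top \delta = 0$ because $\mathbf{1}^\top(Ay) = \mathbf{1}^\top z = F([N])$, so $\delta$ admits a decomposition into elementary unit-flows $\delta = \sum_k \lambda_k(e_{j_k} - e_{i_k})$ with $\lambda_k > 0$ and $\sum_k \lambda_k = \tfrac{1}{2}\|\delta\|_1$. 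Each elementary flow is then realized by an exchange within a single $\mathcal{B}_r$ incident to both endpoints (chained through intermediate coordinates when no single component is jointly incident), using the standard exchange machinery for base polytopes; the result is a candidate $\xi \in \mathcal{B}$ with $A\xi = z$.

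The main obstacle is tracking the weighted squared displacement $\|\xi-y\|_{2,\theta}^2$ through this construction, and this is the step where the incidence structure $\|\theta\|_{1,\infty}$ actually enters. The key observation is that a shift at coordinate $i$ can be routed through whichever component $r \in \{r : i \in S_r\}$ is most convenient, so the per-coordinate weight we must pay for each unit of flow crossing $i$ is at most $\max_{r : i \in S_r}\theta_{r,i}$. Summing these contributions across coordinates, bounding per-coordinate flow magnitudes by the total flow $\tfrac{1}{2}\|\delta\|_1$, and applying one more Cauchy--Schwarz step yields
\[
\|\xi - y\|_{2,\theta}^2 \;\leq\; \tfrac{1}{2}\, \|\theta\|_{1,\infty} \,\|Ay-z\|_1^2,
\]
which is exactly the first inequality. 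The constant $1/2$ tracks back to the fact that the total positive flow is half of $\|\delta\|_1$, and the factor $\|\theta\|_{1,\infty} = \sum_i \max_{r: i\in S_r}\theta_{r,i}$ is precisely the refined, incidence-sensitive replacement for the coarse uniform constant that appears in the unweighted version.
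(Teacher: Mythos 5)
Your overall strategy is the same as the paper's (route the discrepancy $z-Ay$ through the incidence structure while staying inside $\mathcal{B}$, paying at most $\max_{r:i\in S_r}\theta_{r,i}$ per unit of flow crossing element $i$), and your reduction of the second inequality to the first via coordinatewise Cauchy--Schwarz is exactly what the paper does. However, there is a genuine gap at the load-bearing step. The sentence ``each elementary flow is then realized by an exchange within a single $\mathcal{B}_r$ \dots (chained through intermediate coordinates \dots), using the standard exchange machinery for base polytopes'' asserts precisely the fact that needs proof: that one can always find a chain of exchanges with \emph{positive capacity} from the deficit set $\{v: (Ay)_v < z_v\}$ to the excess set $\{v:(Ay)_v>z_v\}$ that keeps every block inside its base polytope, and that iterating such exchanges terminates with $A\xi=z$ exactly. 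The paper handles this by building the auxiliary directed graph with arc capacities $c(u,v)=\min\{f_r(S)-y'_r(S): S\subseteq S_r,\, u\in S,\, v\notin S\}$ and invoking the path-augmentation result of Fujishige (existence of a positive-capacity augmenting path whenever the deficit/excess sets are nonempty, and finite termination). Nothing in your sketch substitutes for this; in particular, exchangeability inside a single $\mathcal{B}_r$ is not enough, since the elementary flow $e_{j}-e_{i}$ generally has no single incident block and the amount that can be pushed along a chain is limited by the capacities, not by $\lambda_k$.

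The quantitative step is also only asserted, and the aggregate accounting you describe does not obviously yield the constant $\tfrac12$: bounding $\sum_{r}|\xi_{r,i}-y_{r,i}|$ by the total flow through $i$ and squaring gives $\|\xi-y\|_{2,\theta}^2\le \|\theta\|_{1,\infty}\|Ay-z\|_1^2$, off by a factor of $2$. The paper recovers the constant by a \emph{per-augmentation-step} bound: a single shortest augmenting path carrying flow $\rho$ touches each element in at most two arcs, so the step displacement is at most $\sqrt{2\|\theta\|_{1,\infty}}\,\rho$, while $\|Ay'\|_1$ changes by exactly $2\rho$; crucially, the sequences $(Ay'^{(i)})_v$ converge \emph{monotonically} to $z_v$, so the per-step $\ell_1$ changes telescope to $\|Ay-z\|_1$ and the triangle inequality finishes the argument. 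Your global flow-decomposition version can likely be repaired (one needs $\max_i \phi_i \le \tfrac12\|Ay-z\|_1$ for an acyclic flow plus separate treatment of in-flow and out-flow at each coordinate), but as written both the feasibility of the construction and the constant are not established.
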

\vspace{-0.1cm}
\subsection{The Incidence Relation AP (IAP)}
The following result establishes the basis of our improved AP method leveraging incidence structures. 
\begin{lemma}\label{dualprob}
The following problem is equivalent to problem~\eqref{distance}:
\begin{align}\label{newdistance}
\min_{a,y} \|a-y\|_{2, I(\mu)}^2 \quad \text{s.t.} \quad y\in \mathcal{B}, Aa = 0,\;\text{and}\; a_{r,i} = 0,\; \forall (r, i): i\notin 
S_r, r\in[R]. \; 
\end{align}
\end{lemma}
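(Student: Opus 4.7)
The plan is to show that both problems, after eliminating the auxiliary variable $a$, reduce to the same convex program in $y$, namely the compact form \eqref{compact} from Lemma~\ref{lem:jeg}. In particular, I will argue that the minimum over $a$ in~\eqref{newdistance} for a fixed $y\in\mathcal{B}$ equals $\|Ay\|_2^2$, so the outer minimization over $y$ coincides with $\min_{y\in \mathcal{B}} \|Ay\|_2^2$, which is equivalent to \eqref{distance} via Lemma~\ref{lem:jeg}.

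First I would fix an arbitrary $y\in\mathcal{B}$ and exploit the basic observation (already noted right after the definition of incidences) that $y_{r,i}=0$ whenever $i\notin S_r$. Combined with the constraint $a_{r,i}=0$ for $i\notin S_r$ appearing in~\eqref{newdistance}, this lets me rewrite
\[
\|a-y\|_{2,I(\mu)}^2=\sum_{i\in[N]}\mu_i\sum_{r\in C_i}(a_{r,i}-y_{r,i})^2,\qquad C_i\triangleq\{r\in[R]:i\in S_r\},
\]
and the equality constraint $Aa=0$ collapses coordinate-wise to $\sum_{r\in C_i}a_{r,i}=0$ for each $i$, since the other summands vanish identically. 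Consequently, the inner minimization over $a$ decouples across the ground set $[N]$, yielding $N$ independent equality-constrained quadratic programs.

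Next, for each $i$ I would solve the separable subproblem $\min_{a_{\cdot,i}} \mu_i\sum_{r\in C_i}(a_{r,i}-y_{r,i})^2$ subject to $\sum_{r\in C_i}a_{r,i}=0$. A one-line Lagrange-multiplier computation (the weight $\mu_i$ is a common factor and does not affect the optimizer) gives the uniform correction $a_{r,i}=y_{r,i}-(Ay)_i/\mu_i$ for $r\in C_i$, since $(Ay)_i=\sum_{r\in C_i}y_{r,i}$. Substituting back, the optimal value of the $i$-th subproblem equals $\mu_i\cdot\mu_i\cdot\bigl((Ay)_i/\mu_i\bigr)^2=(Ay)_i^2$. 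Summing over $i$ yields exactly $\|Ay\|_2^2$.

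Finally, I would observe that the outer problem \eqref{newdistance} therefore reduces to $\min_{y\in\mathcal{B}}\|Ay\|_2^2$, which by Lemma~\ref{lem:jeg} is precisely the compact form of \eqref{distance}; hence the two problems share the same set of optimal $y$'s and, via the relationship $x^*=-Ay^*$, recover the same primal solution. I do not anticipate a significant obstacle here: the only subtle point is confirming that the seemingly tighter constraint $a_{r,i}=0$ for $i\notin S_r$ is effectively free, which follows because when $y\in\mathcal{B}$ the excluded coordinates of $y$ vanish and they make no contribution to the reduced equality constraints on $a$. The weighting by $I(\mu)$ is precisely what turns the per-coordinate optimal value from $(Ay)_i^2/\mu_i$ into $(Ay)_i^2$, recovering the unweighted $\|Ay\|_2^2$ needed for the equivalence.
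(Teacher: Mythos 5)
Your proposal is correct and reaches the paper's conclusion by the same core route: eliminate $a$, using the two key facts that $y_{r,i}=0$ for $i\notin S_r$ whenever $y\in\mathcal{B}$ and that $\mu_i=|\{r:i\in S_r\}|$, to show the inner minimum equals $\|Ay\|_2^2$ and hence that \eqref{newdistance} collapses to the compact form \eqref{compact}. The only difference is in execution: the paper introduces a multiplier $\lambda$ for the constraint $Aa=0$ and swaps $\min$ and $\max$, arriving at $\max_{\lambda}-\tfrac12\|\lambda\|_2^2-\langle\lambda,Ay\rangle$ with optimality conditions $a_{r,i}=y_{r,i}+\mu_i^{-1}\lambda_i$ and $\lambda=-Ay$, whereas you decouple the quadratic coordinate-wise and solve each equality-constrained least-squares subproblem by direct projection onto the hyperplane $\sum_{r\in C_i}a_{r,i}=0$. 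The two computations yield the identical optimizer $a_{r,i}=y_{r,i}-\mu_i^{-1}(Ay)_i$; your version is slightly more elementary (no minimax interchange to justify) and has the pleasant side effect of directly exhibiting the IAP update rule used in the algorithm, while the paper's Lagrangian template is the one it reuses almost verbatim in the proof of Lemma~\ref{kappabound} to compute $d_{I(\mu)}(y,\mathcal{Z})$.
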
 

Let $\mathcal{A} = \{a \in \otimes_{r=1}^R\mathbb{R}^{N}| Aa = 0, a_{r,i} = 0,\; \forall (r, i): i\notin S_r\} $ and $\mathcal{A'} = \{a \in \otimes_{r=1}^R\mathbb{R}^{N}| Aa = 0\}$.
The AP algorithm for problem~\eqref{newdistance} consists of alternatively computing projections between $\mathcal{A}$ and $\mathcal{B}$, as opposed to those between $\mathcal{A'}$ and $\mathcal{B}$ used in the problem~\eqref{distance}. However, as already pointed out, unlike for the classical AP problem~\eqref{distance}, the distance in~\eqref{newdistance} is not Euclidean, and hence the projections may not be orthogonal. 

The IAP method for solving~\eqref{newdistance} proceeds as follows. We begin with $a = a^{(0)}\in \mathcal{A}$, and iteratively compute a sequence $(a^{(k)}, y^{(k)})_{k=1,2,...}$ as follows: for all $r\in[R]$,
$y_r^{(k)} = \Pi_{\mathcal{B}_r, \mu}(a_r^{(k)}), \; a_{r,i}^{(k)} = y_{r,i}^{(k-1)} - \mu_i^{-1}(Ay^{(k-1)})_i, \;\forall\; i\in S_r$.
The key difference between the AP and IAP algorithms is that the latter effectively removes ``irrelevant'' components of $y_r$ by fixing the irrelevant components of $a$ to $0$. In the AP method of Nishihara~\cite{nishihara2014convergence}, these components are never zero as they may be ``corrupted'' by other components during AP iterations. Removing irrelevant components results in projecting $y$ into a subspace of lower dimensions, which significantly accelerates the convergence of IAP. 

\begin{figure}[h]
\centering
\vspace{-0.2cm}
\includegraphics[trim={6.5cm 12cm 19.5cm 3cm},clip, width=0.33\textwidth]{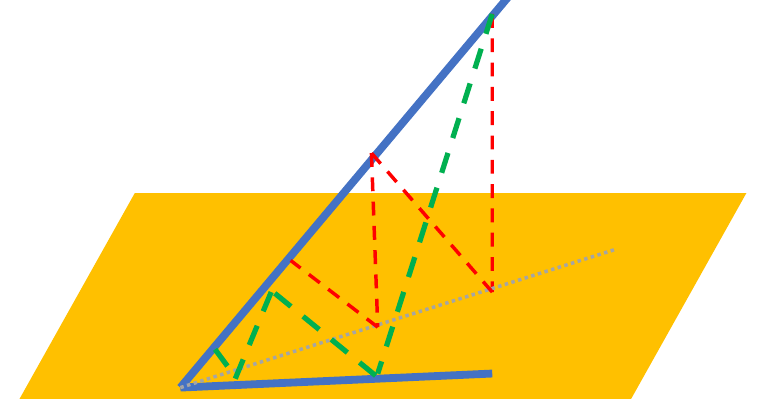}
\put(-25,25){$\mathcal{A'}$}
\put(-45,3){$\mathcal{A}$}
\put(-45,70){$\mathcal{B}$}
\put(-85,70){\tiny{$y^{(0)}(y'^{(0)})$}}
\put(-85,45){\tiny{$y'^{(1)}$}}
\put(-98,28){\tiny{$y'^{(2)}$}}
\put(-105,20){\tiny{$y^{(1)}$}}
\put(-115,3){\tiny{$y^*$}}
\put(-50,13){\tiny{$a'^{(1)}$}}
\put(-70,8){\tiny{$a'^{(2)}$}}
\put(-75,-4){\tiny{$a^{(1)}$}}
\caption{Illustration of the IAP method for solving problem~\eqref{newdistance}: The space $\mathcal{A}$ is a subspace of $\mathcal{A'}$, which leads to faster convergence of the IAP method when compared to AP.}
\label{APillu}
\vspace{-0.0cm}
\end{figure}

The analysis of the convergence rate of the IAP method follows a similar outline as that used to analyze~\eqref{distance} in~\cite{nishihara2014convergence}. Following Nishihara et al.~\cite{nishihara2014convergence}, we define the following parameter that plays a key role in determining the rate of convergence of the AP algorithm, $\kappa_* \triangleq \sup\limits_{y\in\mathcal{Z} \cup \mathcal{B} / \Xi}\frac{d_{I(\mu)}(y, \Xi)}{\max\{d_{I(\mu)}(y,\mathcal{Z}), d_{I(\mu)}(y, \mathcal{B})\}}$.
\begin{lemma}[\cite{nishihara2014convergence}]
If $\kappa_*<\infty$, the AP algorithm converges linearly with rate $1- \frac{1}{\kappa_*^2}$. At the $k$-th iteration, the algorithm outputs a value $y^{(k)}$ that satisfies
\begin{align*}
d_{I(\mu)}(y^{(k)}, \Xi)  \leq 2d_{I(\mu)}(y^{(0)}, \Xi) \left(1- \frac{1}{\kappa_*^2}\right)^k.
\end{align*}
\end{lemma}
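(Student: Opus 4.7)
The plan is to adapt the alternating-projections convergence argument of Nishihara et al.~\cite{nishihara2014convergence} to the weighted geometry induced by the skewed norm $\|\cdot\|_{2,I(\mu)}$. Because every $\mu_i > 0$, this norm comes from the weighted inner product $\langle u,v\rangle_{I(\mu)} = \sum_{r,i}\mu_i u_{r,i}v_{r,i}$, so the ambient space remains a Hilbert space and all standard tools — convex-projection nonexpansiveness, the obtuse-angle characterization, and the Pythagorean identity — continue to hold once ``orthogonal'' is reinterpreted in the weighted sense. A short direct calculation shows that the IAP updates are precisely $a^{(k)} = \Pi_{\mathcal{A},I(\mu)}(y^{(k-1)})$ and $y^{(k)} = \Pi_{\mathcal{B},I(\mu)}(a^{(k)})$, so IAP is a bona fide alternating projection in this Hilbert space between the two closed convex sets $\mathcal{A}$ and $\mathcal{B}$.

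Next, I would establish a per-iteration distance-to-optimum recursion. Let $(a^\sharp,y^\sharp)\in\mathcal{A}\times\Xi$ be a pair attaining $\min_{a\in\mathcal{A},\,y\in\mathcal{B}}\|a-y\|_{2,I(\mu)}$, which exists by Lemma~\ref{lem:jeg}. Applying the weighted Pythagorean inequality to the two convex projections $y^{(k)}\mapsto a^{(k+1)}$ and $a^{(k+1)}\mapsto y^{(k+1)}$, and cancelling cross terms using the first-order optimality conditions for the pair $(a^\sharp,y^\sharp)$, produces
\begin{align*}
\|y^{(k+1)} - y^\sharp\|_{2,I(\mu)}^2 \leq \|y^{(k)} - y^\sharp\|_{2,I(\mu)}^2 - d_{I(\mu)}(y^{(k)},\mathcal{A})^2 - d_{I(\mu)}(a^{(k+1)},\mathcal{B})^2.
\end{align*}
The regularity constant then enters as follows: since $y^{(k)}\in\mathcal{B}$ for $k\geq 1$ and $\mathcal{A}$, $\mathcal{Z}$ are parallel translates of a common linear subspace, the definition of $\kappa_*$ gives $\max\{d_{I(\mu)}(y^{(k)},\mathcal{A}),\,d_{I(\mu)}(a^{(k+1)},\mathcal{B})\} \geq d_{I(\mu)}(y^{(k)},\Xi)/\kappa_*$. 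Substituting into the recursion and optimizing $y^\sharp$ over $\Xi$ yields the one-step contraction $d_{I(\mu)}(y^{(k+1)},\Xi)^2\leq (1 - 1/\kappa_*^2)\,d_{I(\mu)}(y^{(k)},\Xi)^2$; iterating $k$ times and taking square roots gives the announced bound, with the leading factor of $2$ absorbing a single half-step from initialization.

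The main technical obstacle is the reconciliation in the last step: $\kappa_*$ is defined in terms of distances to $\mathcal{Z}$, whereas the algorithm projects onto $\mathcal{A}$. The key observation is that $\mathcal{A}$ and $\mathcal{Z}$ are parallel translates of the same linear subspace in the weighted geometry, so the Friedrichs-angle-type quantity that actually controls AP between $\mathcal{A}$ and $\mathcal{B}$ coincides with the one defined via $\mathcal{Z}$. Verifying this translation-invariance carefully in the skewed inner product — a routine but genuinely necessary check — is the only step beyond the original Nishihara et al.\ argument; once it is in place, the remainder is a line-by-line transcription of~\cite{nishihara2014convergence}.
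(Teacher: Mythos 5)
The paper does not actually prove this lemma: it imports it verbatim from Nishihara et al., the only implicit step being that $\|\cdot\|_{2,I(\mu)}$ is induced by a (weighted) inner product, so the Hilbert-space alternating-projection theory of that reference applies unchanged. Your reconstruction --- verifying that the IAP updates are exactly the weighted projections onto $\mathcal{A}$ and $\mathcal{B}$, running the Fej\'er/Pythagorean recursion, and handling the translation between $\mathcal{A}$ and the parallel affine set $\mathcal{Z}$ used to define $\kappa_*$ --- is precisely the argument of the cited theorem, so it matches the proof the paper delegates to that reference.
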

To apply the above lemma in the IAP setting, one first needs to establish an upper bound on $\kappa_{*}$. This bound is given in Lemma~\ref{kappabound} below. 
\begin{lemma}\label{kappabound}
The parameter $\kappa_*$ is upper bounded as $\kappa_*\leq \sqrt{N\|\mu\|_1 /2} +1$. 
\end{lemma}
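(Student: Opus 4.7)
The plan is to bound the defining ratio of $\kappa_*$ separately on the two pieces of its domain: $y\in\mathcal{B}\setminus\Xi$ (where $d_{I(\mu)}(y,\mathcal{B})=0$) and $y\in\mathcal{Z}\setminus\Xi$ (where $d_{I(\mu)}(y,\mathcal{Z})=0$). In each case I would construct an explicit $\xi\in\Xi=\mathcal{Z}\cap\mathcal{B}$ close to $y$ by invoking Lemma~\ref{lemmakappabound} with $\theta=I(\mu)$, exploiting that $\|I(\mu)\|_{1,\infty}=\|\mu\|_1$ and that $-x^*=Ay^*$ for any $y^*\in\Xi$ lies in the base polytope of $F=\sum_r F_r$, which validates the hypotheses of the lemma.

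For the first case, let $z=\Pi_{\mathcal{Z},I(\mu)}(y)$, so $Az=-x^*$ and $\|y-z\|_{2,I(\mu)}=d_{I(\mu)}(y,\mathcal{Z})$. Lemma~\ref{lemmakappabound} produces $\xi\in\mathcal{B}$ with $A\xi=-x^*$; since any element of $\mathcal{B}$ automatically satisfies the incidence sparsity pattern defining $\mathcal{Z}$, we obtain $\xi\in\Xi$ and $\|\xi-y\|_{2,I(\mu)}\le\sqrt{\|\mu\|_1/2}\,\|A(y-z)\|_1$. The crucial estimate I would then establish is $\|A(y-z)\|_1\le\sqrt{N}\,\|y-z\|_{2,I(\mu)}$: because both $y_{r,i}$ and $z_{r,i}$ vanish for $i\notin S_r$, only $\mu_i$ of the $R$ components contribute to $(A(y-z))_i$, so Cauchy--Schwarz yields $|(A(y-z))_i|\le\sqrt{\mu_i}\sqrt{\sum_r(y_r-z_r)_i^2}$; summing over $i$ and applying Cauchy--Schwarz again against the all-ones vector of length $N$ gives the claimed bound. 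This already delivers $d_{I(\mu)}(y,\Xi)\le\sqrt{N\|\mu\|_1/2}\,d_{I(\mu)}(y,\mathcal{Z})$, matching the stated rate without the additive $1$.

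For the second case, $y\in\mathcal{Z}$ satisfies $Ay=-x^*$, but is generally not in $\mathcal{B}$. I would set $y'=\Pi_{\mathcal{B},I(\mu)}(y)$ and apply Lemma~\ref{lemmakappabound} to $y'\in\mathcal{B}$ with the base-polytope element $-x^*$, obtaining $\xi\in\Xi$ with $\|\xi-y'\|_{2,I(\mu)}\le\sqrt{\|\mu\|_1/2}\,\|A(y'-y)\|_1$ (using $Ay=-x^*$). The same Cauchy--Schwarz chain applied to $y'-y$ — which is still supported on the incidence pattern because both $y$ and $y'$ are — bounds this by $\sqrt{N\|\mu\|_1/2}\,d_{I(\mu)}(y,\mathcal{B})$. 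The triangle inequality then yields $d_{I(\mu)}(y,\Xi)\le\|y-y'\|_{2,I(\mu)}+\|y'-\xi\|_{2,I(\mu)}\le(1+\sqrt{N\|\mu\|_1/2})\,d_{I(\mu)}(y,\mathcal{B})$, which is where the extra $+1$ enters. Taking the supremum over both cases then gives the claim.

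The main obstacle is the coordinate-wise Cauchy--Schwarz chain used to turn $\|A(\cdot)\|_1$ into a $\|\cdot\|_{2,I(\mu)}$ bound; this is exactly where the incidence structure pays off, as it replaces the naive $\sqrt{R}$ factor one would obtain for generic vectors by $\sqrt{\mu_i}$ at each coordinate, and hence ultimately by $\sqrt{\|\mu\|_1}$ rather than $\sqrt{NR}$. Everything else is bookkeeping: verifying that Lemma~\ref{lemmakappabound} is applicable with $\theta=I(\mu)$, checking that the produced $\xi$ automatically lies in $\mathcal{Z}$, and using the triangle inequality in the $\mathcal{Z}$-case to transfer from $y'$ back to $y$.
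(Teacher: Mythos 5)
Your proposal is correct and follows essentially the same route as the paper's proof: apply Lemma~\ref{lemmakappabound} with $\theta=I(\mu)$ on each of the two pieces $\mathcal{B}\setminus\Xi$ and $\mathcal{Z}\setminus\Xi$, convert $\|A(\cdot)\|_1$ into a $\|\cdot\|_{2,I(\mu)}$ bound via the coordinate-wise Cauchy--Schwarz argument that exploits the incidence support, and pick up the additive $1$ from the triangle inequality in the $\mathcal{Z}$-case. The only cosmetic difference is in the $\mathcal{B}$-case, where the paper first computes $d_{I(\mu)}(y,\mathcal{Z})=\|Ay+x^*\|_2$ exactly by Lagrangian duality and then invokes the $\|\cdot\|_{2,w}$ form of Lemma~\ref{lemmakappabound} with $w=\mathbf{1}$, while you reach the same constant through the $\ell_1$ form and your Cauchy--Schwarz chain.
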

By using the above lemma and the bound on $\kappa_{*}$, one can establish the following convergence rate for the IAP method.
\begin{theorem}\label{APconv}
After $O(N\|\mu\|_1 \log(1/\epsilon))$ iterations, the IAP algorithm for solving problem~\eqref{newdistance} outputs a pair of points $(a , y)$ that satisfies $d_{I(\mu)}(y, \Xi) \leq \epsilon$.
\end{theorem}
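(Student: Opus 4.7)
The plan is to combine the bound on $\kappa_*$ established in Lemma~\ref{kappabound} with the linear-convergence lemma of Nishihara et al.\ quoted immediately before it, after verifying that the IAP iteration is precisely an alternating-projection scheme in the inner product inducing $\|\cdot\|_{2,I(\mu)}$, with the two sets being $\mathcal{A}$ and $\mathcal{B}$. Granted these two ingredients, the theorem becomes a one-line quantitative computation.

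First I would identify the $y$-update and the $a$-update of IAP with weighted orthogonal projections in the $\|\cdot\|_{2,I(\mu)}$-geometry. For the $y$-update, since $\mathcal{B}=\otimes_r \mathcal{B}_r$ is a product set and the weights $\mu_i$ depend only on $i$, the projection separates across $r$, giving $\Pi_{\mathcal{B},I(\mu)}(a) = (\Pi_{\mathcal{B}_r,\mu}(a_r))_{r\in[R]}$. For the $a$-update, I would write the problem $\min_a \sum_{r,i}\mu_i(a_{r,i}-y_{r,i})^2$ subject to $(Aa)_i=0$ for every $i$ and $a_{r,i}=0$ for $i\notin S_r$. The non-incidence coordinates are set to zero directly; for the incidence coordinates, a Lagrangian calculation with a multiplier $\lambda_i$ for the constraint $\sum_{r:\,i\in S_r} a_{r,i}=0$ yields $a_{r,i}=y_{r,i}-\lambda_i/(2\mu_i)$, and substituting back (using that $y\in\mathcal{B}$ forces $y_{r,i}=0$ when $i\notin S_r$, so $\sum_{r:\,i\in S_r}y_{r,i}=(Ay)_i$) gives $\lambda_i/(2\mu_i)=\mu_i^{-1}(Ay)_i$. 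This reproduces exactly the IAP update rule $a_{r,i}^{(k)}=y_{r,i}^{(k-1)}-\mu_i^{-1}(Ay^{(k-1)})_i$.

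With this identification, and using Lemma~\ref{dualprob} to interpret $\Xi=\mathcal{Z}\cap\mathcal{B}$ as the set of $y$-solutions to problem~\eqref{newdistance}, the cited Nishihara-type lemma applies directly and yields
\[
d_{I(\mu)}(y^{(k)},\Xi) \;\leq\; 2\, d_{I(\mu)}(y^{(0)},\Xi)\left(1-\frac{1}{\kappa_*^2}\right)^{k}.
\]
Plugging in Lemma~\ref{kappabound}, $\kappa_*^2 \leq (\sqrt{N\|\mu\|_1/2}+1)^2 = O(N\|\mu\|_1)$. To drive the right-hand side below $\epsilon$, I would take $k \geq \kappa_*^2 \log(2d_{I(\mu)}(y^{(0)},\Xi)/\epsilon)$ and invoke the standard inequality $-\log(1-t)\geq t$ for $t\in(0,1)$; this gives $k=O(N\|\mu\|_1\log(1/\epsilon))$, as claimed.

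The substantive analytic content of the theorem sits in Lemma~\ref{kappabound}; the present argument is mostly bookkeeping. The only mild subtlety I would spell out explicitly is that Nishihara's linear-convergence lemma, originally formulated for Euclidean projections, carries over verbatim once the ambient inner product is replaced by the positive-definite form inducing $\|\cdot\|_{2,I(\mu)}$, since every geometric ingredient it uses (orthogonality, the projection characterization of $\kappa_*$, etc.) remains valid when the weights $\mu_i$ are strictly positive. With this caveat dispatched, the proof is complete.
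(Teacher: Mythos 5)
Your proposal is correct and follows essentially the same route as the paper: the paper derives Theorem~\ref{APconv} directly by combining the quoted Nishihara linear-convergence lemma (stated in the $\|\cdot\|_{2,I(\mu)}$ geometry) with the bound $\kappa_*\leq\sqrt{N\|\mu\|_1/2}+1$ of Lemma~\ref{kappabound}, exactly as you do. Your explicit verification that the $a$-update is the weighted projection onto $\mathcal{A}$ (via the Lagrangian computation recovering $a_{r,i}=y_{r,i}-\mu_i^{-1}(Ay)_i$) matches the computation the paper carries out in the proof of Lemma~\ref{dualprob}, and is a welcome piece of bookkeeping the paper leaves implicit.
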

Note that in practice, one often has $\|\mu\|_1\ll NR,$ which shows that the convergence rate of the AP method for solving the DSBM problem may be significantly improved. 
\vspace{-0.2cm}
\subsection{Sequential Coordinate Descent Algorithms}
\vspace{-0.1cm}
Unlike the AP algorithm, the CD algorithms by Ene et al.~\cite{ene2015random} remain unchanged given~\eqref{compact}. Our first goal is to establish whether the convergence rate of the CD algorithms can be improved using a parameterization that exploits incidence relations.

The convergence rate of CD algorithms is linear if the objective function is component-wise smooth and $\ell$-strong convex. In our case, $g(y)$ is component-wise smooth as for any $y, z\in \mathcal{B}$ that only differ in the $r$-th block (i.e., $y_r\neq z_r$, $y_{r'}= z_{r'}$ for $r'\neq r$), one has
\begin{align}\label{smoothness}
 \| \nabla_{r} g(y)- \nabla_r g(z) \|_2 \leq \|y - z\|_{2}.
\end{align}
Here, $\nabla_{r} g$ denotes the gradient vector associated with the $r$-th block. 

\begin{definition}
We say that the function $g(y)$ is \emph{$\ell$-strongly convex} in $\|\cdot\|_{2,}$, if for any $y\in \mathcal{B}$ 
\begin{align}\label{strongconv}
g(y^*) \geq g(y) + \langle \nabla g(y), y^*-y \rangle + \frac{\ell}{2}\|y^*- y\|_{2}^2, \; \text{or equivalently,} \quad \|Ay - Ay^*\|_2^2 \geq \ell \|y^*- y\|_{2}^2, \nonumber
\end{align}
where $y^* = \arg\min\limits_{z\in\Xi}\|z- y\|_{2}^2$. Moreover, we let $\ell_{*} = \sup\{\ell: \text{$g(y)$ is $\ell$-strongly convex in } \;\|\cdot\|_{2}\}$. 
\end{definition}
Note that the above definition essentially establishes a form of weak-strong convexity~\cite{karimi2016linear}. 
Then, using standard analytical tools for CD algorithms~\cite{nesterov2012efficiency}, we can prove the following result~\cite{ene2015random}. 
\begin{theorem}\label{RCDBconv}
The RCDM for problem~\eqref{compact} outputs a point $y$ that satisfies $ \mathbb{E}[g(y)] \leq  g(y^*) + \epsilon$ after $O(\frac{R}{\ell_{*}}\log(1/\epsilon))$ iterations. The ACDM applied to the problem~\eqref{compact} outputs a point $y$ that satisfies $\mathbb{E}[g(y)] \leq  g(y^*) + \epsilon$ after $O(\frac{R}{\sqrt{\ell_{*}}}\log(1/\epsilon))$ iterations. 
\end{theorem}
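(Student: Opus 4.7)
The plan is to assemble the bound from three standard ingredients: (i) the component-wise smoothness of $g$ with blockwise Lipschitz constant $1$, which is already recorded in~\eqref{smoothness}; (ii) the weak-strong convexity of $g$ at the rate $\ell_*$, in the form of a Polyak-{\L}ojasiewicz-style inequality; and (iii) the standard per-iteration contraction analysis of (accelerated) randomized block coordinate descent due to Nesterov.

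First I would observe that the RCDM step $y_{r_k}^{(k)} = \Pi_{\mathcal{B}_{r_k}}\bigl(-\sum_{r\neq r_k}y_r^{(k-1)}\bigr)$ exactly minimizes $g(\cdot)$ in the block $r_k$ with the other blocks fixed, since $\|Ay\|_2^2$ is, as a function of $y_{r_k}$, the squared distance to $-\sum_{r\neq r_k}y_r^{(k-1)}$. Together with the blockwise smoothness~\eqref{smoothness}, applying the descent lemma to the quadratic upper bound in block $r_k$ yields
\[
g(y^{(k)}) \;\leq\; g(y^{(k-1)}) - \tfrac{1}{2}\|\nabla_{r_k} g(y^{(k-1)})\|_2^2.
\]
Taking expectation over the uniform choice of $r_k \in [R]$ gives the per-iteration drop $E[g(y^{(k)})\mid y^{(k-1)}] \leq g(y^{(k-1)}) - \tfrac{1}{2R}\|\nabla g(y^{(k-1)})\|_2^2$.

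Next, I would convert weak-strong convexity into a gradient-norm lower bound on the optimality gap. Starting from the definition, minimizing the quadratic lower bound $g(y)+\langle\nabla g(y),z-y\rangle+\tfrac{\ell_*}{2}\|z-y\|_2^2$ over $z$ and evaluating at the projected optimum yields $\|\nabla g(y)\|_2^2 \geq 2\ell_*(g(y)-g(y^*))$. Substituting into the expected descent inequality produces the geometric contraction
\[
E[g(y^{(k)}) - g(y^*) \mid y^{(k-1)}] \;\leq\; \Bigl(1 - \tfrac{\ell_*}{R}\Bigr)\bigl(g(y^{(k-1)}) - g(y^*)\bigr),
\]
and iterating gives the claimed $O((R/\ell_*)\log(1/\epsilon))$ bound for RCDM. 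For ACDM, I would invoke Nesterov's accelerated randomized coordinate descent framework with uniform block Lipschitz constants $L_r=1$; his template's convergence guarantee upgrades the dependence on $\ell_*$ from $\ell_*^{-1}$ to $\ell_*^{-1/2}$, giving the $O((R/\sqrt{\ell_*})\log(1/\epsilon))$ bound.

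The main obstacle is the use of \emph{weak}-strong convexity rather than genuine strong convexity, particularly for the accelerated scheme. The unaccelerated analysis goes through essentially verbatim since the PL-type inequality derived above is exactly the ingredient that drives the contraction. For ACDM, however, the standard proof uses strong convexity at intermediate momentum points and not only at the true iterate, so I would need to adapt the estimating-sequence argument in the style of Karimi-Nutini-Schmidt, replacing strong convexity by weak-strong convexity against the projection of each iterate onto $\Xi$. Handling the constraint $y\in\mathcal{B}$ inside the block update is nontrivial notationally but causes no real difficulty because each block update is an exact minimization over the convex set $\mathcal{B}_{r_k}$, which is all the descent lemma needs.
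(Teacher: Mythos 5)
There is a genuine gap at the descent step. Your inequality $g(y^{(k)}) \leq g(y^{(k-1)}) - \tfrac{1}{2}\|\nabla_{r_k} g(y^{(k-1)})\|_2^2$ is the \emph{unconstrained} descent lemma: it requires taking the full gradient step $y_{r_k} - \nabla_{r_k}g(y^{(k-1)})$, which in general leaves the base polytope $\mathcal{B}_{r_k}$. Exact minimization over the convex set $\mathcal{B}_{r_k}$ does \emph{not} guarantee a decrease proportional to the squared block gradient norm; a one-dimensional example already kills it: minimize $\tfrac12(t+c)^2$ over $t\in[0,1]$ starting from $t=0$ with $c>0$ --- the constrained block minimizer does not move, the decrease is zero, yet $\tfrac12|\nabla g|^2 = c^2/2>0$. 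The same issue infects the PL step: at the constrained optimum $y^*$ one only has $\langle\nabla g(y^*), z-y^*\rangle\geq 0$ for $z\in\mathcal{B}$, not $\nabla g(y^*)=0$, so while the inequality $\|\nabla g(y)\|^2\geq 2\ell_*(g(y)-g(y^*))$ is formally derivable from weak-strong convexity, it cannot be cashed in against a per-iteration decrease that the constrained update does not deliver. Your closing claim that the constraint ``causes no real difficulty because each block update is an exact minimization\dots which is all the descent lemma needs'' is exactly where the argument breaks.

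The repair is the route the paper takes for its parallel variant (Theorem~\ref{PCDrate}, whose proof specializes to sequential RCDM by taking $C=\{r\}$, $\alpha=1/R$): use blockwise smoothness to upper-bound $g(y^{(k+1)})$ by a quadratic in $y_r^{(k+1)}-y_r^{(k)}$, split this increment through $y_r^*$, invoke the first-order optimality condition of the projection (Lemma~\ref{optcond}) together with the three-point identity (Lemma~\ref{threepoints}), and track the Lyapunov function $g(y^{(k)})-g(y^*)+\tfrac12\|y^{(k)}-y^*\|^2$ rather than the bare function gap. Weak-strong convexity then enters through the inner product $\langle\nabla g(y^{(k)}), y^*-y^{(k)}\rangle$ (as in~\eqref{sc4}), not through a gradient-norm PL bound. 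Your instinct that the accelerated case needs extra care with weak-strong convexity is right, but the same difficulty already arises in the unaccelerated case, and your proposal does not address it there.
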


To precisely characterize the convergence rate, we need to find an accurate estimate of $\ell_*$. Ene et al.~\cite{ene2017decomposable} derived $\ell_* \geq \frac{1}{N^2}$ without taking into account the incidence structure. As sparse incidence side information improves the performance of the AP method, it is of interest to determine if the same can be accomplished for the CD algorithms. Example~\ref{boundofell} establishes that this is not possible in general if one only relies on $\ell_{*}$.  
 \begin{example}\label{boundofell}
Consider a DSFM problem with a extremely sparse incidence structure with $|S_r|= 2$. More precisely, let $N=2n+1$, $R = 2n$, and $\|\mu\|_1 = \sum_{r\in[R]}|S_r|= 4n\ll NR$. Let $F_r$ be incident to the elements $\{r, r+1\},$ for all $r\in[R],$ and be such that $F_r(\{r\}) = F_r(\{r+1\}) = 1, F_r(\emptyset) = F_r(\{r, r+1\}) = 0$. Then, $\ell_{*}< \frac{7}{N^2}$. 
\end{example}

Note that the optimal solution of problem~\eqref{compact} for this particular setting equals $y^* = 0$. Let us consider a point $y\in \mathcal{B}$ specified as follows. First, due to the given incidence relations, the block $y_r$ has two components corresponding to the elements indexed by $r$ and $r+1$. For any $r\in [R]$,
\begin{equation} \label{exampleinit}
   y_{r, r} = - y_{r, r+1} = \left\{\begin{array}{cc} \frac{r}{n} & r\leq n, \\
   \frac{2n+1-r}{n} &  r\geq n+1.
    \end{array} \right.
\end{equation}
Therefore, $g(y) = \frac{1}{n},\;\|y\|_2^2 > \frac{4}{3}n$, which results in $\ell_* < \frac{3}{2n^2}\leq \frac{7}{N^2}$ for all $n\geq 3$. 

Example~\ref{boundofell} only illustrates that an important parameter of CDMs cannot be improved using incidence information; but this does not necessarily imply that a sequential RCDM that uses incidence structures cannot offer better convergence rates than $O(N^2R)$. In Section~\ref{sec:lowerboundexp} of the Supplement, we present additional experimental evidence that supports our observation, using the setting of Example~\ref{boundofell}.  

As a final remark, note that Nishihara et al.~\cite{nishihara2014convergence} also proposed a lower bound that does not make use of sparse incidence structures and only works for the AP method.


\vspace{-0.1cm}
\subsection{New Parallel CD methods}\label{sec:CDMs}
\vspace{-0.1cm}
In what follows, we propose two CDMs which rely on parallel projections and incidence relations.

The following observation is key to understanding the proposed approach. Suppose that we have a nonempty group of blocks $C\subseteq R$. Let $y, h\in \otimes_{r=1}^R\mathbb{R}^{N}$. If $h_{r,i}$ is nonzero only for block $r\in C$ and $i\in S_r$, then,
\begin{align}
g(y+h) = g(y) + \langle \nabla g(y), h\rangle + \frac{1}{2}\|Ah\|_{2}^2 \leq\; g(y) + \sum_{r\in C} \langle \nabla_r g(y),  h_{r}\rangle + \sum_{r\in C} \frac{1}{2}\| h_{r}\|_{2, I(\mu^C)}^2 . \label{paradescent}
\end{align}
Hence, for all $r\in C$, if we perform projections onto $\mathcal{B}_r$ with respect to the norm $\|\cdot\|_{2, \mu^C}$ simultaneously in each iteration of the CDM, convergence is guaranteed as the value of the objective function remains bounded. The smaller the components of $\mu^C$, the faster the convergence. Note that the components of $\mu^C$ are the numbers of incidence relations of elements restricted to the set $C$. Hence, in each iteration, blocks that ought to be updated in parallel are those that correspond to submodular functions that have supports with smallest possible intersections. 

One can select blocks that are to be updated in parallel in a combinatorially specified fashion or in a randomized fashion, as dictated by what we call an $\alpha$-proper distribution. To describe our parallel RCDM, we first introduce the notion of an $\alpha$-proper distribution. 
\begin{definition}
Let $P$ be a distribution used to sample a group of $C$ blocks. Define $\theta^P = (\theta_1^P, \theta_2^P, ..., \theta_R^P)$ such that for $r\in [R]$, $\theta_r^P \triangleq \mathbb{E}_{C\sim P}\left[\mu^C| r\in C\right]$. We say that $P$ is an $\alpha$-proper distribution, if for any $r\in [R]$ and a given $\alpha \in (0,1)$, we have $\mathbb{P}(r\in C) =\alpha$. 
\end{definition}

We are now ready to describe the parallel RCDM algorithm -- Algorithm~1; the description of the parallel ACDM is postponed to Section~\ref{sec:ACDM} of the Supplement. 
 \begin{table}[htb]
\centering
\begin{tabular}{l}
\hline
\label{PRCD}
\textbf{Algorithm 1: } \textbf{Parallel RCDM for Solving ~\eqref{compact}} \\
\hline
\ \textbf{Input}: $\mathcal{B}$, $\alpha$ \\
\ 0: Initialize $y^{(0)}\in\mathcal{B}$, $k\leftarrow 0$\\
\ 1: Do the following steps iteratively until the dual gap $<\epsilon$:\\
\ 2: \;\quad Sample $C_{ i_k}$ using some $\alpha$-proper distribution $P$\\
\ 3: \;\quad For $r\in C_{i_k}$: \\
\ 4: \;\quad \quad $y_r^{(k+1)}\leftarrow \Pi_{\mathcal{B}_r, \theta_r^P} (y_{r}^{(k)} - (\theta_{r}^P)^{-1} \odot \nabla_r g(y^{(k)}))$ \\
\ 5: \;\quad Set $y_r^{(k+1)}\leftarrow y_r^{(k)}$ for $r\not \in C_{i_k}$, $k\leftarrow k+1$ \\
\ 6:\; Output $y^{(k)}$  \\
\hline
\end{tabular}
\end{table}
\\Next, we establish strong convexity results for the space $\|\cdot \|_{2, \theta^P}$ by invoking Lemma~\ref{lemmakappabound}.
\begin{lemma}~\label{skewstrongconv}
For any $y\in \mathcal{B}$, let $y^* = \arg\min_{\xi\in\Xi}\|\xi- y\|_{2,\theta^P}^2$. Then,
$$\|Ay - Ay^*\|_2^2 \geq \frac{2}{N\|\theta^P\|_{1,\infty}}\|y-y^*\|_{2,\theta^P}^2.$$ 
\end{lemma}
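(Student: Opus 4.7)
The plan is to derive the inequality as a one-step consequence of Lemma~\ref{lemmakappabound}, which is precisely the ``transfer'' tool built for this purpose: it lets me convert control on $Ay-z$ in an $\ell_2$ (or skewed-$\ell_2$) norm into control on the skewed-norm distance from $y$ to a point $\xi\in\mathcal{B}$ with $A\xi=z$. The natural target to plug in is $z=Ay^*$: since $y^*\in\Xi\subseteq\mathcal{B}$, the image $Ay^*=-x^*$ lies in the base polytope of the aggregate submodular function $F=\sum_{r}F_r$, so the hypotheses of Lemma~\ref{lemmakappabound} are met.

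Concretely, I would apply Lemma~\ref{lemmakappabound} with $\theta=\theta^P$ and with the uniform weight $w=\mathbf{1}\in\mathbb{R}_{>0}^N$, so that $\|w^{-1}\|_1=N$ and $\|Ay-Ay^*\|_{2,w}=\|Ay-Ay^*\|_2$. The lemma then produces a point $\xi\in\mathcal{B}$ with $A\xi=Ay^*$ and
\[
\|\xi-y\|_{2,\theta^P}\;\leq\;\sqrt{\tfrac{N\,\|\theta^P\|_{1,\infty}}{2}}\;\|Ay-Ay^*\|_2.
\]
This $\xi$ actually belongs to $\Xi$, not merely to $\mathcal{B}$: it satisfies $A\xi=-x^*$, and the extra sparsity requirement $\xi_{r,i}=0$ for $i\notin S_r$ that appears in the definition of $\mathcal{Z}$ is automatic for any element of $\mathcal{B}$, as was observed right after the definition of incidence in Section~\ref{sec:preliminaries}. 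Since $y^*$ is by definition the $\|\cdot\|_{2,\theta^P}$-closest point of $\Xi$ to $y$, I have $\|y-y^*\|_{2,\theta^P}\leq\|y-\xi\|_{2,\theta^P}$; squaring both sides and rearranging yields the claimed inequality.

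I do not expect any real obstacle beyond picking the right weights in Lemma~\ref{lemmakappabound} and invoking the projection optimality of $y^*$. The only small subtlety is confirming that the produced $\xi$ lies in $\Xi$ rather than merely in $\mathcal{B}$, which, as noted above, follows immediately from the fact that $\mathcal{B}$-feasibility already enforces the vanishing of the non-incident coordinates.
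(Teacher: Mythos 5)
Your proposal is correct and matches the paper's own proof: both apply Lemma~\ref{lemmakappabound} with $z=Ay^*$ (and unit weights, giving the factor $\|w^{-1}\|_1=N$), observe that the resulting $\xi$ lies in $\Xi$ because $A\xi=-x^*$ and the non-incident coordinates of any point of $\mathcal{B}$ already vanish, and then conclude via the minimality of $y^*$ over $\Xi$. No gaps.
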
  
The convergence rate of Algorithm 1 is established in the next theorem. 
\begin{theorem} \label{PCDrate}
At each iteration of Algorithm 1, $y^{(k)}$ satisfies 
\begin{align*}
\mathbb{E}\left[g(y^{(k)})- g(y^*) + \frac{1}{2}d_{\theta^P}^2(y^{k}, \xi) \right] \leq \left[1- \frac{4\alpha}{(N\|\theta^P\|_{1,\infty} + 2)}\right]^{k}\left[g(y^{(0)})- g(y^*) + \frac{1}{2}d_{\theta^P}^2(y^{0}, \xi) \right].
\end{align*}
\end{theorem}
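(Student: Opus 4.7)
The plan is to prove a one-step contraction $\mathbb{E}[\Phi^{(k+1)}] \leq \bigl(1-c\alpha/(N\|\theta^P\|_{1,\infty}+2)\bigr)\,\Phi^{(k)}$ for the Lyapunov potential $\Phi^{(k)} = g(y^{(k)}) - g(y^*) + \tfrac{1}{2} d_{\theta^P}^2(y^{(k)}, \Xi)$, and then iterate. Let $\xi^{*}$ denote the $\theta^P$-projection of $y^{(k)}$ onto $\Xi$, set $\tilde y^{(k+1)}_r = \Pi_{\mathcal{B}_r,\theta_r^P}\bigl(y_r^{(k)} - (\theta_r^P)^{-1}\odot \nabla_r g(y^{(k)})\bigr)$, and let $\Delta_r = \tilde y_r^{(k+1)} - y_r^{(k)}$. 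Crucially, $\Delta_r$ does not depend on the random set $C_{i_k}$, so every expectation reduces to evaluating $\mathbb{E}[\mathbf{1}\{r\in C_{i_k}\}]=\alpha$ and, by definition of $\theta^P$ together with $\alpha$-properness, $\mathbb{E}[\mathbf{1}\{r\in C_{i_k}\}\,\mu^{C_{i_k}}_i] = \alpha\,\theta_{r,i}^{P}$. This identity is exactly what converts any $I(\mu^C)$-weighted conditional bound into an $\alpha$-scaled $\theta^P$-weighted unconditional bound.

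The first step is to bound $\mathbb{E}[g(y^{(k+1)})]$. I would apply inequality~\eqref{paradescent} to $h = y^{(k+1)} - y^{(k)}$, take expectation using the identity above, and then exploit that $\tilde y_r^{(k+1)}$ is the unique minimizer over $\mathcal{B}_r$ of the $1$-strongly-convex (in $\|\cdot\|_{2,\theta_r^P}$) quadratic $\phi_r(z) = \langle \nabla_r g(y^{(k)}), z - y_r^{(k)}\rangle + \tfrac{1}{2}\|z - y_r^{(k)}\|_{2,\theta_r^P}^2$, so that $\phi_r(\tilde y_r^{(k+1)}) + \tfrac{1}{2}\|\tilde y_r^{(k+1)} - \xi^{*}_r\|_{2,\theta_r^P}^2 \leq \phi_r(\xi^{*}_r)$. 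Summing in $r$ and using convexity $\langle \nabla g(y^{(k)}), \xi^{*} - y^{(k)}\rangle \leq g(y^*) - g(y^{(k)})$ yields a bound on $\mathbb{E}[g(y^{(k+1)})] - g(y^{(k)})$ expressed in $g(y^*) - g(y^{(k)})$ and $d_{\theta^P}^2(y^{(k)},\Xi)$.

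Next, to bound $\mathbb{E}[d_{\theta^P}^2(y^{(k+1)},\Xi)]$, I use $d_{\theta^P}^2(y^{(k+1)},\Xi) \leq \|y^{(k+1)} - \xi^{*}\|_{2,\theta^P}^2$, expand $y^{(k+1)}_r - \xi^{*}_r = (y^{(k)}_r - \xi^{*}_r) + \mathbf{1}\{r\in C_{i_k}\}\,\Delta_r$, and take expectation. The resulting cross term $2\alpha\sum_r \langle \theta_r^P\odot\Delta_r, y^{(k)}_r - \xi^{*}_r\rangle$ is controlled via the variational inequality of the projection, $\langle \theta_r^P\odot\Delta_r + \nabla_r g(y^{(k)}), \xi^{*}_r - \tilde y_r^{(k+1)}\rangle \geq 0$, which rearranges (after one more use of convexity) to $\sum_r \langle \theta_r^P\odot\Delta_r, y^{(k)}_r - \xi^{*}_r\rangle \leq -\|\Delta\|_{2,\theta^P}^2 - \langle \nabla g(y^{(k)}), \Delta\rangle + \bigl(g(y^*) - g(y^{(k)})\bigr)$. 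Adding half of this distance bound to the $g$-bound makes the $\langle \nabla g(y^{(k)}), \Delta\rangle$ and $\|\Delta\|_{2,\theta^P}^2$ terms cancel, yielding the clean estimate $\mathbb{E}[\Phi^{(k+1)}] \leq \Phi^{(k)} - \alpha\bigl(g(y^{(k)}) - g(y^*)\bigr)$.

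Finally, I invoke Lemma~\ref{skewstrongconv} together with $g(y^{(k)}) - g(y^*) \geq \tfrac{1}{2}\|Ay^{(k)} - Ay^*\|_2^2$ (which follows from $\langle Ay^*, A(y^{(k)} - y^*)\rangle \geq 0$, a direct consequence of the optimality of $y^*$ over $\mathcal{B}$) to get $g(y^{(k)}) - g(y^*) \geq \tfrac{1}{N\|\theta^P\|_{1,\infty}}\,d_{\theta^P}^2(y^{(k)},\Xi)$, hence $\Phi^{(k)} \leq \tfrac{N\|\theta^P\|_{1,\infty}+2}{2}\bigl(g(y^{(k)}) - g(y^*)\bigr)$. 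Substituting converts the absolute decrement into a contraction of the form claimed, and iterating in $k$ finishes the proof. The main obstacle I anticipate is the tight bookkeeping that produces the exact cancellation between the cross term of the distance expansion and the first-order term of the descent bound; squeezing the precise constant in the numerator of the contraction rate (rather than a weaker constant from a naive combination) will likely require asymmetrically weighting the two components of $\Phi^{(k)}$, or using the strong convexity of $\phi_r$ more aggressively so that the spare term $\tfrac{1}{2}\|\tilde y_r^{(k+1)} - \xi^{*}_r\|_{2,\theta_r^P}^2$ is absorbed into the distance estimate rather than dropped.
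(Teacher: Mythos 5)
Your argument reproduces the paper's proof almost line for line: the expectation identity $\mathbb{E}[\mathbf{1}\{r\in C\}\,\mu^{C}]=\alpha\,\theta_r^P$ is the paper's Lemma~\ref{eqnorm2}, your ``strongly convex quadratic $\phi_r$'' inequality is exactly the combination of the paper's projection-optimality and three-point lemmas (Lemmas~\ref{optcond} and~\ref{threepoints}), and your intermediate estimate $\mathbb{E}[\Phi^{(k+1)}]\leq \Phi^{(k)}-\alpha\,(g(y^{(k)})-g(y^*))$ is correct and matches the paper's display~\eqref{onestep} up to the point where convexity is invoked. The bookkeeping you worry about does work out; there is no hidden obstruction there.

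The one genuine gap is the constant, and your proposed remedies will not close it. Using plain convexity $\langle\nabla g(y^{(k)}),\xi^*-y^{(k)}\rangle\leq g(y^*)-g(y^{(k)})$ followed by $g(y^{(k)})-g(y^*)\geq \tfrac{2}{N\|\theta^P\|_{1,\infty}+2}\Phi^{(k)}$ yields the contraction $1-\tfrac{2\alpha}{N\|\theta^P\|_{1,\infty}+2}$, a factor of two short of the stated $1-\tfrac{4\alpha}{N\|\theta^P\|_{1,\infty}+2}$. Reweighting the two components of $\Phi^{(k)}$ cannot help (the coefficient of the distance term is pinned by the three-point identity, and with only $d_{\theta^P}^2(y^{(k)},\Xi)\leq N\|\theta^P\|_{1,\infty}(g(y^{(k)})-g(y^*))$ the bound $\alpha G\geq \tfrac{4\alpha}{M+2}(G+\tfrac12 d^2)$ fails at the extreme case $\tfrac12 d^2=\tfrac{M}{2}G$), and the strong convexity of $\phi_r$ is already fully exploited. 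The missing ingredient is the weak strong convexity of $g$ itself: Lemma~\ref{skewstrongconv} upgrades the convexity step to $\langle\nabla g(y^{(k)}),\xi^*-y^{(k)}\rangle\leq g(y^*)-g(y^{(k)})-\tfrac{1}{N\|\theta^P\|_{1,\infty}}\|y^{(k)}-\xi^*\|_{2,\theta^P}^2$, and combining this with the reverse bound $g(y^*)-g(y^{(k)})\leq -\tfrac{1}{N\|\theta^P\|_{1,\infty}}\|y^{(k)}-\xi^*\|_{2,\theta^P}^2$ gives the paper's inequality~\eqref{sc4}, namely $\langle\nabla g(y^{(k)}),\xi^*-y^{(k)}\rangle\leq \tfrac{4}{N\|\theta^P\|_{1,\infty}+2}\bigl[g(y^*)-g(y^{(k)})-\tfrac12\|y^{(k)}-\xi^*\|_{2,\theta^P}^2\bigr]$, which is where the factor $4$ comes from. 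Substituting this single strengthened inequality into your derivation turns your proof into the paper's.
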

The parameter $N\|\theta^P\|_{1,\infty}$ is obtained by combining the strong convexity constant and the properties of the sampling distribution $P$. 
Small values of $\|\theta^P\|_{1,\infty}$ ensure better convergence rates, and we next bound this value. 

\begin{lemma}\label{lowerbound}
For any $\alpha$-proper distribution $P$ and an element $i\in[N]$, $\max\limits_{r\in[R]: i\in S_r} \theta_{r,i}^P\geq \max\{ \alpha \mu_i ,1\}$. Consequently, $\|\theta^P\|_{1,\infty} \geq \max\{\alpha\|\mu\|_1, N\}$.
\end{lemma}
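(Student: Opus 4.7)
My plan is to establish the two lower bounds separately and then combine them. The key identity to exploit is that for a fixed element $i$ and a random group $C\sim P$, the quantity $\mu^C_i$ counts exactly the blocks $r$ with $i\in S_r$ that happen to lie in $C$, so $\mu^C_i = \sum_{r:\,i\in S_r}\mathbb{1}[r\in C]$.

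First I would show $\max_{r:\,i\in S_r}\theta_{r,i}^P \geq 1$. By definition of the conditional expectation, whenever $i\in S_r$ and $r\in C$, the block $r$ itself contributes to $\mu^C_i$, so $\mu^C_i\geq 1$ almost surely conditioned on $r\in C$. Taking expectation yields $\theta_{r,i}^P\geq 1$ for every such $r$, which immediately gives the bound.

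Next I would establish $\max_{r:\,i\in S_r}\theta_{r,i}^P \geq \alpha\mu_i$. The idea is to average (rather than bound pointwise): using $\theta_{r,i}^P = \mathbb{E}[\mu^C_i\,\mathbb{1}[r\in C]]/\alpha$, I would sum over all $r$ with $i\in S_r$ to get
\begin{equation*}
\sum_{r:\,i\in S_r}\theta_{r,i}^P \;=\; \frac{1}{\alpha}\,\mathbb{E}\!\left[\mu^C_i\sum_{r:\,i\in S_r}\mathbb{1}[r\in C]\right] \;=\; \frac{1}{\alpha}\,\mathbb{E}\bigl[(\mu^C_i)^2\bigr].
\end{equation*}
By Jensen (or Cauchy--Schwarz), $\mathbb{E}[(\mu^C_i)^2]\geq (\mathbb{E}[\mu^C_i])^2 = (\alpha\mu_i)^2$, using the $\alpha$-properness in the last step. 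Hence $\sum_{r:\,i\in S_r}\theta_{r,i}^P \geq \alpha\mu_i^2$, and since this sum has $\mu_i$ terms, the maximum must be at least $\alpha\mu_i$. Combining the two estimates gives $\max_{r:\,i\in S_r}\theta_{r,i}^P\geq\max\{\alpha\mu_i,1\}$.

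Finally, summing the pointwise bound over $i\in[N]$ and using $\sum_i\max\{\alpha\mu_i,1\}\geq\max\{\sum_i\alpha\mu_i,\sum_i 1\}=\max\{\alpha\|\mu\|_1,N\}$ yields the consequent inequality for $\|\theta^P\|_{1,\infty}$. I do not foresee a serious obstacle here; the only subtle point is recognizing the self-referential identity for $\mu^C_i$ that lets the Cauchy--Schwarz step produce the factor $\alpha\mu_i$ rather than just $1$.
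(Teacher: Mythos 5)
Your proposal is correct and follows essentially the same route as the paper's proof: both bounds come from writing $\theta_{r,i}^P$ via the conditional expectation identity, summing over the $\mu_i$ blocks incident to $i$ to produce $\frac{1}{\alpha}\mathbb{E}[(\mu_i^C)^2]$, applying Jensen with $\mathbb{E}[\mu_i^C]=\alpha\mu_i$, and bounding the maximum by the average; the $\geq 1$ part is likewise the same observation that $r$ itself contributes to $\mu_i^C$ conditioned on $r\in C$.
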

Without considering incidence relations, i.e., by setting $\|\mu\|_1= NR$, one always has $\|\theta^P\|_{1,\infty} \geq \alpha NR$, which shows that parallelization cannot improve the convergence rate of the RCDM.
The next lemma characterizes an achievable $\|\theta^P\|_{1,\infty}$ obtained by choosing $P$ to be a uniform distribution, which, when combined with Theorem~\ref{PCDrate}, proves the result of the last column in Table~\ref{tab:results}.
\begin{lemma}\label{eqnorm}
If $C$ is a set of size $0<K \leq R$ obtained by sampling the $K$-subsets of $[R]$ uniformly at random, then $\theta_{r}^P= \frac{K-1}{R-1} \mu + \frac{R-K}{R-1} 1$. Moreover, $\|\theta^P\|_{1,\infty} = \frac{K-1}{R-1} \|\mu\|_1 + \frac{R-K}{R-1} N$.
\end{lemma}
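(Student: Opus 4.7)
The plan is to work directly from the definition $\theta_{r,i}^P = \mathbb{E}_{C\sim P}[\mu_i^C \mid r \in C]$, using the indicator representation $\mu_i^C = \sum_{r' \in C} \mathbb{1}[i \in S_{r'}] = \sum_{r' \in [R]} \mathbb{1}[r' \in C]\mathbb{1}[i \in S_{r'}]$. By linearity of conditional expectation,
\begin{align*}
\theta_{r,i}^P = \sum_{r' \in [R]} \mathbb{1}[i \in S_{r'}]\, \mathbb{P}(r' \in C \mid r \in C).
\end{align*}

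First I would compute the conditional inclusion probabilities. The $r' = r$ term contributes $\mathbb{1}[i\in S_r]$ with conditional probability $1$. For $r' \neq r$, the uniform distribution on $K$-subsets of $[R]$ is exchangeable on the remaining $R-1$ coordinates given that $r \in C$, so $\mathbb{P}(r' \in C \mid r \in C) = (K-1)/(R-1)$ by a standard symmetry/counting argument (the number of $K$-subsets containing both $r$ and $r'$ divided by the number containing $r$ equals $\binom{R-2}{K-2}/\binom{R-1}{K-1}$). Substituting gives
\begin{align*}
\theta_{r,i}^P = \mathbb{1}[i \in S_r] + \frac{K-1}{R-1}\bigl(\mu_i - \mathbb{1}[i \in S_r]\bigr) = \frac{K-1}{R-1}\mu_i + \frac{R-K}{R-1}\mathbb{1}[i \in S_r].
\end{align*}
For indices $i \in S_r$ (the only ones that matter, since $y_{r,i} = 0$ for all $y \in \mathcal{B}$ whenever $i \notin S_r$, so the corresponding weights are irrelevant and can be set to any convenient value, matching the formula in the lemma), the indicator equals $1$, which gives exactly $\theta_r^P = \frac{K-1}{R-1}\mu + \frac{R-K}{R-1}\mathbf{1}$.

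For the second claim, I would observe that the expression for $\theta_{r,i}^P$ at indices $i \in S_r$ depends only on $i$, not on $r$. Thus for every $i \in [N]$,
\begin{align*}
\max_{r: i \in S_r} \theta_{r,i}^P = \frac{K-1}{R-1}\mu_i + \frac{R-K}{R-1},
\end{align*}
(assuming at least one $r$ is incident to $i$; otherwise the element can be dropped from the ground set without affecting the problem). Summing over $i \in [N]$ and using $\sum_i \mu_i = \|\mu\|_1$ yields $\|\theta^P\|_{1,\infty} = \frac{K-1}{R-1}\|\mu\|_1 + \frac{R-K}{R-1}N$, as claimed.

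The argument is essentially a symmetry-and-counting computation with no real obstacle; the only subtle point is handling the irrelevant indices $i \notin S_r$, but these carry zero weight in every feasible $y_r$ and drop out of the analysis, so the formula stated in the lemma is the operationally relevant one.
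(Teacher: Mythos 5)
Your proof is correct and follows essentially the same route as the paper's: both expand $\theta_{r,i}^P$ as $\sum_{r':\,i\in S_{r'}}\mathbb{P}(r'\in C\mid r\in C)$, separate the $r'=r$ term, and evaluate the remaining conditional inclusion probabilities as $\binom{R-2}{K-2}/\binom{R-1}{K-1}=\frac{K-1}{R-1}$. Your treatment is marginally more careful in flagging that the stated vector formula is only literally valid on coordinates $i\in S_r$ (the paper restricts to $i\in S_r$ implicitly) and in spelling out the $\|\theta^P\|_{1,\infty}$ computation, which the paper leaves to the reader.
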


Comparing Lemma~\ref{lowerbound} and Lemma~\ref{eqnorm}, we see that the $\|\theta^P\|_{1,\infty}$ achieved by sampling uniformly at random is at most a factor of two of the lower bound since $\alpha = K/R$. A natural question is if it is possible to devise a better sampling strategy. This question is addressed in Section~\ref{sec:comb} of the Supplement, where we related the sampling problem to equitable coloring~\cite{meyer1973equitable}. By using Hajnal-Szemer{\'e}di's Theorem~\cite{hajnal1970proof}, we derived a sufficient condition under which an $\alpha$-proper distribution $P$ that achieves the lower bound in Lemma~\ref{lowerbound} can be found in polynomial time. We also described a greedy algorithm for minimizing $\|\theta^P\|_{1,\infty} $ that empirically convergences faster than sampling uniformly at random.

\section{Experiments} \label{sec:experiments}
\begin{figure*}[t]
\centering
\includegraphics[trim={0cm 0cm 0.0cm 0.0cm},clip,width=.24\textwidth]{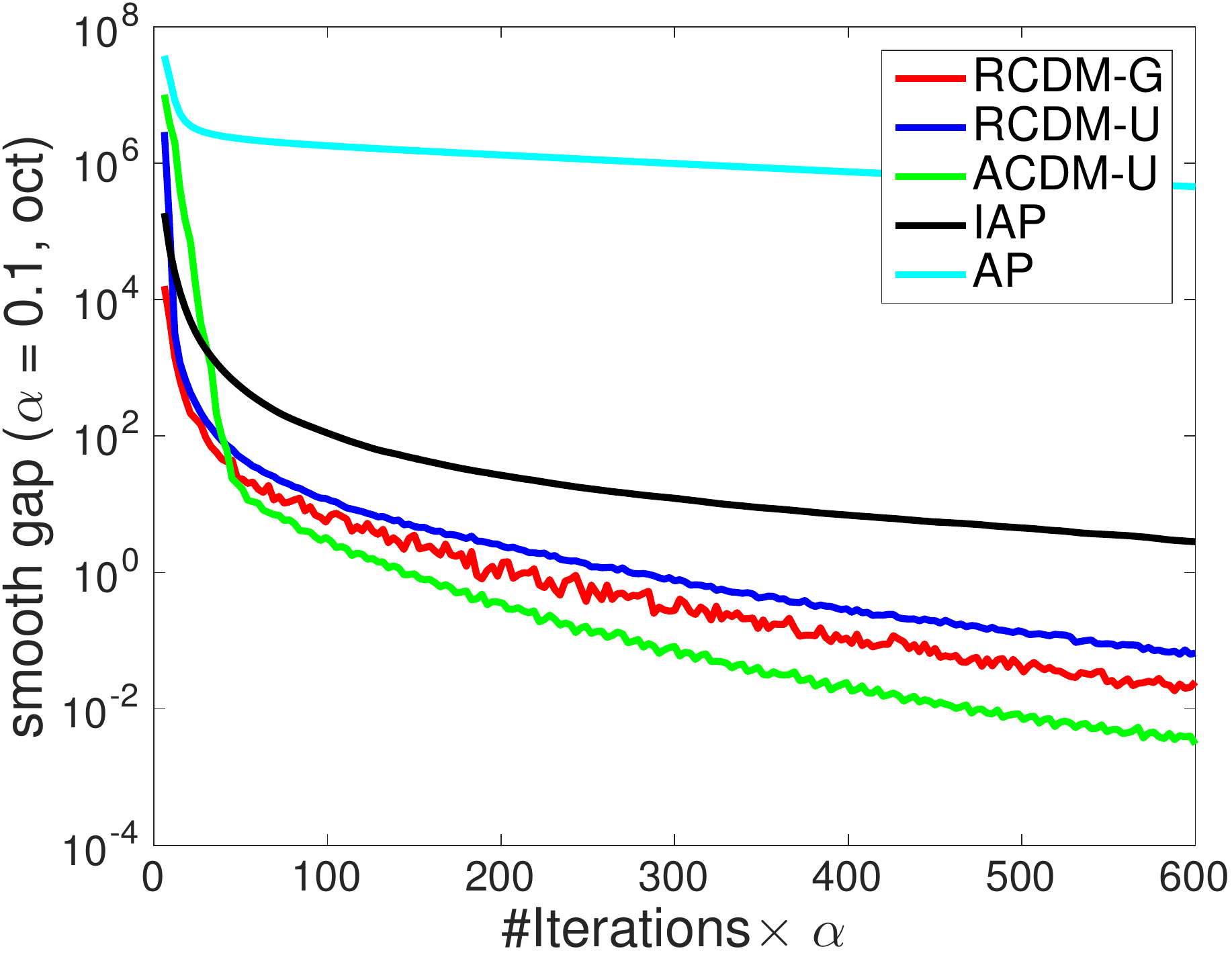}
\includegraphics[trim={0cm 0cm 0.0cm 0.0cm},clip, width=.24\textwidth]{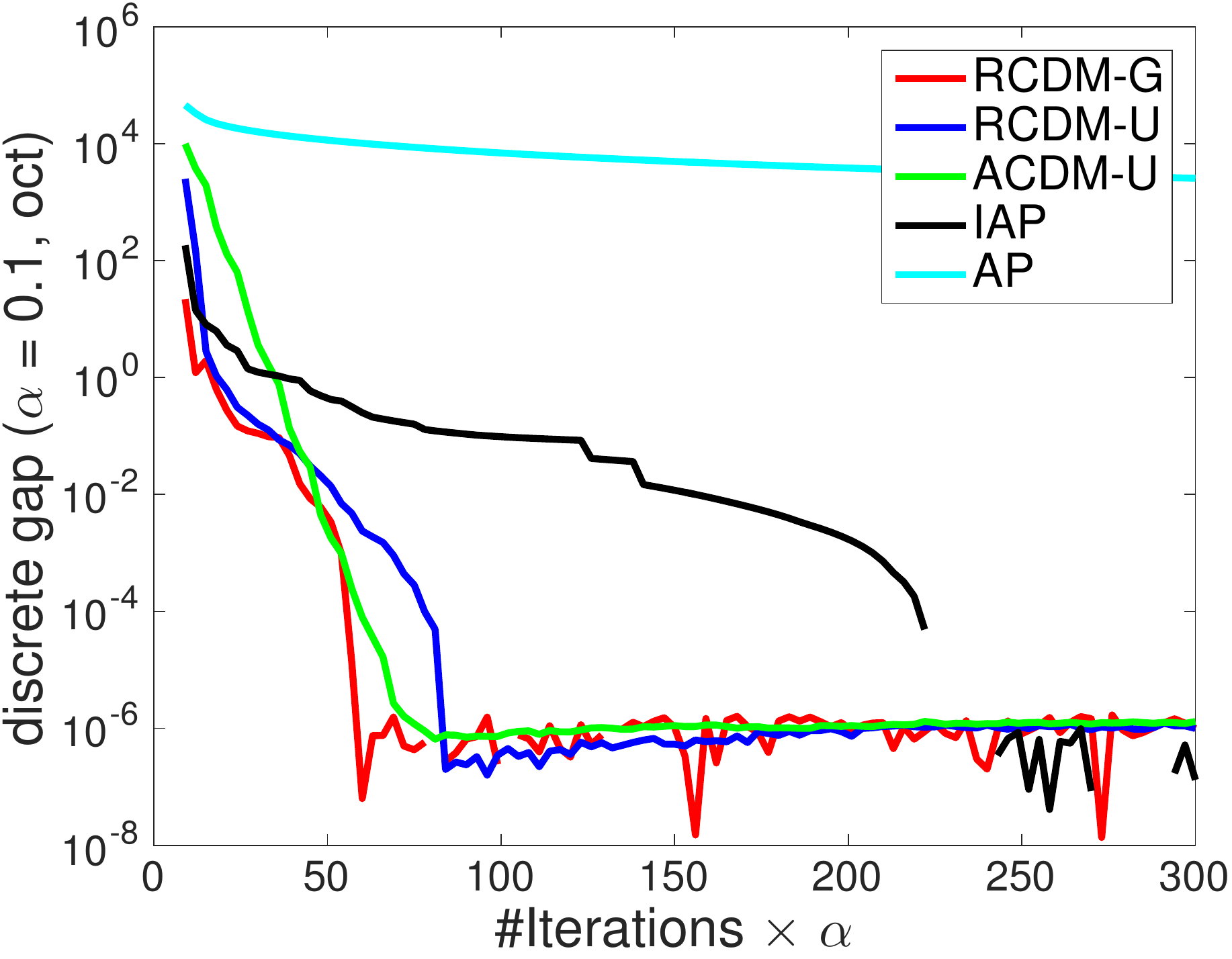}
\includegraphics[trim={0cm 0cm 0.0cm 0.0cm},clip,width=.24\textwidth]{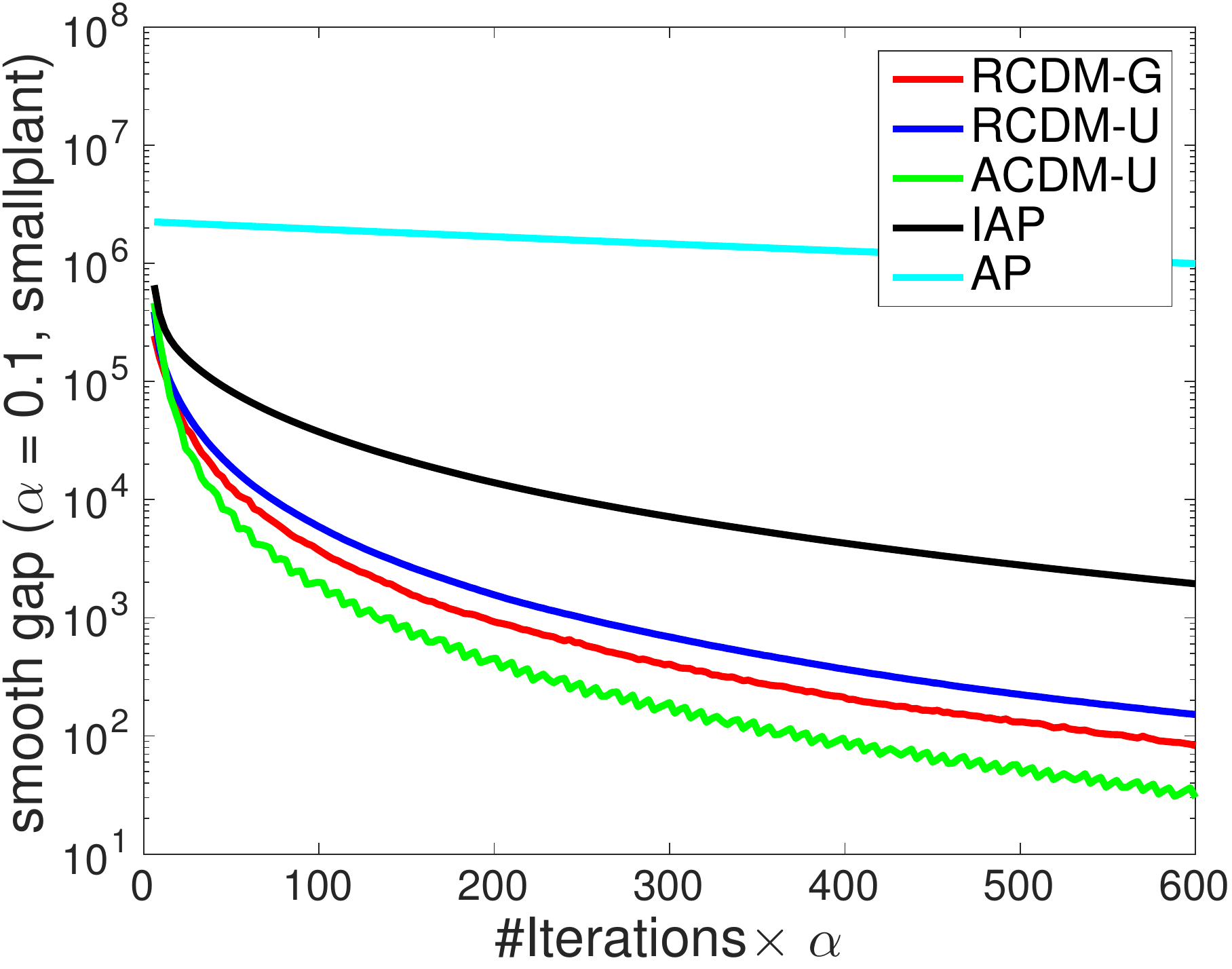}
\includegraphics[trim={0cm 0cm 0.0cm 0.0cm},clip, width=.24\textwidth]{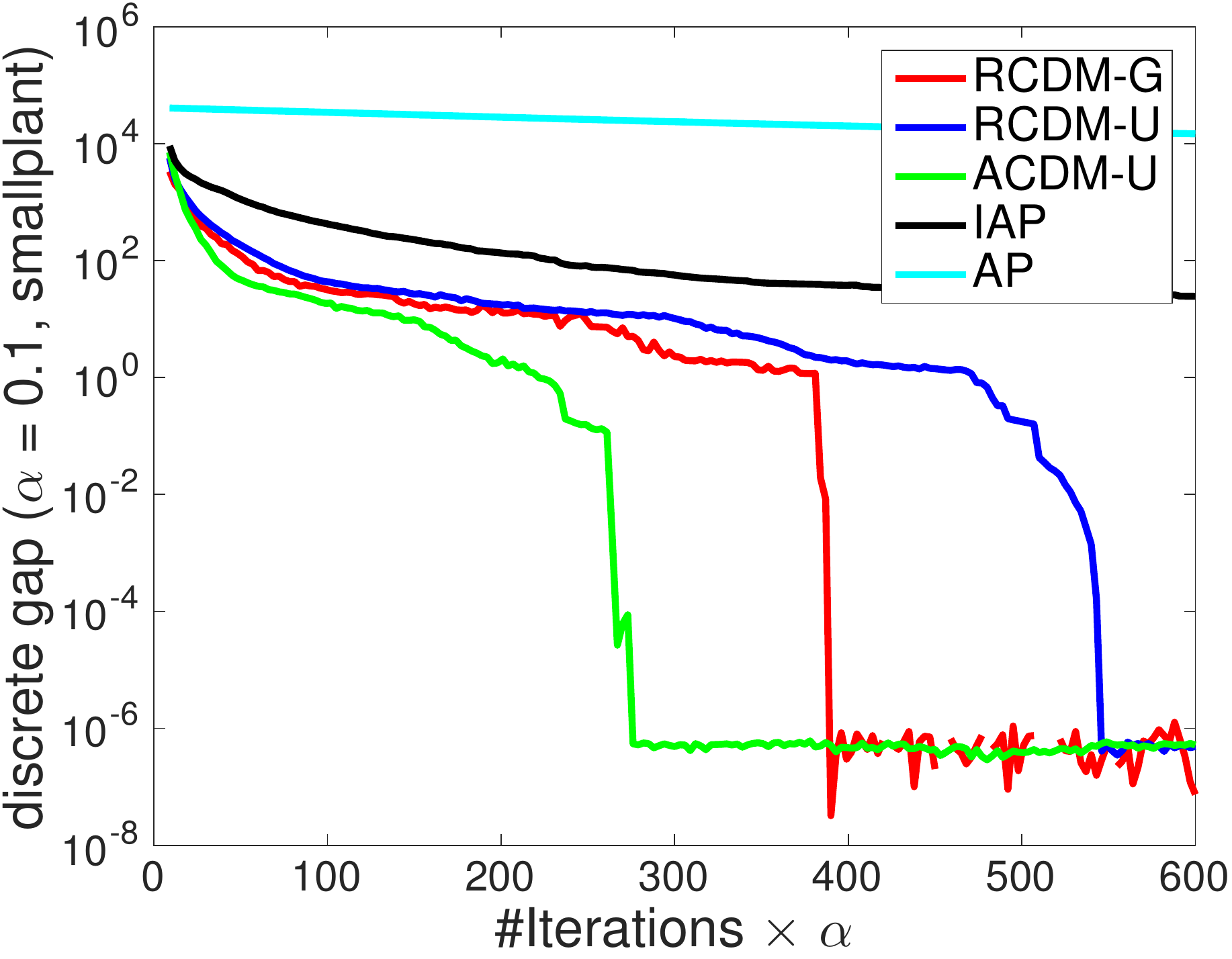}
\includegraphics[trim={0cm 0cm 0.0cm 0.0cm},clip,width=.24\textwidth]{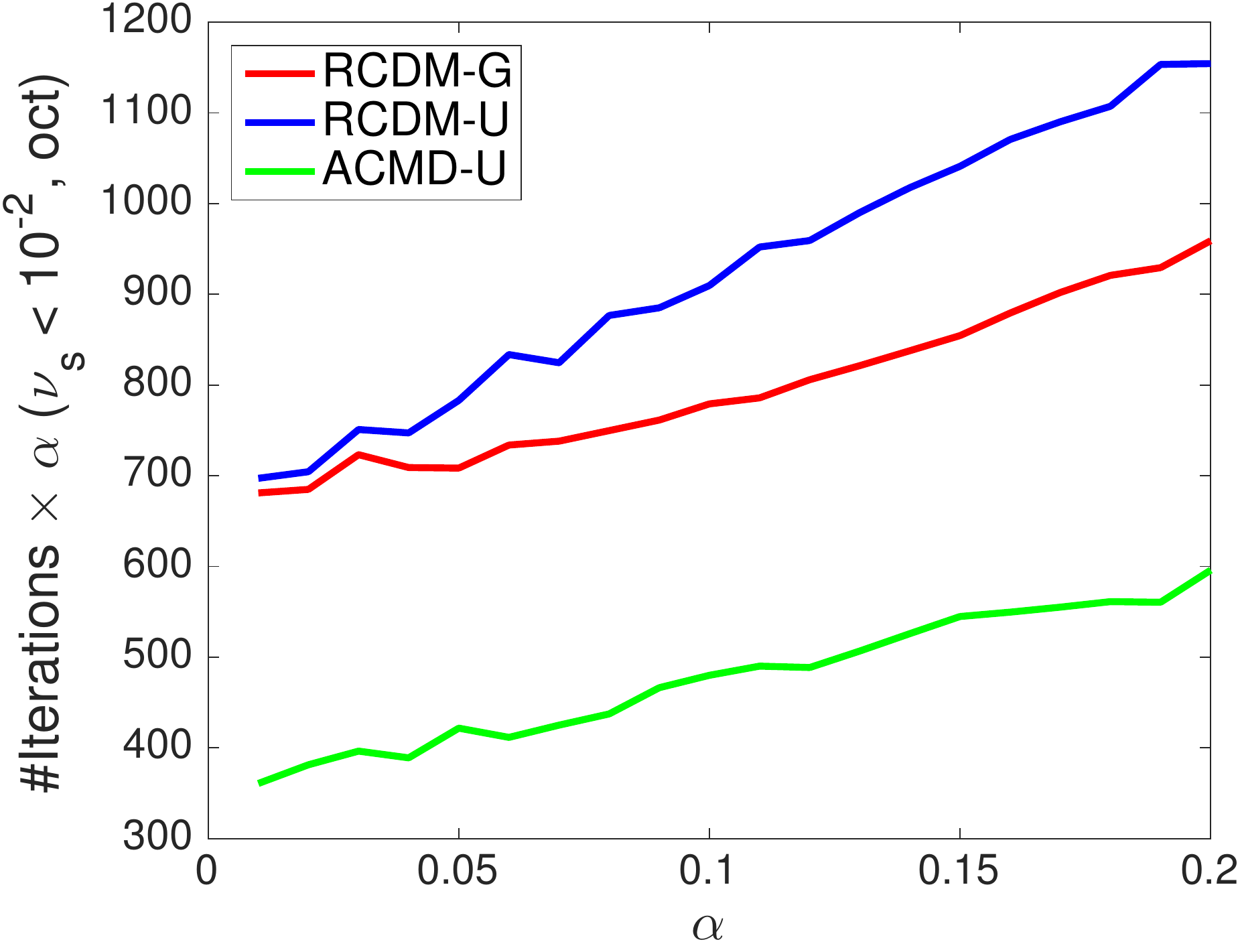}
\includegraphics[trim={0cm 0cm 0.0cm 0.0cm},clip, width=.24\textwidth]{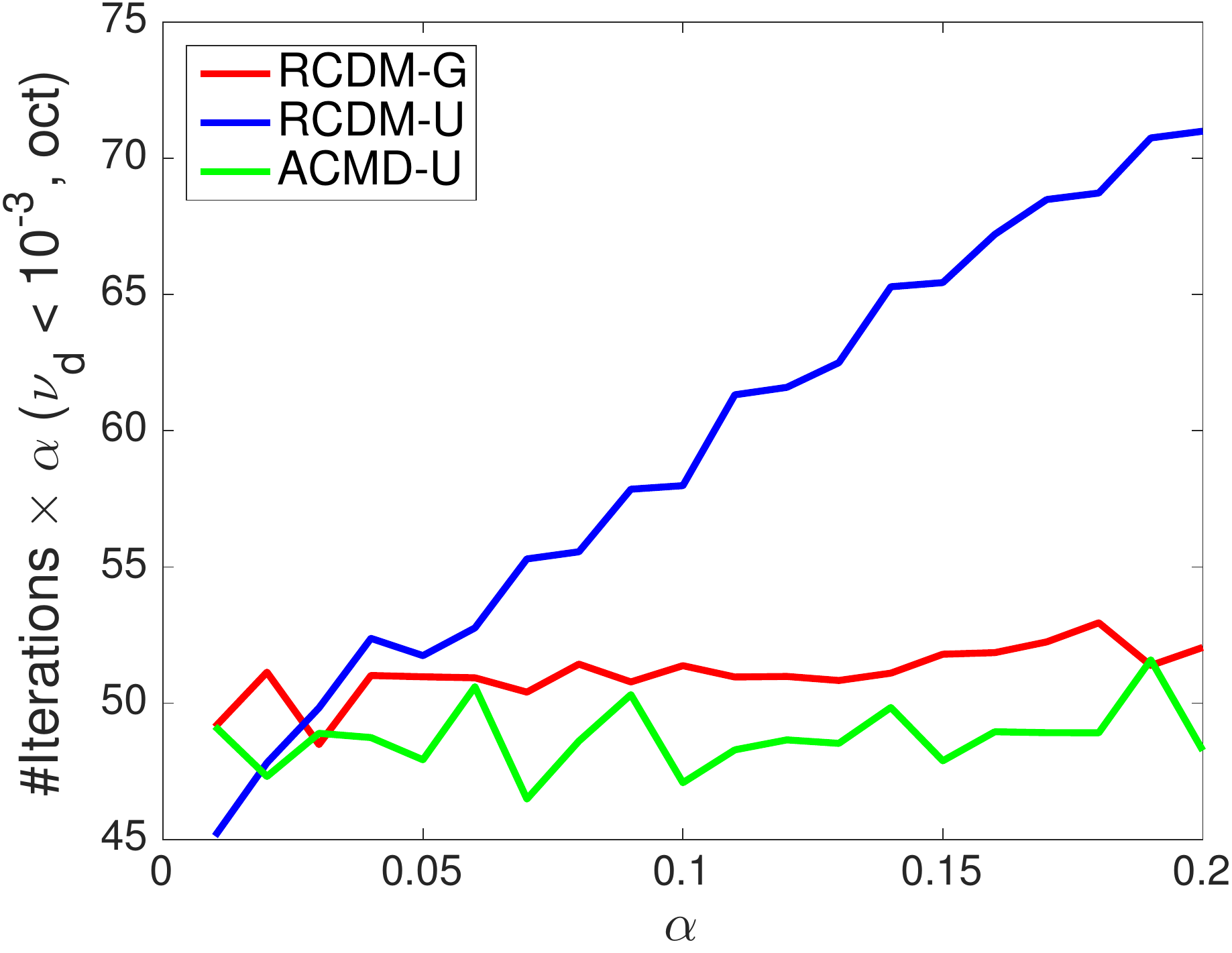}
\includegraphics[trim={0cm 0cm 0.0cm 0.0cm},clip,width=.24\textwidth]{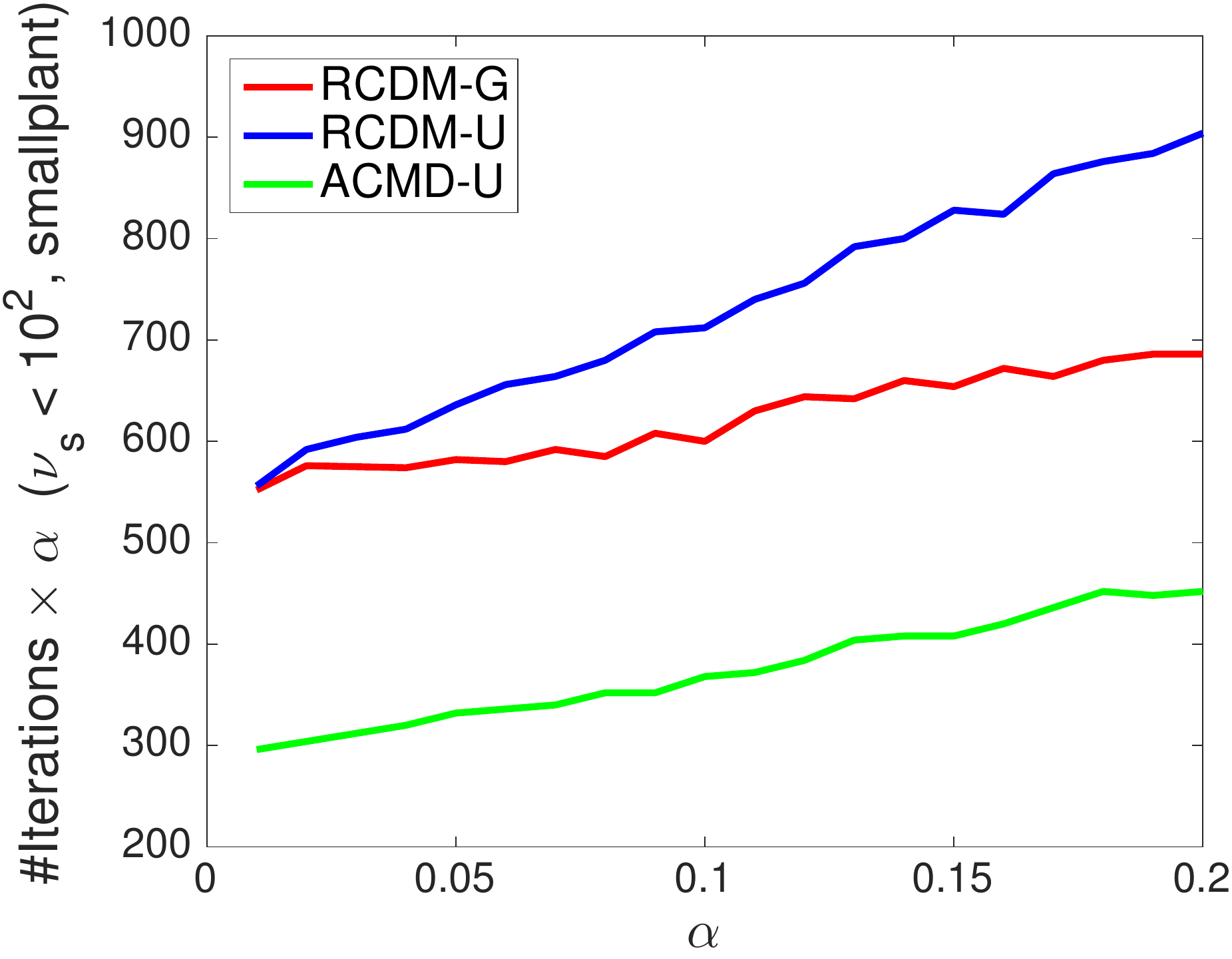}
\includegraphics[trim={0cm 0cm 0.0cm 0.0cm},clip, width=.24\textwidth]{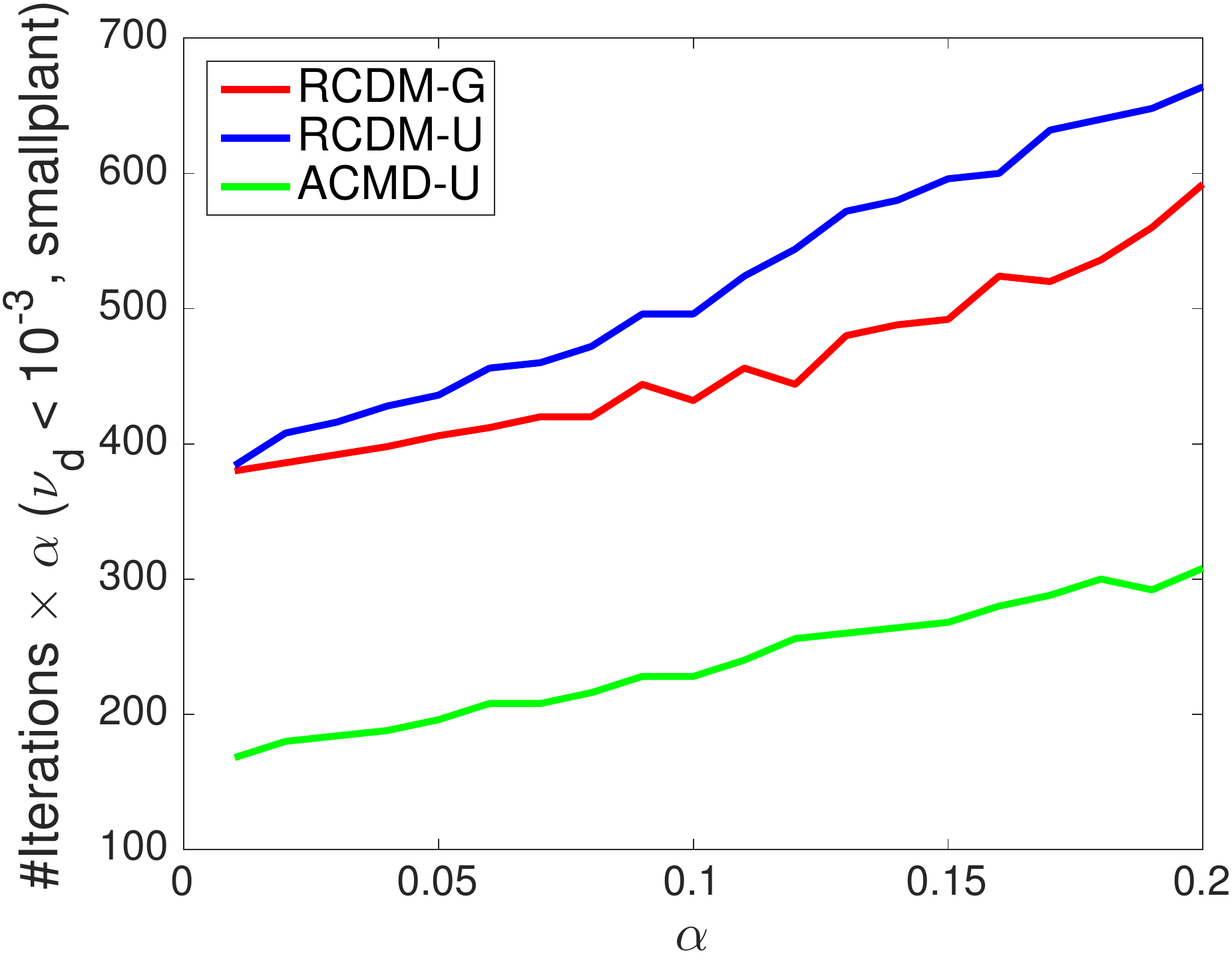}
\vspace{-0.1cm}
\caption{Image segmentation example. First row: Gap vs the number of iterations $\times \alpha$. Second row: The number of iterations $\times \alpha$ vs $\alpha$. Here, $\alpha$ is the parallelization parameter, while $K = \alpha R$ equals the number of projections that have to be computed in each iteration.}
\label{imageseg}
\vspace{-0.5cm}
\end{figure*}

In what follows, we illustrate the performance of the newly proposed DSFM algorithms on a benchmark datasets used for MAP inference in image segmentation~\cite{stobbe2010efficient} and used for semi-supervised learning over graphs. More experiments on semi-supervised learning over hypergraphs can be found in Section~\ref{supexp} of the Supplement.

In all the experiments, we evaluated the convergence rate of the algorithms by using the smooth duality gap $\nu_s$ and the discrete duality gap $\nu_d$. The primal problem solution equals $x = -Ay$ so that the smooth duality gap can be computed according to $\nu_s = \sum_{r} f_r(x) + \frac{1}{2}\|x\|^2 - (-\frac{1}{2}\|Ay\|^2)$. Moreover, as the level set $S_{\lambda} = \{v\in [N] | x_v>\lambda\}$ can be easily found based on $x$, the discrete duality gap can be written as $\nu_d = \min_{\lambda} F(S_{\lambda}) - \sum_{v\in [N]} \min\{-x_v, 0\} $. 

\textbf{MAP inference}. We used two images -- \emph{oct} and \emph{smallplant} -- adopted from~\cite{jegelka2013reflection}\footnote{Downloaded from the website of Professor Stefanie Jegelka: http://people.csail.mit.edu/stefje/code.html}. The images comprise $640\times 427$ pixels so that $N = 273,280$. The decomposable submodular functions are constructed following a standard procedure. The first class of functions arises from the $4$-neighbor grid graph over the pixels. Each edge corresponds to a pairwise potential between two adjacent pixels $i,j$ that follows the formula $\exp(-\|v_i - v_j\|_2^2),$ where $v_i$ is the RGB color vector of pixel $i$. We split the vertical and horizontal edges into rows and columns that result in $639+426=1065$ components in the decomposition. Note that within each row or each column, the edges have no overlapping pixels, so the projections of these submodular functions onto the base polytopes reduce to projections onto the base polytopes of edge-like submodular functions. The second class of submodular functions contain clique potentials corresponding to the superpixel regions; specifically, for region $r$, $F_r(S) = |S|(|S_r| -|S|)$~\cite{levinshtein2009turbopixels}. These functions give another $500$ decomposition components. We apply the divide and conquer method in~\cite{jegelka2013reflection} to compute the projections required for this type of submodular functions. Note that in each experiment, all components of the submodular function are of nearly the same size, and thus the projections performed for different components incur similar computational costs. As the projections represent the primary computational units, for comparative purposes we use the number of iterations (similarly to~\cite{jegelka2013reflection,ene2015random}).

We compared five algorithms: RCDM with a sampling distribution $P$ found by the greedy algorithm (RCDM-G), RCDM with uniform sampling (RCDM-U), ACDM with uniform sampling (ACDM-U), AP based on~\eqref{newdistance} (IAP) and AP based on~\eqref{distance} (AP). Figure~\ref{imageseg} depicts the results. In the first row, we compared the convergence rates of different algorithms for a fixed parallelization parameter $\alpha=0.1$. The values on the horizontal axis correspond to $\#$ iterations $ \times \, \alpha$, the total number of projections performed divided by $R$. The results are averaged over $10$ independent experiments. We observe that the CD-based methods outperform AP-based methods, and that ACDM-U is the best performing CD-based method. IAP significantly outperforms AP. Similarly, RCDM-G outperforms RCDM-U. We also investigated the relationship between the number of iterations and the parameter $\alpha$. 
We recorded the number of iterations needed to achieve a smooth and discrete gap below a certain given threshold. The results are shown in the second row of Figure~\ref{imageseg}. We did not plot the curves for the AP-based methods as they are essentially horizontal lines. Among the CD-based methods, ACDM-U performs best. RCDM-G offers a much better convergence rate than RCDM-U since the sampling probability $P$ produced by the greedy algorithm leads to a smaller value of $\|\theta^P\|_{1,\infty}$ compared to uniform sampling. The reason behind this finding is that the supports of the components in the decomposition are localized, which makes the sampling $P$ obtained from the greedy algorithm highly effective. For RCDM-U, the total number of iterations increases almost linearly with $\alpha$ ($=K/R$), which confirms the results of Lemma~\ref{eqnorm}. 

Note that in the above examples of MAP inference, another way to decompose the submodular functions is available: as there are three natural layers of non-overlapping incidence sets, we can merge all vertical edges, all horizontal edges, and all superpixel regions into three components respectively. Then, each of this component is incident to all pixels, and the derived results in this work will reduce to those of the former works~\cite{jegelka2013reflection,ene2015random}. However, such a way to decompose submodular function strongly depends on the particular structure and thus is not general for DSFM problems. The following example on semi-supervised learning over graphs does not contain natural layers for decomposition.   

\begin{figure}[t]
\centering
\includegraphics[trim={0cm 0cm 0cm 0cm},clip,width=.24\textwidth]{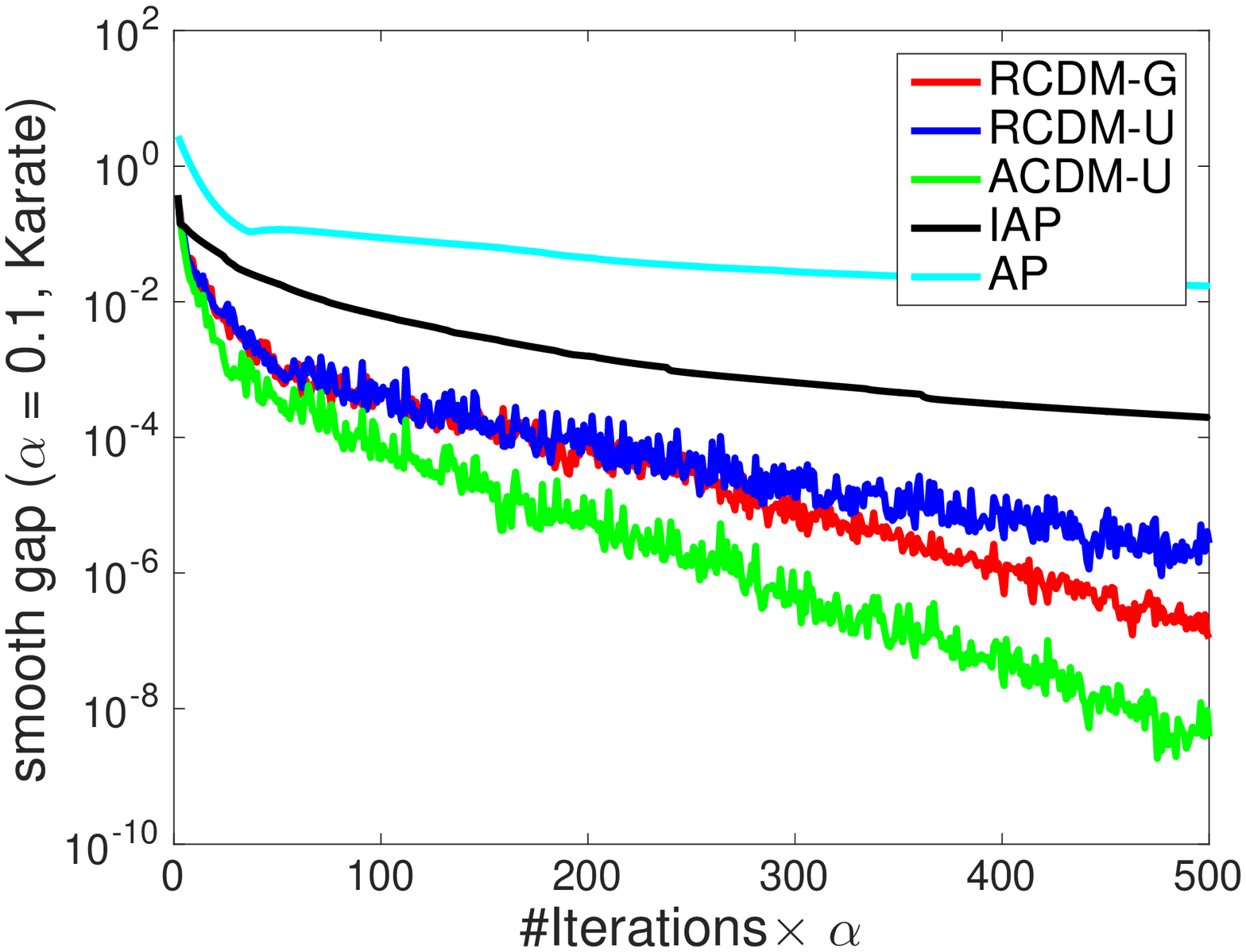}
\includegraphics[trim={0cm 0cm 0cm 0cm},clip, width=.24\textwidth]{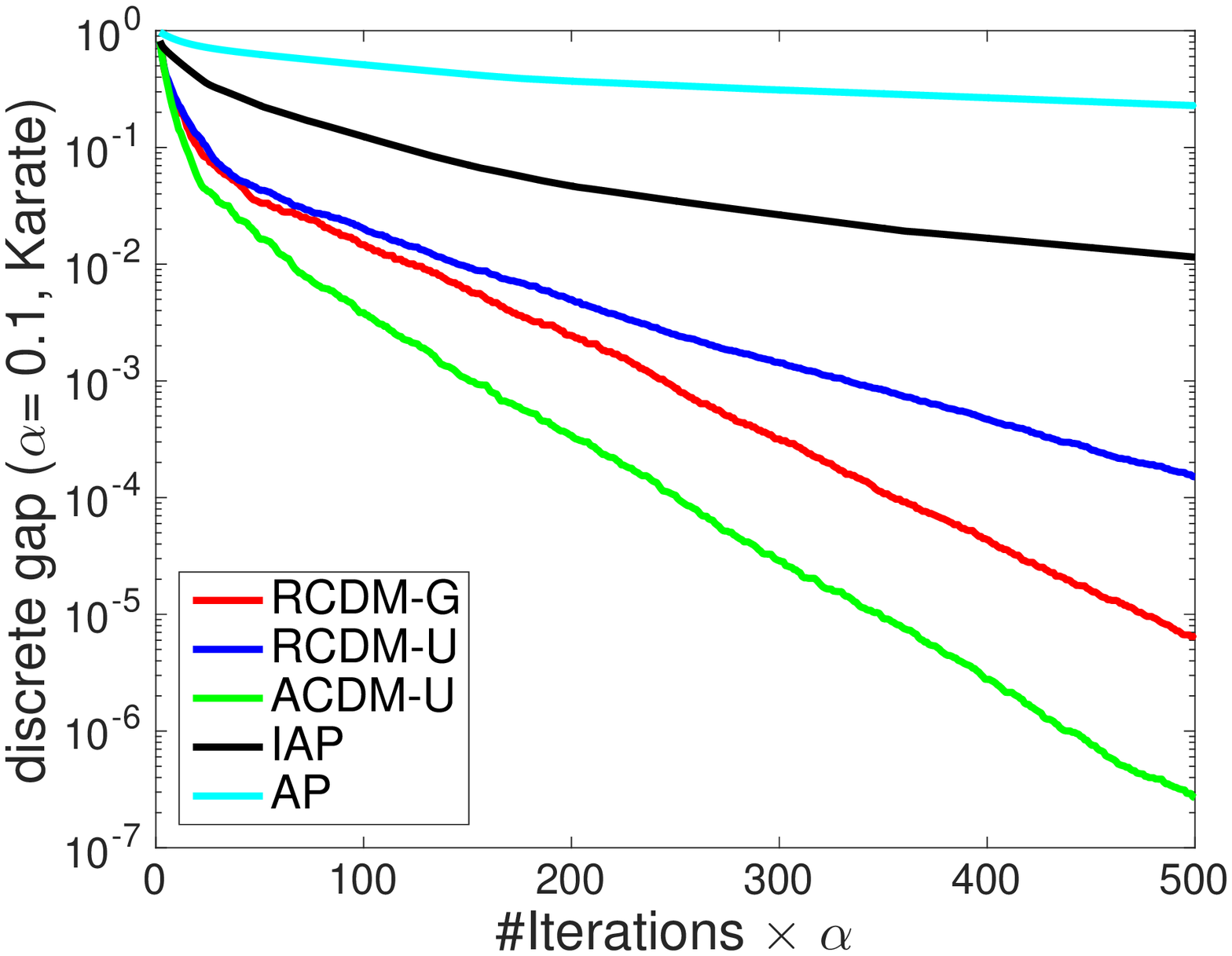}
\includegraphics[trim={0cm 0cm 0cm 0cm},clip,width=.24\textwidth]{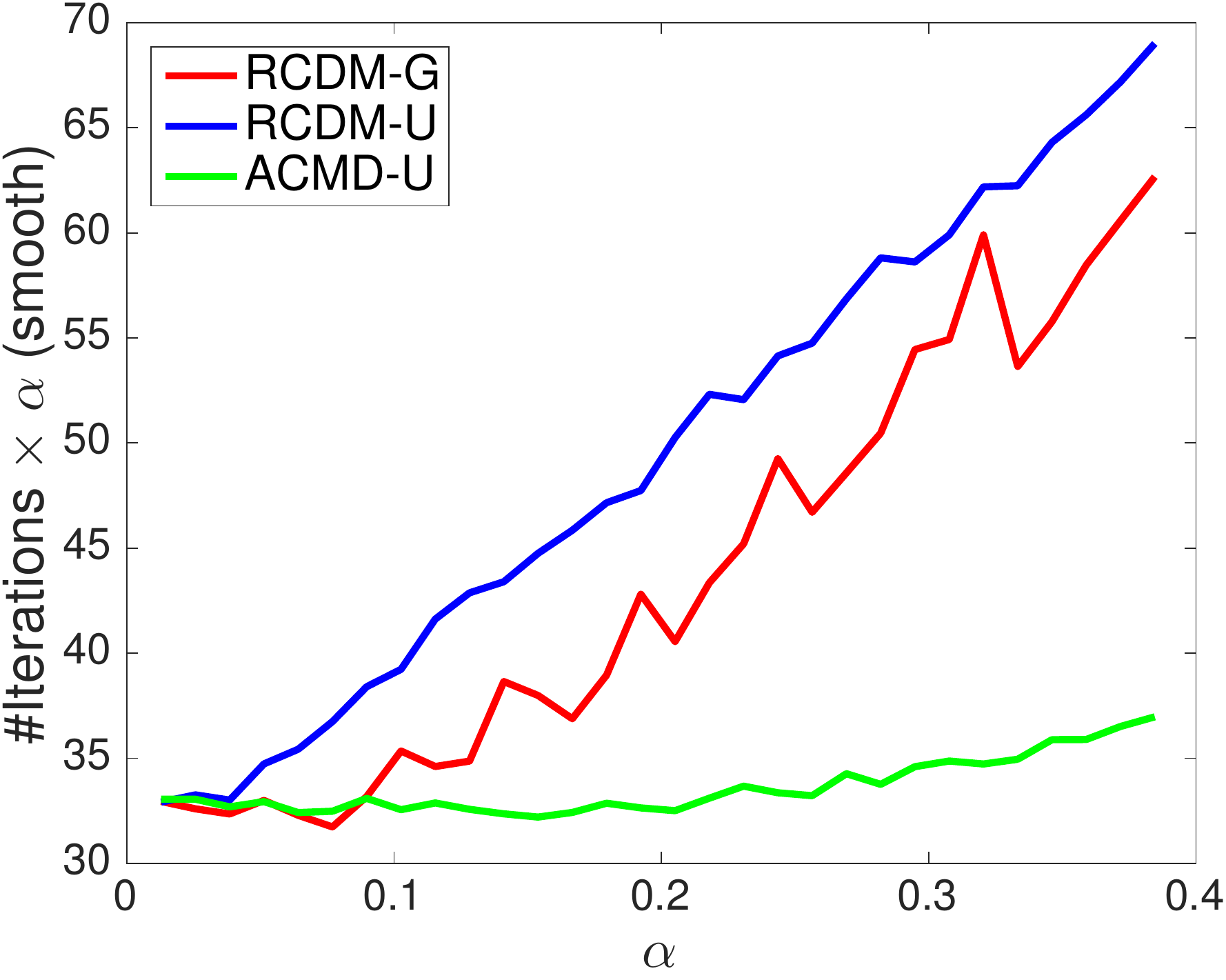}
\includegraphics[trim={0cm 0cm 0cm 0cm},clip, width=.24\textwidth]{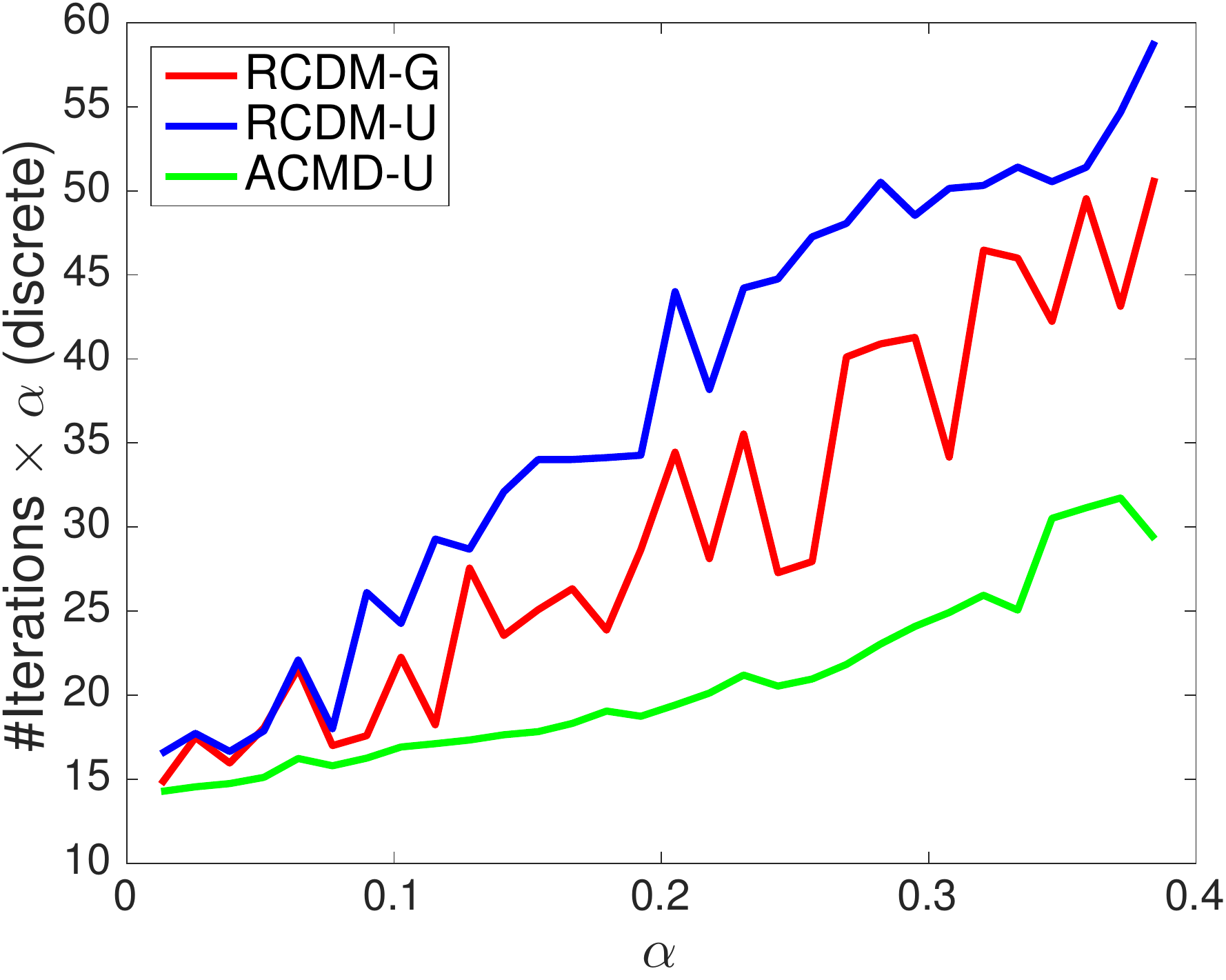}\\
\caption{Zachary's Karate Club. Left two: Gap vs the number of iterations $\times \alpha$. Right two: The number of iterations $\times \alpha$ vs $\alpha$. Here, $\alpha$ is the parallelization parameter, while $K = \alpha R$ equals the number of projections that have to be computed in each iteration.}
\label{karate}
\end{figure}

\textbf{Semi-supervised learning}.  We tested our algorithms over the dataset of Zachary's karate club~\cite{zachary1977information}. This dataset is used as a benchmark example for evaluating semisupervised learning algorithms over graphs~\cite{kipf2016semi}. It includes $N =34 $ vertices and $R=78$ submodular functions in the decomposition, each corresponding to one edge in the network. The objective function of both semi-supervised learning problems may be written as 
\begin{align}\label{semisupervise}
\min_{x} \tau\sum_{r\in [R]} f_r(x) + \frac{1}{2}\|x- x_0\|_2^2
\end{align}
where $\tau$ is a parameter that needs to be tuned, and $x_0 \in \{-1,0,1\}^N,$ so that the nonzero components correspond to the labels that are known a priori. In our case, as we are only concerned with the convergence rate of the algorithm, we fix $\tau = 0.1$. In the experiments for Zachary's karate club, we set $x_0(1) = 1$, $x_0(34) = -1$ and let all other components of $x_0$ be equal to zero. 

Figure~\ref{karate} shows the results of the experiments pertaining to Zachary's karate club. In the left two subfigures, we compared the convergence rates of different algorithms for a fixed parallelization parameter $\alpha=0.1$. The values on the horizontal axis correspond to $\#$ iterations $ \times \, \alpha$, the total number of projections performed divided by $R$. In the right two subfigures, we controlled the numbers of projections executed within one iteration by tuning the parameter $\alpha$ and recorded the number of iterations needed to achieve smooth/discrete gaps below $10^{-3}$. The values depicted on the vertical axis correspond to $\#$ iterations $\times \alpha$, describing the total number of projections needed to achieve the given accuracy. In all cases, we see the similar tendency to that of the MAP inference. As may be seen, AP-based methods require more projections than CD-based methods, but IAP consistently outperforms AP, which is consistent with our theoretical results. Among the CD-based methods, ACDM-U offers the best performance in general, and RCDM-G slightly outperforms RCDM-U, since the greedy algorithm used for sampling produces a smaller $\|\theta^P\|_{1,\infty}$ than uniform sampling. As the AP-based methods are completely parallelizable, and increasing the parameter $\alpha$ does not increase the total number of projections. However, for RCDM-U, the total number of iterations required increases almost linearly with $\alpha$, which is supported by the result in Lemma 3.12. The performance curve for RCDM-G exhibits large oscillations due to the discrete problem component, needed for finding a balanced partition. 

\section{Acknowledgement}
The authors gratefully acknowledge many useful suggestions by the reviewers. This work was supported in part by the NSF grant CCF 15-27636, the NSF Purdue 4101-38050 and the NFT STC center Science of Information. 

\bibliographystyle{IEEETran}
\bibliography{example_paper}

\begin{thebibliography}{10}
\providecommand{\url}[1]{#1}
\csname url@samestyle\endcsname
\providecommand{\newblock}{\relax}
\providecommand{\bibinfo}[2]{#2}
\providecommand{\BIBentrySTDinterwordspacing}{\spaceskip=0pt\relax}
\providecommand{\BIBentryALTinterwordstretchfactor}{4}
\providecommand{\BIBentryALTinterwordspacing}{\spaceskip=\fontdimen2\font plus
\BIBentryALTinterwordstretchfactor\fontdimen3\font minus
  \fontdimen4\font\relax}
\providecommand{\BIBforeignlanguage}[2]{{%
\expandafter\ifx\csname l@#1\endcsname\relax
\typeout{** WARNING: IEEEtran.bst: No hyphenation pattern has been}%
\typeout{** loaded for the language `#1'. Using the pattern for}%
\typeout{** the default language instead.}%
\else
\language=\csname l@#1\endcsname
\fi
#2}}
\providecommand{\BIBdecl}{\relax}
\BIBdecl

\bibitem{fujishige2005submodular}
S.~Fujishige, \emph{Submodular functions and optimization}.\hskip 1em plus
  0.5em minus 0.4em\relax Elsevier, 2005, vol.~58.

\bibitem{wei2015submodularity}
K.~Wei, R.~Iyer, and J.~Bilmes, ``Submodularity in data subset selection and
  active learning,'' in \emph{Proceedings of the International Conference on
  Machine Learning}, 2015, pp. 1954--1963.

\bibitem{li2017inhomogeneous}
P.~Li and O.~Milenkovic, ``Inhomogeneous hypergraph clustering with
  applications,'' in \emph{Advances in Neural Information Processing Systems},
  2017, pp. 2305--2315.

\bibitem{li2018submodular}
------, ``Submodular hypergraphs: p-laplacians, cheeger inequalities and
  spectral clustering,'' \emph{arXiv preprint arXiv:1803.03833}, 2018.

\bibitem{kohli2009robust}
P.~Kohli, P.~H. Torr \emph{et~al.}, ``Robust higher order potentials for
  enforcing label consistency,'' \emph{International Journal of Computer
  Vision}, vol.~82, no.~3, pp. 302--324, 2009.

\bibitem{lin2011class}
H.~Lin and J.~Bilmes, ``A class of submodular functions for document
  summarization,'' in \emph{Proceedings of the Meeting of the Association for
  Computational Linguistics: Human Language Technologies-Volume 1}.\hskip 1em
  plus 0.5em minus 0.4em\relax Association for Computational Linguistics, 2011,
  pp. 510--520.

\bibitem{krause2007near}
A.~Krause and C.~Guestrin, ``Near-optimal observation selection using
  submodular functions,'' in \emph{Proceedings of the AAAI Conference on
  Artificial Intelligence}, vol.~7, 2007, pp. 1650--1654.

\bibitem{lee2015faster}
Y.~T. Lee, A.~Sidford, and S.~C.-w. Wong, ``A faster cutting plane method and
  its implications for combinatorial and convex optimization,'' in
  \emph{Foundations of Computer Science (FOCS), 2015 IEEE 56th Annual Symposium
  on}.\hskip 1em plus 0.5em minus 0.4em\relax IEEE, 2015, pp. 1049--1065.

\bibitem{stobbe2010efficient}
P.~Stobbe and A.~Krause, ``Efficient minimization of decomposable submodular
  functions,'' in \emph{Advances in Neural Information Processing Systems},
  2010, pp. 2208--2216.

\bibitem{kolmogorov2012minimizing}
V.~Kolmogorov, ``Minimizing a sum of submodular functions,'' \emph{Discrete
  Applied Mathematics}, vol. 160, no.~15, pp. 2246--2258, 2012.

\bibitem{ene2017decomposable}
A.~Ene, H.~Nguyen, and L.~A. V{\'e}gh, ``Decomposable submodular function
  minimization: discrete and continuous,'' in \emph{Advances in Neural
  Information Processing Systems}, 2017, pp. 2874--2884.

\bibitem{bach2013learning}
F.~Bach \emph{et~al.}, ``Learning with submodular functions: A convex
  optimization perspective,'' \emph{Foundations and Trends{\textregistered} in
  Machine Learning}, vol.~6, no. 2-3, pp. 145--373, 2013.

\bibitem{lovasz1983submodular}
L.~Lov{\'a}sz, ``Submodular functions and convexity,'' in \emph{Mathematical
  Programming The State of the Art}.\hskip 1em plus 0.5em minus 0.4em\relax
  Springer, 1983, pp. 235--257.

\bibitem{jegelka2013reflection}
S.~Jegelka, F.~Bach, and S.~Sra, ``Reflection methods for user-friendly
  submodular optimization,'' in \emph{Advances in Neural Information Processing
  Systems}, 2013, pp. 1313--1321.

\bibitem{nishihara2014convergence}
R.~Nishihara, S.~Jegelka, and M.~I. Jordan, ``On the convergence rate of
  decomposable submodular function minimization,'' in \emph{Advances in Neural
  Information Processing Systems}, 2014, pp. 640--648.

\bibitem{ene2015random}
A.~Ene and H.~Nguyen, ``Random coordinate descent methods for minimizing
  decomposable submodular functions,'' in \emph{Proceedings of the
  International Conference on Machine Learning}, 2015, pp. 787--795.

\bibitem{karger1993global}
D.~R. Karger, ``Global min-cuts in {RNC}, and other ramifications of a simple
  min-cut algorithm.'' in \emph{Proceedings of the ACM-SIAM Symposium on
  Discrete Algorithms}, vol.~93, 1993, pp. 21--30.

\bibitem{chekuri2017computing}
C.~Chekuri and C.~Xu, ``Computing minimum cuts in hypergraphs,'' in
  \emph{Proceedings of the ACM-SIAM Symposium on Discrete Algorithms}.\hskip
  1em plus 0.5em minus 0.4em\relax Society for Industrial and Applied
  Mathematics, 2017, pp. 1085--1100.

\bibitem{djolonga2015scalable}
J.~Djolonga and A.~Krause, ``Scalable variational inference in log-supermodular
  models.'' in \emph{ICML}, 2015, pp. 1804--1813.

\bibitem{meyer1973equitable}
W.~Meyer, ``Equitable coloring,'' \emph{The American Mathematical Monthly},
  vol.~80, no.~8, pp. 920--922, 1973.

\bibitem{wolfe1976finding}
P.~Wolfe, ``Finding the nearest point in a polytope,'' \emph{Mathematical
  Programming}, vol.~11, no.~1, pp. 128--149, 1976.

\bibitem{chakrabarty2014provable}
D.~Chakrabarty, P.~Jain, and P.~Kothari, ``Provable submodular minimization
  using {W}olfe's algorithm,'' in \emph{Advances in Neural Information
  Processing Systems}, 2014, pp. 802--809.

\bibitem{karimi2016linear}
H.~Karimi, J.~Nutini, and M.~Schmidt, ``Linear convergence of gradient and
  proximal-gradient methods under the polyak-{\l}ojasiewicz condition,'' in
  \emph{Joint European Conference on Machine Learning and Knowledge Discovery
  in Databases}.\hskip 1em plus 0.5em minus 0.4em\relax Springer, 2016, pp.
  795--811.

\bibitem{nesterov2012efficiency}
Y.~Nesterov, ``Efficiency of coordinate descent methods on huge-scale
  optimization problems,'' \emph{SIAM Journal on Optimization}, vol.~22, no.~2,
  pp. 341--362, 2012.

\bibitem{hajnal1970proof}
A.~Hajnal and E.~Szemer{\'e}di, ``Proof of a conjecture of {E}rd{\"o}s,''
  \emph{Combinatorial Theory and Its Applications}, vol.~2, pp. 601--623, 1970.

\bibitem{levinshtein2009turbopixels}
A.~Levinshtein, A.~Stere, K.~N. Kutulakos, D.~J. Fleet, S.~J. Dickinson, and
  K.~Siddiqi, ``Turbopixels: fast superpixels using geometric flows,''
  \emph{IEEE Transactions on Pattern Analysis and Machine Intelligence},
  vol.~31, no.~12, pp. 2290--2297, 2009.

\bibitem{zachary1977information}
W.~W. Zachary, ``An information flow model for conflict and fission in small
  groups,'' \emph{Journal of Anthropological Research}, vol.~33, no.~4, pp.
  452--473, 1977.

\bibitem{kipf2016semi}
T.~N. Kipf and M.~Welling, ``Semi-supervised classification with graph
  convolutional networks,'' \emph{arXiv preprint arXiv:1609.02907}, 2016.

\bibitem{fujishige1992new}
S.~Fujishige and X.~Zhang, ``New algorithms for the intersection problem of
  submodular systems,'' \emph{Japan Journal of Industrial and Applied
  Mathematics}, vol.~9, no.~3, p. 369, 1992.

\bibitem{fercoq2015accelerated}
O.~Fercoq and P.~Richt{\'a}rik, ``Accelerated, parallel, and proximal
  coordinate descent,'' \emph{SIAM Journal on Optimization}, vol.~25, no.~4,
  pp. 1997--2023, 2015.

\bibitem{kierstead2010fast}
H.~A. Kierstead, A.~V. Kostochka, M.~Mydlarz, and E.~Szemer{\'e}di, ``A fast
  algorithm for equitable coloring,'' \emph{Combinatorica}, vol.~30, no.~2, pp.
  217--224, 2010.

\bibitem{chambolle2009total}
A.~Chambolle and J.~Darbon, ``On total variation minimization and surface
  evolution using parametric maximum flows,'' \emph{International journal of
  computer vision}, vol.~84, no.~3, p. 288, 2009.

\bibitem{albert2002statistical}
R.~Albert and A.-L. Barab{\'a}si, ``Statistical mechanics of complex
  networks,'' \emph{Reviews of modern physics}, vol.~74, no.~1, p.~47, 2002.

\bibitem{hein2013total}
M.~Hein, S.~Setzer, L.~Jost, and S.~S. Rangapuram, ``The total variation on
  hypergraphs-learning on hypergraphs revisited,'' in \emph{Advances in Neural
  Information Processing Systems}, 2013, pp. 2427--2435.

\bibitem{yadati2018hypergcn}
N.~Yadati, M.~Nimishakavi, P.~Yadav, A.~Louis, and P.~Talukdar, ``Hypergcn:
  Hypergraph convolutional networks for semi-supervised classification,''
  \emph{arXiv preprint arXiv:1809.02589}, 2018.

\end{thebibliography}
 
 \newpage
 
\appendix

\begin{appendix}

\begin{center}
{\Large \textbf{Supplement}}
\end{center}
\end{appendix}

\section{Discrete Optimization Approach for Computing the Projections $\Pi_{\mathcal{B}_r, w}(\cdot)$}\label{sec:discdiscreteopt}
The following Lemma~\ref{skewedprim} describes how the projections $\Pi_{\mathcal{B}_r, w}(\cdot)$ can be performed via discrete optimization. Discrete methods are especially useful when $F_r(S)$ is concave in $|S|$, as in this case they have much smaller complexity than the min-norm algorithm of Wolfe~\cite{wolfe1976finding}. 
\begin{lemma}\label{skewedprim}
The optimization problem $\min_{y_r\in \mathcal{B}_r} \|z - y_r\|_{2,w}^2$ is the dual of the problem $\min_{x\in \mathbb{R}^{N}} f_r(x) - \langle x , z\rangle + \frac{1}{2}\|x\|_{2,w^{-1}}^2$. A solution with coordinate accuracy $\epsilon$ for the latter setting can be obtained  by solving the discrete problem
$$\min_S F_r(S) - z(S) + \lambda \sum_{i\in S_r\cap S}  w_i^{-1},$$ 
where 
\begin{align*}
\lambda\in &\left[ \min_{i\in[N]} [- F_r(\{i\}) + z(\{i\})]w_{i} ,\; \max_{i\in[N]} [F_r([N]/\{i\})- F_r([N]) + z(\{i\})]w_{i} \right],
\end{align*}
at most $\min\{|S_r|, \log{1/\epsilon} \}$ times. The parameter $\lambda$ is chosen based on a binary search procedure which requires solving the discrete problem $O(\log 1/\epsilon)$ times.
\end{lemma}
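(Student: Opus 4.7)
The plan is to establish Fenchel duality between the two continuous problems, then reduce the proximal problem to a parametric discrete minimization via the classical level-set theorem of submodular optimization, and finally identify the range of $\lambda$ over which bisection must be performed. For the duality, I would write $f_r(x) = \max_{y_r\in\mathcal{B}_r}\langle x, y_r\rangle$ so that the primal becomes $\min_x \max_{y_r\in\mathcal{B}_r}\langle x, y_r - z\rangle + \tfrac{1}{2}\|x\|_{2,w^{-1}}^2$. Since the Lagrangian is convex in $x$, concave in $y_r$, and $\mathcal{B}_r$ is compact, Sion's minimax theorem lets me exchange $\min$ and $\max$. The inner minimization over $x$ is coordinate-separable, solved by $x_i = w_i(z_i - y_{r,i})$ with optimal value $-\tfrac{1}{2}\|z-y_r\|_{2,w}^2$; this yields the stated duality together with the primal-dual correspondence $x^* = w \odot (z - y_r^*)$.

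For the discrete reduction, I would first observe that if $i\notin S_r$ then $f_r$ does not depend on $x_i$, so the proximal objective decouples in this coordinate with $x_i^* = w_i z_i$; hence only the coordinates in $S_r$ require combinatorial work. On $S_r$ I would invoke the classical threshold/level-set theorem for minimization of a Lov\'asz extension plus separable convex terms: for any $\lambda\in\mathbb{R}$, the super-level set $\{i\in S_r : x_i^*\geq \lambda\}$ minimizes $S\mapsto F_r(S) + \sum_{i\in S\cap S_r} h_i'(\lambda)$, where $h_i(t) = -z_i t + \tfrac{1}{2}w_i^{-1}t^2$ and hence $h_i'(\lambda) = -z_i + w_i^{-1}\lambda$. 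Substituting gives precisely $\min_S F_r(S) - z(S) + \lambda\sum_{i\in S_r\cap S} w_i^{-1}$ as asserted. The underlying proof uses the greedy characterization of $\mathcal{B}_r$ and the subdifferential of $f_r$, neither of which is affected by the diagonal reweighting $w^{-1}$.

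Finally, I would pin down the search interval from base-polytope extremality: any $y_r\in\mathcal{B}_r$ satisfies $F_r([N]) - F_r([N]\setminus\{i\}) \leq y_{r,i} \leq F_r(\{i\})$, where the lower bound follows from $y_r([N])=F_r([N])$ and $y_r([N]\setminus\{i\})\leq F_r([N]\setminus\{i\})$. Substituting $y_{r,i}^* = z_i - w_i^{-1}x_i^*$ and solving for $x_i^*$ gives $(z_i - F_r(\{i\}))w_i \leq x_i^* \leq (z_i - F_r([N]) + F_r([N]\setminus\{i\}))w_i$, whose envelope over $i\in[N]$ is exactly the $\lambda$-interval stated in the lemma. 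A standard bisection on this interval localizes each $x_i^*$ to accuracy $\epsilon$ after $O(\log 1/\epsilon)$ discrete solves; alternatively, since $x^*$ takes at most $|S_r|+1$ distinct values, a Groenevelt-style divide-and-conquer recovers all breakpoints in at most $|S_r|$ solves, giving the $\min\{|S_r|,\log 1/\epsilon\}$ bound. The one delicate point is verifying that the level-set theorem transfers cleanly under the $w^{-1}$-weighted proximal term and under the restriction to $S_r$; this is ultimately routine because each $h_i$ remains smooth and strictly convex and the submodular structure of $F_r$ is untouched, but it requires careful bookkeeping of the coordinates outside $S_r$ and of the relationship between strict and non-strict level sets.
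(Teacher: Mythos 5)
Your proposal is correct and follows essentially the same route as the paper, whose proof consists only of the remark that the duality follows from $f_r(x)=\max_{y_r\in\mathcal{B}_r}\langle y_r,x\rangle$ plus ``simple algebra'' and that the discrete reduction follows from the divide-and-conquer algorithm in Appendix B of Jegelka et al.; your minimax exchange with the explicit optimizer $x_i = w_i(z_i - y_{r,i})$ is exactly that algebra, and your level-set argument with the base-polytope bounds $F_r([N])-F_r([N]\setminus\{i\})\leq y_{r,i}\leq F_r(\{i\})$ is precisely what underlies the cited divide-and-conquer procedure and the stated $\lambda$-interval. You have simply supplied the details the paper delegates to the reference.
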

\begin{proof}
The first statement follows from $f_r(x) = \max_{y_r\in \mathcal{B}_r} \langle y_r, x\rangle$ and some simple algebra. The second claim follows from the divide and conquer algorithm described in Appendix B of~\cite{jegelka2013reflection}.
\end{proof}

\section{Proof of Lemma~\ref{lemmakappabound}}
The first part of the proof follows along the same line as the corresponding proof of Ene et al.~\cite{ene2017decomposable} which is based on a submodular auxiliary graph and the path-augmentation algorithm~\cite{fujishige1992new}, described in what follows. 

Let $G = (V, E)$ be a directed graph such that the vertex set $V$ corresponds to the elements in $[N]$, and where the arc set may be written as $E = \cup_{r\in[R]} E_r$, with $E_r$ corresponding to a complete directed graph on the set of elements $S_r$ incident to $F_r$. With each arc $(u,v)$, we associate a capacity value based on a $y'\in\mathcal{B}$ according to $c(u,v)\triangleq \min\{f_{r}(S) - y_r'(S): S\subseteq S_r, u\in S, v\notin S\}$. 

Next, we consider a procedure termed path augmentations over $G$ that sequentially transforms $y'$ from $y'=y$ to a point in $\mathcal{B}$ that satisfies $Ay'= z$; the vector $y'$ is kept within $\mathcal{B}$ during the whole procedure. Let the set of source and sink nodes of the graph be defined as $N \triangleq \{v\in [N] | (Ay')_v < z_v\}$ and $P \triangleq \{v\in[N] |(Ay')_v > z_v\}$, where $z$ is as defined in the statement of the lemma. If $N = P =\emptyset $, we have $Ay' = z$. It can be shown that there always exists a directed path with positive capacity from $N$ to $P$ unless $N = P =\emptyset$~\cite{ene2017decomposable}. In each step, we find the shortest directed path, denoted by $\mathcal{Q}$, with positive capacity from $N$ to $P$. For each arc $(u,v)$ in $\mathcal{Q}$, if the arc belongs to $E_r$, we set $y'_{r,u} \leftarrow y'_{r,u} + \rho$, $y'_{r,v} \leftarrow y'_{r,v} - \rho,$ where $\rho$ denotes the smallest capacity of any arc in $\mathcal{Q}$. This procedure ensures that $y'\in\mathcal{B}$ and that the procedure terminates in a finite number of steps, with $N = P =\emptyset$~\cite{fujishige1992new}. 
  
The second part of the proof differs from the derivations of Ene et al.~\cite{ene2017decomposable}. 
Suppose that $\{y'^{(0)} =y, y'^{(1)}, ..., y'^{(t)}\}$ is a sequence such that $y'^{(i)}$ equals the vector $y'$ after the $i$-th step of the above procedure. We also assume that $Ay'^{(t)} = z,$ implying that the algorithm terminated at step $t$. Hence, the point $y'^{(t)}$ is the desired value of $\xi$. During path-augmentation, no element appears in more than two updated arcs. Hence, 
$$\|y'^{(i)} - y'^{(i-1)}\|_{2, \theta}\leq \sqrt{2\sum_{v} \max_{r\in[R]: v\in S_r}\theta_{r,v}} \rho=\sqrt{2\|\theta\|_{1,\infty}} \rho.$$ 
As $\|Ay'^{(i)} - Ay'^{(i-1)}\|_{1} = 2 \rho$, we have 
$$\|y'^{(i)} - y'^{(i-1)}\|_{2, \theta} \leq \sqrt{\frac{\|\theta\|_{1,\infty}}{2}} \|Ay'^{(i)} - Ay'^{(i-1)}\|_{1}.$$ 
An important observation is that during the path-augmentation procedure, for each component $v\in[N]$, the updated sequence $\{(Ay'^{(i)})_v\}_{i=1,2,..,t}$ converges monotonically to $z_v$. Hence, $\|Ay'^{(t)} - Ay'^{(0)}\|_{1} = \sum_{i=1}^t \|Ay'^{(i)} - Ay'^{(i-1)}\|_{1}$. By using the triangle inequality for the norm $\|\cdot\|_{2,\theta}$, we obtain 
\begin{align*}
\sqrt{\frac{\|\theta\|_{1,\infty}}{2}} \|z - Ay\|_{1} & =  \sqrt{\frac{\|\theta\|_{1,\infty}}{2}} \|Ay'^{(t)} - Ay'^{(0)}\|_{1} \geq  \sum_{i=1}^t\|y'^{(i)} - y'^{(i-1)}\|_{2, \theta}\\
& \geq \|y'^{(t)} - y'^{(0)}\|_{2, \theta} = \|y'^{(t)} - y\|_{2, \theta}. 
\end{align*}
Invoking the Cauchy-Schwarz inequality establishes $\|z - Ay\|_{1} \leq \sqrt{\|w^{-1}\|_1}\|z - Ay\|_{2,w}$, which concludes the proof.

\section{Proof for Lemma~\ref{dualprob}}
The equivalence between problem~\eqref{newdistance} and problem~\eqref{compact} is easy to establish, as $y$ is obtained from $y'$ by simply removing its zero components. The second statement is proved as follows: 
 \begin{align}
 & \min_{y\in \mathcal{B}}\min_{a: Aa= 0, a_{r,i} = 0,\; \forall (r, i): i\notin S_r} \frac{1}{2}\|  y - a\|_{2, I(\mu)}^2 \nonumber \\
 =& \min_{y\in \mathcal{B}} \min_{a: a_{r,i} = 0,\; \forall (r, i): i\notin S_r} \max_{\lambda \in \mathbb{R}^N}\frac{1}{2}\|  y - a\|_{2, I(\mu)}^2 - \langle \lambda,  Aa  \rangle \nonumber \\
  \stackrel{1)}{=}& \min_{y\in \mathcal{B}}  \max_{\lambda \in \mathbb{R}^N}  \min_{a\in \otimes_{r=1}^R R^N} \frac{1}{2}\sum_{r\in [R]}\sum_{i\in S_r}[\mu_i(y_{r,i} - a_{r,i})^2 - 2\lambda_i a_{r,i}] \nonumber \\
 =&  \min_{y\in \mathcal{B}}  \max_{\lambda \in \mathbb{R}^N} \frac{1}{2}\sum_{r\in [R]}\sum_{i\in S_r}[\mu_i^{-1}\lambda_i^2 - 2\lambda_i (\mu_i^{-1}\lambda_i + y_{r,i})]  \nonumber \\
  =&  \min_{y\in \mathcal{B}}  \max_{\lambda \in \mathbb{R}^N} \frac{1}{2}\sum_{r\in [R]}\sum_{i\in S_r}(-\mu_i^{-1}\lambda_i^2 - 2\lambda_iy_{r,i})  \nonumber \\
 \stackrel{2)}{=}& \min_{y\in \mathcal{B}}  \max_{\lambda \in \mathbb{R}^N} - \frac{1}{2} \|\lambda \|_2^2 -  \langle \lambda,  Ay  \rangle \nonumber \\
 =& \min_{y\in \mathcal{B}}  \|Ay \|_2^2, \nonumber
\end{align}
where $1)$ is obtained using the incidence relations $y_{r,i} = a_{r,i} = 0$ if $i\notin S_r$ and $2)$ holds because $\mu_i = |\{r\in [R] | i\in S_r\}|$. The optimal $y, a, \lambda$ satisfy $a_{r,i} = y_{r,i} + \mu_i^{-1}\lambda_i$ for all $i\in S_r$, $r\in[R]$ and $ \lambda = - Ay$. 

\section{Proof of Lemma~\ref{kappabound}}
First, consider a $y\in \mathcal{B}/\Xi$. We have $d_{I(\mu)}(y, \mathcal{Z}) = \|Ay+x^* \|_{2}$, since
\begin{align*}
\frac{1}{2}d_{I(\mu)}(y, \mathcal{Z})^2 &= \min_{a\in \mathcal{Z}} \frac{1}{2}\|y- a \|_{2,I(\mu)}^2  \\
& =\min_{a: a_{r,i} = 0,\; \forall (r, i): i\notin S_r} \max_{\lambda \in \mathbb{R}^N}\frac{1}{2}\|  y - a\|_{2, I(\mu)}^2 - \langle \lambda,  Aa +x^* \rangle \\
 &  \stackrel{1)}{=} \max_{\lambda \in\mathbb{R}^N}\min_{a \in\otimes_{r=1}^R\mathbb{R}^N}  \frac{1}{2}\sum_{r\in [R]}\sum_{i\in S_r}[\mu_i(y_{r,i} - a_{r,i})^2 - 2\lambda_i a_{r,i}] - \langle \lambda, x^* \rangle   \\
 & = \max_{\lambda \in\mathbb{R}^N}\frac{1}{2}\sum_{r\in [R]}\sum_{i\in S_r}[-\mu_i^{-1}\lambda_i^2 - \lambda_iy_{r,i}] - \langle \lambda, x^* \rangle  \\
 &  \stackrel{2)}{=} \max_{\lambda \in\mathbb{R}^N} -\frac{1}{2}\|\lambda \|_{2}^2 - \lambda^T (Ay+x^*) \\
 & = \frac{1}{2}\|Ay+x^* \|_{2}^2.
\end{align*}
where $1)$ is obtained using the incidence relations $y_{r,i} = a_{r,i} = 0$ if $i\notin S_r$ and $2)$ holds because $\mu_i = |\{r\in [R] | i\in S_r\}|$.
Based on Lemma~\ref{lemmakappabound}, we know that there exists a $\xi\in \mathcal{B}$ such that 
$A\xi = -x^*$ and 
$$\|y-\xi\|_{2,I(\mu)} \leq \sqrt{\frac{N\|I(\mu)\|_{1,\infty}}{2}} \|Ay + x^*\|_2 =  \sqrt{\frac{N\|\mu\|_1}{2}} \|Ay + x^*\|_2.$$ 
Therefore, $\kappa(y) = \frac{d_{I(\mu)}(y, \Xi)}{d_{I(\mu)}(y, \mathcal{Z})} \leq \sqrt{\frac{N\|\mu\|_1}{2}}$.

Next, consider a $y\in \mathcal{Z}/\Xi$. As $\mathcal{B}$ is compact, there exists a $y'\in\mathcal{B}$ that achieves $d_{I(\mu)}(y, \mathcal{B}) =  \|y-y'\|_{2,I(\mu)} $. Based on Lemma 3.1, we also know that there exists a $\xi\in \mathcal{B}$ such that $A\xi = -x^*$ and 
$$ \|\xi-y'\|_{2,I(\mu)} \leq \sqrt{\frac{\|I(\mu)\|_{1,\infty}}{2}} \|Ay' + x^*\|_1 = \sqrt{\frac{\|\mu\|_{1}}{2}} \|Ay' + x^*\|_1 .$$ 
Moreover, we have 
\begin{align*}
\|Ay' + x^*\|_1 = & \|Ay' -A y\|_1 \leq \|y' -y\|_1 =  \sum_{v\in[N]}\sum_{r: v\in S_r } |y'_{r,v} - y_{r,v}| \\
\leq &\sum_{v\in[N]} \left[ \mu_v \sum_{r: v\in S_r } (y'_{r,v} - y_{r,v})^2\right]^{\frac{1}{2}} \leq \sqrt{N}\|y'-y\|_{2,I(\mu)}.
\end{align*}
As $\xi\in \Xi$, it holds that $d_{I(\mu)}(y,\Xi) \leq \|\xi-y\|_{2,I(\mu)}\leq \|y'-y\|_{2,I(\mu)} +  \|y'-\xi\|_{2,I(\mu)} $. In addition, as 
$$\|y'-\xi\|_{2,I(\mu)} \leq\sqrt{\frac{\delta^s}{2}} \|Ay' + x^*\|_1\leq \sqrt{\frac{N\|\mu\|_1}{2}} \|y'-y\|_{2,I(\mu)},$$ 
we know that $d_{I(\mu)}(y,\Xi) \leq (1+\sqrt{\frac{N\|\mu\|_1}{2}})\|y'-y\|_{2,I(\mu)}$. Therefore,  
$$\kappa(y) = \frac{d_{I(\mu)}(y,\Xi)}{d_{I(\mu)}(y, \mathcal{B})} \leq \left(1+\sqrt{\frac{N\|\mu\|_1}{2}}\right),$$ 
which concludes the proof.

\section{Simulation for Example~\ref{boundofell}}\label{sec:lowerboundexp}
We provide additional empirical evidence that the convergence result suggested by the bound on $\ell_* \leq \frac{7}{N^2}$ is correct. We constructed a DSFM problem following Example~\ref{boundofell} and initialized $y$ according to equation~\eqref{exampleinit}. We used the number of iterations $k$ required to attain $g(y^{(k)}) \leq \epsilon g(y^{(0)})$ as a measure for the speed of convergence. We ran the simulations for $n\in[5,50]$ and averaged the results for each $n$ over $10$ independent runs. Figure~\ref{lower-bound} shows the results. The values next to the curves are their slopes obtained via a linear regression involving $\ln (\text{\# Iterations})\sim \ln(N)$. As the accuracy threshold increases, the slope approaches the value $3$, which indicates that the required number of iterations equals $O(N^2 R)$.     
\begin{figure}[t]
\centering
\includegraphics[trim={0cm 0cm 0cm 0cm},clip,width=.4\textwidth]{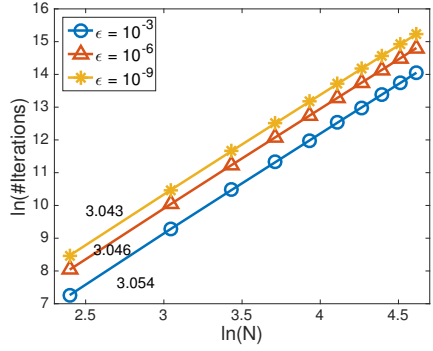}
\caption{Simulations accompanying Example 3.1: $\ln$(the number of iterations) vs $\ln(N)$. }
\label{lower-bound}
\end{figure}

\section{Proof of Lemma~\ref{skewstrongconv}}
Choose $z = Ay^*$ in Lemma~\ref{lemmakappabound}. Then, there is a $\xi\in \mathcal{B}$ such that $\|Ay - Ay^*\|^2 \geq \frac{2}{N\|\theta^P\|_{1,\infty}}\|y-\xi\|_{2,\theta^P}^2$. Moreover as $A\xi = z = Ay^*=-x^*$, we also  have $\xi\in\Xi$. Therefore, $\|y-\xi\|_{2,\theta^P}^2 \geq \|y-y^*\|_{2,\theta^P}^2$, which concludes the proof.

\section{Proof for Theorem~\ref{PCDrate}}
First, given a group of blocks $C$ and $y\in \otimes_{r=1}^R \mathbb{R}^{N}$, we define $y_{[C]}\in \otimes_{r=1}^R \mathbb{R}^{N}$ as
\begin{align*}
(y_{[C]})_r = \left\{ \begin{array}{cc} y_r & \quad\text{if $r\in C$,}\\
0 & \quad\text{if $r\not\in C$}.
\end{array} \right.
\end{align*}
The following lemma holds. 
\begin{lemma}\label{eqnorm2}
Let $C$ be a group of blocks sampled according to a $\alpha-$proper distribution $P$. Then, for any $y\in \otimes_{r=1}^R \mathbb{R}^{N}$ and $y_{r,i} = 0,$ whenever $i\notin S_r$, one has
\begin{align*}
\mathbb{E}_{C\sim P}(\|y_{[C]}\|_{2, I(\mu^C)}^2) = \mathbb{E}_{C\sim P}(\|y_{[C]}\|_{2,\theta^P}^2).
\end{align*}
\end{lemma}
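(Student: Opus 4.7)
The plan is to observe that this lemma is essentially an exchange-of-summation identity combined with the definition of $\theta^P$ and the $\alpha$-proper property. The key insight is that $I(\mu^C)$ is a random weight vector (depending on the sampled group $C$), while $\theta^P$ is a deterministic weight vector built from conditional expectations of $\mu^C$; once we unroll the two norms and pull the random indicator $\mathbf{1}_{r \in C}$ out through linearity of expectation, the tower property should collapse one to the other.

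First, I would expand both norms explicitly using the definitions. Recall that $I(\mu^C) \in \otimes_{r=1}^R \mathbb{R}^N$ has every block equal to $\mu^C$, and that $(y_{[C]})_r = y_r \mathbf{1}_{r\in C}$. Thus
\begin{align*}
\|y_{[C]}\|_{2,I(\mu^C)}^2 &= \sum_{r \in C} \sum_{i\in[N]} \mu_i^C \, y_{r,i}^2, \\
\|y_{[C]}\|_{2,\theta^P}^2 &= \sum_{r \in C} \sum_{i\in[N]} \theta_{r,i}^P \, y_{r,i}^2.
\end{align*}
Rewriting the restriction $r \in C$ via $\mathbf{1}_{r \in C}$ and applying Fubini/linearity of expectation, both expectations become double sums over $r \in [R]$ and $i \in [N]$ of $y_{r,i}^2$ multiplied by a scalar expectation. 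For the left side that scalar is $\mathbb{E}_{C \sim P}[\mu_i^C \mathbf{1}_{r \in C}]$, and for the right side it is $\theta_{r,i}^P \, \mathbb{P}(r \in C) = \alpha\, \theta_{r,i}^P$, where the last equality uses the $\alpha$-proper property.

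The crux is then the tower property: conditioning on the event $\{r \in C\}$ gives
\[
\mathbb{E}_{C \sim P}\bigl[\mu_i^C \mathbf{1}_{r \in C}\bigr] = \mathbb{P}(r \in C) \cdot \mathbb{E}_{C \sim P}\bigl[\mu_i^C \mid r \in C\bigr] = \alpha \cdot \theta_{r,i}^P,
\]
by the definition of $\theta_{r,i}^P$. This matches the scalar obtained on the right side, so the two expectations coincide term by term, completing the proof.

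There is no real obstacle here — the argument is just bookkeeping with linearity of expectation and the tower property. I would only remark that the hypothesis $y_{r,i}=0$ whenever $i\notin S_r$ is not actually invoked in the computation (the identity is purely algebraic in $y$); the hypothesis is consistent with the setting in which the lemma will be applied, so it does no harm to include it, but one could omit it from the statement.
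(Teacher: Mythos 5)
Your proof is correct and follows essentially the same route as the paper's: expand the norm, pull the indicator $\mathbf{1}_{r\in C}$ out by linearity, and apply the tower property with the definition $\theta_r^P = \mathbb{E}_{C\sim P}[\mu^C \mid r\in C]$. Your side remark that the hypothesis $y_{r,i}=0$ for $i\notin S_r$ is not actually used in the computation is also accurate.
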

\begin{proof}
\begin{align*}
&\mathbb{E}_{C\sim P}(\|y_{[C]}\|_{2, I(\mu^C)}^2) =\mathbb{E}_{C\sim P}(\sum_{r\in C}\| y_{r}\|_{2, \mu^C}^2) = \sum_{r\in[R]}\mathbb{E}_{C\sim P}\left[ \| y_{r}\|_{2, \mu^C}^2 1_{r\in C}\right] \\
= & \sum_{r\in[R]}\mathbb{E}\left[1_{r\in C}\mathbb{E}_{C\sim P}\left[ \| y_{r}\|_{2, \mu^C}^2 |r\in C\right]\right]  = \sum_{r\in[R]}\mathbb{E}\left[1_{r\in C} \| y_{r}\|_{2, \theta_r^P}^2\right] = \mathbb{E}_{C\sim P}(\|y_{[C]}\|_{2,\theta^P}^2).
\end{align*}
\end{proof}

Next, we turn our attention to the proof of the theorem. For this purpose, suppose that $y^* = \arg\min_{y\in\Xi } \|y - y^{(k)}\|_{2,\theta^P}$. 
\subsection{Algorithm 1}
We start with by establishing the following results.
\begin{lemma} It can be shown that
\begin{align}
\langle \nabla g(y^{(k)}), y^*- y^{(k)}\rangle &\stackrel{1)}{\leq} g(y^*) - g(y^{(k)}) - \frac{1}{N\|\theta^P\|_{1,\infty}}\|y^{(k)}-y^*\|_{2,\theta^P}^2  \nonumber \\
& \stackrel{2)}{\leq} \frac{4}{N\|\theta^P\|_{1,\infty} + 2}\left[g(y^*) - g(y^{(k)})-  \frac{1}{2} \|y^{(k)}-y^*\|_{2,\theta^P}^2\right].\label{sc4}
\end{align} 
\end{lemma}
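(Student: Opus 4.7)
The plan is to prove the two inequalities separately, both exploiting the quadratic nature of $g(y) = \frac{1}{2}\|Ay\|_2^2$ together with the weak strong convexity statement of Lemma~\ref{skewstrongconv}.

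For inequality 1), the key observation is that because $g$ is quadratic in $Ay$, the exact Taylor expansion
\[
g(y^*) = g(y^{(k)}) + \langle \nabla g(y^{(k)}), y^*-y^{(k)}\rangle + \tfrac{1}{2}\|A(y^*-y^{(k)})\|_2^2
\]
holds with equality (no second-order error term). Rearranging gives
\[
\langle \nabla g(y^{(k)}), y^*-y^{(k)}\rangle = g(y^*)-g(y^{(k)}) - \tfrac{1}{2}\|A(y^{(k)}-y^*)\|_2^2.
\]
Applying Lemma~\ref{skewstrongconv} to the last term then yields
\(\tfrac{1}{2}\|A(y^{(k)}-y^*)\|_2^2 \geq \frac{1}{N\|\theta^P\|_{1,\infty}}\|y^{(k)}-y^*\|_{2,\theta^P}^2,\)
which slotted in proves 1).

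For inequality 2), I will reduce it to a purely numerical inequality. Write $u = g(y^*)-g(y^{(k)})$, $v = \|y^{(k)}-y^*\|_{2,\theta^P}^2$, $M = N\|\theta^P\|_{1,\infty}$. A direct cross-multiplication shows that the desired bound $u - v/M \leq \frac{4}{M+2}\bigl(u - v/2\bigr)$ is equivalent to $(M-2)(u + v/M) \leq 0$. Lemma~\ref{lowerbound} guarantees $\|\theta^P\|_{1,\infty} \geq N$, so $M \geq N^2 \geq 2$; it therefore suffices to show $u + v/M \leq 0$, i.e.\ $g(y^{(k)}) - g(y^*) \geq \frac{1}{M}\|y^{(k)}-y^*\|_{2,\theta^P}^2$.

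To establish that remaining inequality, I apply the same Taylor identity but expanded around $y^*$:
\[
g(y^{(k)}) = g(y^*) + \langle \nabla g(y^*), y^{(k)}-y^*\rangle + \tfrac{1}{2}\|A(y^{(k)}-y^*)\|_2^2.
\]
Since every point of $\Xi$ is a minimizer of $g$ over $\mathcal{B}$ (as $g$ is constant on $\Xi$ and $\Xi\subseteq\mathcal{B}$), the first-order optimality condition gives $\langle \nabla g(y^*), y^{(k)}-y^*\rangle \geq 0$ for the feasible $y^{(k)}\in\mathcal{B}$. Invoking Lemma~\ref{skewstrongconv} once more on the quadratic term then delivers exactly $g(y^{(k)})-g(y^*) \geq \tfrac{1}{M}\|y^{(k)}-y^*\|_{2,\theta^P}^2$, which closes the argument. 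The only potential stumbling block is the sign-tracking in the algebraic reduction for step 2), where one must be careful that the factor $M-2$ is nonnegative so that the inequality is not reversed; this is precisely where Lemma~\ref{lowerbound} is needed.
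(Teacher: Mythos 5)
Your proof is correct and follows essentially the same route as the paper's: both use the exact quadratic expansion of $g$ at $y^{(k)}$ and at $y^*$, bound the curvature term $\tfrac{1}{2}\|A(y^{(k)}-y^*)\|_2^2$ via Lemma~\ref{skewstrongconv}, invoke first-order optimality $\langle \nabla g(y^*), y^{(k)}-y^*\rangle \geq 0$, and deduce inequality 2) from the resulting bound $g(y^*)-g(y^{(k)}) \leq -\tfrac{1}{M}\|y^{(k)}-y^*\|_{2,\theta^P}^2$. Your explicit reduction of 2) to the sign of $(M-2)\bigl(u + v/M\bigr)$ and the check that $M \geq 2$ simply spell out algebra the paper leaves implicit.
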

\begin{proof}
From Lemma 3.8 we can infer that
\begin{align}
& \|Ay^{(k)} -Ay^*\|_2^2 \geq \frac{2}{N\|\theta^P\|_{1,\infty}} \|y^{(k)}-y^*\|_{2,\theta^P}^2 \Rightarrow \nonumber  \\
& g(y^*) \geq g(y^{(k)}) + \langle \nabla g(y^{(k)}), y^*- y^{(k)}\rangle + \frac{1}{N\|\theta^P\|_{1,\infty}}\|y^{(k)}-y^*\|_{2,\theta^P}^2, \label{sc1} \\
& g(y^{(k)})  \geq g(y^*) +\langle \nabla g(y^{*}),  y^{(k)} -  y^*\rangle + \frac{1}{N\|\theta^P\|_{1,\infty}}\|y^{(k)}-y^*\|_{2,\theta^P}^2. \label{sc2} 
\end{align} 
As $\langle \nabla g(y^{*}),  y^{(k)} -  y^*\rangle \geq 0$, \eqref{sc2} gives 
\begin{align}
 g(y^*) -  g(y^{(k)})\leq  -\frac{1}{N\|\theta^P\|_{1,\infty}}\|y^{(k)}-y^*\|_{2,\theta^P}^2. \label{sc3}
\end{align} 
The inequality~\eqref{sc1} yields claim 1) in~\eqref{sc4}. Claim 2) in~\eqref{sc4} follows from~\eqref{sc3}.
\end{proof}

The following lemma is a direct consequence of the optimality of $y_r^{(k+1)}$ for an oblique projection. 
\begin{lemma}\label{optcond}
\begin{align*}
\langle \nabla_r g(y^{(k)}), y_r^{(k+1)} - y_r^*\rangle \leq \langle y_r^{(k)} - y_r^{(k+1)}, y_r^{(k+1)} - y_r^*\rangle_{\theta_r^P}. 
\end{align*}
\end{lemma}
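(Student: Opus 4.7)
The plan is to invoke the standard variational characterization of a projection onto a closed convex set, adapted to the weighted inner product $\langle \cdot, \cdot\rangle_{\theta_r^P}$ induced by the skewed norm $\|\cdot\|_{2,\theta_r^P}$. By the definition of the update in Algorithm~1, $y_r^{(k+1)} = \Pi_{\mathcal{B}_r, \theta_r^P}(u_r)$ with $u_r \triangleq y_r^{(k)} - (\theta_r^P)^{-1}\odot \nabla_r g(y^{(k)})$, so $y_r^{(k+1)}$ is the (unique) minimizer over the convex set $\mathcal{B}_r$ of the strictly convex quadratic $z\mapsto \tfrac{1}{2}\|z - u_r\|_{2,\theta_r^P}^2$. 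The first-order optimality condition then states that for every $z\in \mathcal{B}_r$,
$$\langle y_r^{(k+1)} - u_r,\; z - y_r^{(k+1)}\rangle_{\theta_r^P} \geq 0.$$

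Next, since $y^*\in \Xi \subseteq \mathcal{B} = \otimes_{r=1}^R \mathcal{B}_r$, its $r$-th block satisfies $y_r^*\in \mathcal{B}_r$, which makes it an admissible test point. Plugging $z = y_r^*$ into the inequality above and substituting the definition of $u_r$, I would expand
$$\langle (y_r^{(k+1)} - y_r^{(k)}) + (\theta_r^P)^{-1}\odot \nabla_r g(y^{(k)}),\; y_r^* - y_r^{(k+1)}\rangle_{\theta_r^P} \geq 0.$$

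The key algebraic observation is that the diagonal weights $\theta_r^P$ cancel against the component-wise inverse $(\theta_r^P)^{-1}$ sitting in the gradient term: by definition of the skewed inner product,
$$\langle (\theta_r^P)^{-1}\odot \nabla_r g(y^{(k)}),\; y_r^* - y_r^{(k+1)}\rangle_{\theta_r^P} = \langle \nabla_r g(y^{(k)}),\; y_r^* - y_r^{(k+1)}\rangle.$$
Substituting this simplification and moving the gradient term to the other side yields
$$\langle \nabla_r g(y^{(k)}),\; y_r^{(k+1)} - y_r^*\rangle \leq \langle y_r^{(k)} - y_r^{(k+1)},\; y_r^{(k+1)} - y_r^*\rangle_{\theta_r^P},$$
after a sign flip on the first factor, which is exactly the claim.

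The argument is essentially a textbook computation, so there is no genuinely hard step; the only care required is twofold. First, I must ensure the variational inequality is taken with respect to the weighted inner product that actually defines the projection (not the Euclidean one), so that the cancellation of $\theta_r^P$ against $(\theta_r^P)^{-1}$ goes through cleanly. Second, I must verify that $y_r^*\in \mathcal{B}_r$ is a legitimate choice of test point, which follows immediately from the product structure $\mathcal{B} = \otimes_r \mathcal{B}_r$ and the inclusion $\Xi \subseteq \mathcal{B}$ noted right before Section~3.
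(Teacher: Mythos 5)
Your proof is correct and is precisely the argument the paper intends: the paper dispatches this lemma with the single remark that it is ``a direct consequence of the optimality of $y_r^{(k+1)}$ for an oblique projection,'' and your write-up simply fills in that variational inequality in the $\langle\cdot,\cdot\rangle_{\theta_r^P}$ inner product, the admissibility of $y_r^*\in\mathcal{B}_r$, and the cancellation of $\theta_r^P$ against $(\theta_r^P)^{-1}$. Nothing is missing and no different route is taken.
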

The following lemma follows from a simple manipulation of the Euclidean norm.
\begin{lemma}\label{threepoints}
\begin{align*}
\frac{1}{2}\|y_r^{(k+1)} - & y_r^{(k)}\|_{2,\theta_r^P}^2 = \frac{1}{2}\|y_r^{(k+1)} - y_r^{*}\|_{2,\theta_r^P}^2  +\frac{1}{2} \|y_r^{*} - y_r^{(k)}\|_{2,\theta_r^P}^2 + \langle y_r^{(k+1)} - y_r^{*},  y_r^{*} - y_r^{(k)}\rangle_{\theta_r^P} \nonumber \\
 = &- \frac{1}{2}\|y_r^{(k+1)} - y_r^{*}\|_{2,\theta_r^P}^2   + \frac{1}{2} \|y_r^{*} - y_r^{(k)}\|_{2,\theta_r^P}^2 +  \langle y_r^{(k+1)} - y_r^{*},  y_r^{(k+1)} - y_r^{(k)}\rangle_{\theta_r^P} 
\end{align*}
\end{lemma}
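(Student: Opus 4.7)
The plan is to observe that $\|\cdot\|_{2,\theta_r^P}$ is induced by the weighted inner product $\langle u,v\rangle_{\theta_r^P} = \sum_{i}\theta_{r,i}^{P}\, u_i v_i$, so the claim is purely an identity in a (finite-dimensional) inner-product space. No structural properties of $\Xi$, $\mathcal{B}$, or the iterates are needed; only the three points $y_r^{(k)},y_r^{(k+1)},y_r^{*}$ and bilinearity of the inner product enter.

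For the first equality I would write the telescoping decomposition $y_r^{(k+1)}-y_r^{(k)} = \bigl(y_r^{(k+1)}-y_r^{*}\bigr) + \bigl(y_r^{*}-y_r^{(k)}\bigr)$ and expand $\tfrac12\|\cdot\|_{2,\theta_r^P}^2$ using bilinearity; the cross-term is exactly $\langle y_r^{(k+1)}-y_r^{*},\, y_r^{*}-y_r^{(k)}\rangle_{\theta_r^P}$, which yields line one.

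For the second equality I would substitute $y_r^{*}-y_r^{(k)} = \bigl(y_r^{(k+1)}-y_r^{(k)}\bigr) - \bigl(y_r^{(k+1)}-y_r^{*}\bigr)$ into the cross-term of line one, obtaining
\begin{align*}
\langle y_r^{(k+1)}-y_r^{*},\, y_r^{*}-y_r^{(k)}\rangle_{\theta_r^P}
= \langle y_r^{(k+1)}-y_r^{*},\, y_r^{(k+1)}-y_r^{(k)}\rangle_{\theta_r^P} - \|y_r^{(k+1)}-y_r^{*}\|_{2,\theta_r^P}^2.
\end{align*}
Plugging this back into the right-hand side of line one converts the $+\tfrac12\|y_r^{(k+1)}-y_r^{*}\|_{2,\theta_r^P}^2$ term into $-\tfrac12\|y_r^{(k+1)}-y_r^{*}\|_{2,\theta_r^P}^2$, producing line two.

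There is essentially no obstacle: the identity is the weighted analogue of the elementary polarization/parallelogram computation, and the only thing to be careful about is keeping the signs of the two representations of the cross-term consistent when switching which pair of differences is used as the expansion basis.
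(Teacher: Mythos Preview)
Your proposal is correct and matches the paper's own treatment: the paper simply states that the lemma ``follows from a simple manipulation of the Euclidean norm,'' which is exactly the bilinearity/polarization computation you wrote out. There is nothing more to add.
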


Let us analyze next the amount by which the objective function decreases in each iteration. The following expectation is with respect to $C_{i_k}\sim P$. 
\begin{align}
&\mathbb{E}\left[g(y^{(k+1)})\right]  \\
& \stackrel{1)}{\leq}  g(y^{(k)}) +\mathbb{E}\left\{ \sum_{r\in C_{i_k}} \left[\langle \nabla_r g(y^{(k)}), y_r^{(k+1)}-y_r^{(k)}\rangle + \frac{1}{2}\| y_r^{(k+1)}-y_r^{(k)}\|_{2, \mu_r^{C_{i_k}}}^2\right]\right\} \nonumber \\
& \stackrel{2)}{=}  g(y^{(k)}) +\mathbb{E}\left\{ \sum_{r\in C_{i_k}} \left[\langle \nabla_r g(y^{(k)}), y_r^{(k+1)}-y_r^{(k)}\rangle + \frac{1}{2}\| y_r^{(k+1)}-y_r^{(k)}\|_{2, \theta_r^P}^2\right]\right\} \nonumber \\
& = g(y^{(k)}) + \mathbb{E}\left\{\sum_{r\in C_{i_k}} \left[\langle \nabla_r g(y^{(k)}), y_r^{*}-y_r^{(k)}\rangle + \langle \nabla_r g(y^{(k)}), y_r^{(k+1)}-y_r^{*}\rangle  + \frac{1}{2}\| y_r^{(k+1)}-y_r^{(k)}\|_{2,\theta_r^P}^2 \right]\right\} \nonumber \\
& \stackrel{3)}{\leq} g(y^{(k)}) +  \mathbb{E}\left\{\sum_{r\in C_{i_k}} \left[\langle \nabla_r g(y^{(k)}), y_r^{*}-y_r^{(k)}\rangle - \frac{1}{2}\|y_r^{(k+1)} - y_r^{*}\|_{2,\theta_r^P}^2   + \frac{1}{2} \|y_r^{*} - y_r^{(k)}\|_{2,\theta_r^P}^2  \right]\right\} \nonumber \\
&= g(y^{(k)}) + \alpha\langle \nabla g(y^{(k)}), y^{*}-y^{(k)}\rangle - \mathbb{E}\left[ \frac{1}{2}\|y_{[C_{i_k}]}^{(k+1)} - y_{[C_{i_k}]}^{*}\|_{2,\theta^P}^2 \right]+ \mathbb{E}\left[ \frac{1}{2}\|y_{[C_{i_k}]}^{(k)} - y_{[C_{i_k}]}^{*}\|_{2,\theta^P}^2\right] \nonumber \\
&\stackrel{4)}{=} g(y^{(k)}) + \alpha\langle \nabla g(y^{(k)}), y^{*}-y^{(k)}\rangle - \mathbb{E}\left[ \frac{1}{2}\|y^{(k+1)} - y^{*}\|_{2,\theta^P}^2 \right]+ \mathbb{E}\left[ \frac{1}{2}\|y^{(k)} - y^{*}\|_{2,\theta^P}^2\right] \nonumber \\
& \stackrel{5)}{\leq} g(y^*) - \mathbb{E}\left[ \frac{1}{2}\|y^{(k+1)} - y^{*}\|_{2,\theta^P}^2 \right] + \left[1- \frac{4\alpha}{N\|\theta^P\|_{1,\infty} + 2}\right]\left\{g(y^{(k)}) - g(y^*) -  \frac{1}{2}\|y^{(k)} - y^{*}\|_{2,\theta^P}^2\right\}, \label{onestep}
\end{align}
where $1)$ follows from inequality~\eqref{paradescent}, $2)$ holds due to Lemma~\ref{eqnorm2}, $3)$ is a consequence of Lemma~\ref{optcond} and Lemma~\ref{threepoints}, $4)$ is due to $y^{(k+1)}_r = y^{(k)}_r$ for $r\notin C_{i_k}$, and $5)$ may be established from \eqref{sc4}.

Equation~\eqref{onestep} further establishes that 
\begin{align*}
\mathbb{E}\left[g(y^{(k+1)})- g(y^*) + \frac{1}{2}d_{\theta^P}^2(y^{k+1}, \xi) \right] \leq& \mathbb{E}\left[g(y^{(k+1)})- g(y^*) + \frac{1}{2}\|y^{(k+1)} - y^{*}\|_{2,\theta^P}^2  \right] \\
\leq & \left[1- \frac{4\alpha}{N\|\theta^P\|_{1,\infty} + 2}\right] \mathbb{E}\left[g(y^{(k)})- g(y^*) + \frac{1}{2}d^2_{\theta^P}(y^{k}, \xi) \right] .
\end{align*}
The proof follows by repeating the derivations for all $k$. 

\section{Proof of Lemma~\ref{lowerbound}}
According to the definition of $\theta^P$, we have 
\begin{align}
\max_{r\in[R]: i\in S_r} \theta_{r,i}^P &= \max_{r\in[R]: i\in S_r} \mathbb{E}_{C\sim P}\left[\mu_i^C| r\in C\right] \nonumber \\
&= \max_{r\in[R]: i\in S_r}  \mathbb{E}_{C\sim P}\left[\sum_{r'\in[R] : i\in S_{r'}} 1_{r'\in C} | r\in C\right] \nonumber \\
& =    \max_{r\in[R]: i\in S_r} \sum_{r'\in[R] : i\in S_{r'}} \mathbb{P}_{C\sim P}\left[r'\in C | r\in C\right] \label{greater1} \\
& =  \frac{1}{\alpha} \max_{r\in[R]: i\in S_r} \sum_{r'\in[R] : i\in S_{r'}} \mathbb{P}_{C\sim P}\left[r'\in C, r\in C\right]  \nonumber \\
  &\geq \frac{1}{\alpha \mu_i}  \sum_{r,r'\in[R] : i\in S_{r}, S_{r'}} \mathbb{P}_{C\sim P}\left[r'\in C, r\in C\right]  \nonumber \\
  & = \frac{1}{\alpha \mu_i} \mathbb{E}_{C\sim P}\left[|\{(r,r')\in C\times C: i\in S_{r}, i\in S_{r'}\}|\right] \nonumber\\
& =  \frac{1}{\alpha \mu_i} \mathbb{E}_{C\sim P}\left[(\mu_{i}^C)^2\right]  \geq \frac{1}{\alpha d_i}  \left[\mathbb{E}_{C\sim P}(\mu_{i}^C)\right]^2   = \frac{1}{\alpha d_i}  \left(\sum_{C} \sum_{r\in[R]:i\in S_r} 1_{r\in C} \mathbb{P}(C)\right)^2  \nonumber\\
& =  \frac{1}{\alpha \mu_i}  \left( \sum_{r\in[R]:i\in S_r} \mathbb{P}_{C\sim P}[r\in C]\right)^2   \nonumber \\
& =\frac{1}{\alpha \mu_i}  \left(\alpha \mu_i\right)^2 =\alpha \mu_i. \nonumber
\end{align}
From~\eqref{greater1}, we also have $\sum_{r'\in[R]: i\in S_{r'}} \mathbb{P}_{C\sim P}\left[r'\in C | r\in C\right] \geq \mathbb{P}_{C\sim P}\left[r\in C | r\in C\right] =1$, which proves the claimed result. 

\section{Proof of Lemma~\ref{eqnorm}}
Similar to what was established for~\eqref{greater1}, one can show that $\theta_{r,i}^P = \sum_{r'\in[R] : i\in S_{r'}} \mathbb{P}_{C\sim P}\left[r'\in C | r\in C\right]$. 

Consider next the right hand side of this equation for $\alpha = \frac{K}{R}$. In this case, for some $r$ and some $i\in S_r$, we have
\begin{align*}
\sum_{r'\in[R] : i\in S_{r'}} \mathbb{P}_{C\sim P}\left[r'\in C | r\in C\right] &= \mathbb{P}_{C\sim P}\left[r\in C | r\in C\right] \; + \sum_{r'\in[R] : i\in S_{r'}, r'\neq r} \mathbb{P}_{C\sim P}\left[r'\in C | r\in C\right] \\
&= 1 +  \frac{R}{K}\sum_{r' : i\in S_{r'}, r'\neq r} \mathbb{P}_{C\sim P}\left[r'\in C , r\in C\right] \\
&  = 1+ \frac{R}{K} (\mu_i - 1) \frac{{R-2 \choose K-2}}{{R \choose K}} = 1 + \frac{K-1}{R-1}(\mu_i - 1).
\end{align*} 
Therefore, $\theta_{r,i}^P= \frac{K-1}{R-1} \mu_i + \frac{R-K}{R-1} $ when $P$ is a uniform distribution. 

\section{Analysis of the Accelerated Coordinate Descend Method}\label{sec:ACDM}
In the ACDM setting, we used the APPROX framework proposed by Fercoq and Richt$\acute{\text{a}}$rik in~\cite{fercoq2015accelerated} and adapted it to this particular problem. In the general APPROX framework, the norm in \eqref{paradescent} is chosen as follows: consider an arbitrary function $\phi$ with the component-wise smoothness and strong convexity property. For block $r$, one has $|\nabla_r \phi(x) -\nabla_r \phi(y)| \leq L_r \| x_r-y_r\|_{\nu_r},$ where $\|\cdot\|_{\nu_r}$ is a norm associated with the $r$-th block. If one wants to process multiple blocks simultaneously, say those in a group $C$, one first needs to find a constant $L_C$ such that for any $h$ as defined in~\eqref{paradescent}, it holds that
$$\phi(y+h) \;\leq \; \phi(y) + \sum_{r\in C} \langle \nabla_r \phi(y),  h_{r}\rangle + \sum_{r\in C} \frac{L_C}{2}\| h_{r}\|_{2, \nu_r}^2.$$ 
The smaller the value of the multiplier $L_C$, the faster the convergence. Typically, $L_C$ lies in $[\max_{r\in C} L_r, \sum_{r\in C} L_r]$.  

Recall the smoothness property of $g$ from equation~\eqref{smoothness}. A direct application of APPROX to our problem gives
\begin{align*}
g(y+h) \;\leq \; g(y) + \sum_{r\in C} \langle \nabla_r g(y),  h_{r}\rangle + \sum_{r\in C} \frac{\max_{i\in[N]} \mu_{i}^C}{2}\| h_{r}\|_{2}^2 . 
\end{align*}
As $(\max_{i\in[N]} \mu_{i}^C) \geq \mu_{j}^C$ for all $j\in[N]$, we obtain convergence rates worse than those implied by inequality~\eqref{paradescent}. To actually obtain the guarantees in~\eqref{paradescent}, one needs to dispose with the $\|\cdot \|_{2}$ norm at the block level and break the blocks into components corresponding to the individual elements. The elements are evaluated independently through the use of the norm $\|\cdot \|_{2,\mu^C}$.

\begin{table}[htb]
\centering
\begin{tabular}{l}
\hline
\textbf{Algorithm 2: } \textbf{Parallel ACDM for Solving~\eqref{compact}} \\
\hline
\label{PACD}
\ \textbf{Input}: $\mathcal{B}$, $\alpha$, some constant $c > 0$ \\
\ 0: Initialize $y^{(0)}\in\mathcal{B}$, $k\leftarrow 0$\\
\ 1: $c' \leftarrow \left\lceil(1+c)\frac{\sqrt{2N\|\theta^P\|_{1,\infty}}}{\alpha} + c\right\rceil $ \\
\ 2: Do the following steps iteratively until the dual gap $<\epsilon$:\\
\ 3: \;\quad If $k = lc'$ for some $l\in \mathbb{Z}$, $z^{(k)} \leftarrow y^{(k)}$, $\lambda_k \leftarrow 1$ \\
\ 4: \;\quad $p^{(k)} \leftarrow (1- \lambda_k)y^{(k)} + \lambda_kz^{(k)}$ \\
\ 5: \;\quad Sample $C_{i_k}$ using some  $\alpha$-proper distribution $P$  \\
\ 6: \;\quad $z^{(k+1)}\leftarrow z^{(k)}$ \\
\ 7: \;\quad For $r\in C_{i_k}$: \\
\ 8: \;\;\;\;\;\quad  $z_r^{(k+1)} \leftarrow \Pi_{\mathcal{B}_r,\theta_r^P} (z_r^{(k)} - \frac{\alpha}{\lambda_k} (\theta_r^P)^{-1} \odot \nabla_r g(p^{(k)}))$ \\
\ 9: \;\quad $y^{(k+1)}\leftarrow p^{(k)} + \frac{\lambda_k}{\alpha}(z^{(k+1)} - z^{(k)})$  \\
\ 10:\;\quad $\lambda_{k+1} \leftarrow \frac{\sqrt{\lambda_k^4 + 4 \lambda_k^2}-\lambda_k^2}{2}$ \\
\ 11:\;\quad $k\leftarrow k+1$ \\
\ 12: Output $y^{(k)}$\\
\hline
\end{tabular}
\end{table}
Similar to the APPROX method~\cite{fercoq2015accelerated}, the parallel ACDM can also be implemented to avoid full-dimensional vector operations (see Section~\ref{blockversion}). The following theorem characterizes the convergence property of Algorithm 2. 
\begin{theorem}\label{paraACD}
Given $c>0$, let $c'=\left\lceil(1+c)\frac{\sqrt{2N\|\theta^P\|_{1,\infty} }}{\alpha} + c\right\rceil$. 
Consider the iterations $k=lc'$ for $l\in \mathbb{Z}_{\geq 0}$. Then, $y^{(k)}$ of Algorithm 2 satisfies
\begin{align*}
\mathbb{E}\left[g(y^{(k)}- g(y^*)\right] \leq  \frac{1}{(1+c)^l}\left[g(y^{(0)}) - g(y^*)\right] .
\end{align*}
\end{theorem}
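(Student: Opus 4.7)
The plan is to mirror the APPROX analysis of Fercoq--Richt\'arik, but carried out in the skewed norm $\|\cdot\|_{2,\theta^P}$ rather than the Euclidean one, and then combine the sublinear in-epoch bound with a restart argument that exploits the weak strong convexity established in Lemma~\ref{skewstrongconv}. Concretely, I first fix attention on a single epoch $k\in\{lc',lc'+1,\ldots,(l{+}1)c'\}$ inside which $\lambda_{lc'}=1$ and $z^{(lc')}=y^{(lc')}$, and show sublinear decrease of a Lyapunov potential; then I convert this into a per-epoch contraction by a factor $1/(1+c)$ by using the strong convexity at the restart point; iterating over $l$ epochs gives the claim.

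The per-step analysis starts from the block smoothness inequality~\eqref{paradescent}, but with the crucial simplification given by Lemma~\ref{eqnorm2}: after taking expectation over $C\sim P$, the random norm $\|\cdot\|_{2,I(\mu^C)}$ on the right-hand side may be replaced by the deterministic $\|\cdot\|_{2,\theta^P}$. This is what makes the oblique projection in Step~8 of Algorithm~2 (which uses weights $\theta_r^P$) the correct minimizer of the upper model. I would then define the Lyapunov potential
\begin{align*}
\Phi_k \;\triangleq\; \frac{1-\lambda_k}{\lambda_k^2}\bigl[g(y^{(k)})-g(y^*)\bigr] \;+\; \frac{1}{2\alpha^2}\,\|z^{(k)}-y^*\|_{2,\theta^P}^2,
\end{align*}
and establish the one-step inequality $\mathbb{E}[\Phi_{k+1}\mid\mathcal{F}_k]\le \Phi_k$. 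The derivation is parallel to~\eqref{onestep} in the proof of Theorem~\ref{PCDrate}: expand $y^{(k+1)}=p^{(k)}+\frac{\lambda_k}{\alpha}(z^{(k+1)}-z^{(k)})$, apply~\eqref{paradescent} with the substitution of Lemma~\ref{eqnorm2}, use the optimality of $z^{(k+1)}$ under the oblique projection (analog of Lemma~\ref{optcond} with weights $\frac{\lambda_k}{\alpha}(\theta_r^P)^{-1}$), and then invoke the standard three-point identity (Lemma~\ref{threepoints}) together with convexity of $g$. The recursion $\frac{1-\lambda_{k+1}}{\lambda_{k+1}^2}=\frac{1}{\lambda_k^2}$ built into line~10 of the algorithm is exactly what makes the telescoping work, giving the sublinear bound $\mathbb{E}[g(y^{(lc'+t)})-g(y^*)]\le \lambda_{t-1}^2\,\Phi_{lc'}$ for $1\le t\le c'$.

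At the restart index $k_0=lc'$ one has $z^{(k_0)}=y^{(k_0)}$ and $\lambda_{k_0}=1$, so $\Phi_{k_0}=\frac{1}{2\alpha^2}\|y^{(k_0)}-y^*\|_{2,\theta^P}^2$. Here I combine two facts: first, the standard identity $g(y)-g(y^*)\ge\tfrac{1}{2}\|Ay-Ay^*\|_2^2$ (which follows from the first-order optimality of $y^*$ on $\mathcal{B}$ since $\nabla_r g(y^*)=Ay^*$ for every block $r$); second, Lemma~\ref{skewstrongconv}, which yields $\|y^{(k_0)}-y^*\|_{2,\theta^P}^2\le \tfrac{N\|\theta^P\|_{1,\infty}}{2}\|Ay^{(k_0)}-Ay^*\|_2^2\le N\|\theta^P\|_{1,\infty}\bigl[g(y^{(k_0)})-g(y^*)\bigr]$. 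Plugging this into the sublinear bound with $t=c'$ and using $\lambda_{c'-1}\le 2/(c'+1)$ gives
\begin{align*}
\mathbb{E}\bigl[g(y^{(k_0+c')})-g(y^*)\bigr] \;\le\; \frac{4}{(c'+1)^2}\cdot\frac{N\|\theta^P\|_{1,\infty}}{2\alpha^2}\bigl[g(y^{(k_0)})-g(y^*)\bigr].
\end{align*}
Choosing $c'$ as in the theorem statement makes the prefactor at most $1/(1+c)$, so each epoch contracts the expected suboptimality by $1/(1+c)$; iterating over $l$ epochs and using the tower property finishes the proof.

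The main obstacle I anticipate is the careful bookkeeping in the per-step Lyapunov inequality in the skewed-norm setting: the sampling-dependent norm $\|\cdot\|_{2,I(\mu^C)}$ must be converted to $\|\cdot\|_{2,\theta^P}$ only \emph{inside} an expectation, so the conditioning on $\mathcal{F}_k$ has to be handled with the same care as in~\eqref{onestep}, and the three-point identity and optimality condition must be stated with respect to the weighted inner product $\langle\cdot,\cdot\rangle_{\theta_r^P}$ implicitly defined by the projection in Step~8. Getting the constants to match the stated $c'$ requires verifying that $\tfrac{1}{2\alpha^2}$ (rather than a larger factor) truly appears in front of $\|z^{(k)}-y^*\|_{2,\theta^P}^2$ in $\Phi_k$, which is what ultimately forces the prescribed choice $c'=\lceil(1+c)\sqrt{2N\|\theta^P\|_{1,\infty}}/\alpha+c\rceil$.
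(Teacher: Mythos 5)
Your proposal is correct and follows essentially the same route as the paper's proof: the same Lyapunov potential $\frac{1-\lambda_k}{\lambda_k^2}[g(y^{(k)})-g(y^*)]+\frac{1}{2\alpha^2}\|z^{(k)}-y^*\|_{2,\theta^P}^2$, the same use of Lemma~\ref{eqnorm2} to replace the sampling-dependent norm inside the expectation, the same telescoping via $\frac{1-\lambda_k}{\lambda_k^2}=\frac{1}{\lambda_{k-1}^2}$, and the same restart argument converting the sublinear in-epoch bound into a $1/(1+c)$ contraction via Lemma~\ref{skewstrongconv}. Your two-step derivation of $\|y^{(k_0)}-y^*\|_{2,\theta^P}^2\le N\|\theta^P\|_{1,\infty}[g(y^{(k_0)})-g(y^*)]$ is exactly the paper's inequality (32), just written in a slightly different order.
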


\subsection{Proof of Theorem~\ref{paraACD}}
We start by establishing a number of background results. 

The following lemma is due to the optimality of $z_r^{(k+1)}$. 
\begin{lemma} \label{PACDpoptcond}
\begin{align*}
\langle \nabla_r g(p^{(k)}), z_r^{(k+1)} - y_r^*\rangle \leq \frac{\lambda_k}{\alpha}\langle z_r^{(k)} - z_r^{(k+1)}, z_r^{(k+1)} - y_r^*\rangle_{\theta_r^P}.
\end{align*}
\end{lemma}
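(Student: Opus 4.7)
The plan is to invoke the first-order optimality condition characterizing $z_r^{(k+1)}$ as an oblique projection onto $\mathcal{B}_r$ in the skewed Euclidean norm. Algorithm~2 defines $z_r^{(k+1)} = \arg\min_{w\in\mathcal{B}_r}\frac{1}{2}\|w - c_r\|_{2,\theta_r^P}^2$, where $c_r \triangleq z_r^{(k)} - \frac{\alpha}{\lambda_k}(\theta_r^P)^{-1}\odot\nabla_r g(p^{(k)})$. Since $\mathcal{B}_r$ is closed and convex and the quadratic objective is strictly convex, the standard variational inequality for a projection gives $\langle z_r^{(k+1)} - c_r,\, u - z_r^{(k+1)}\rangle_{\theta_r^P}\geq 0$ for every $u\in\mathcal{B}_r$.

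Next I would substitute the definition of $c_r$ into this inequality. The only nontrivial bookkeeping step is the identity $\langle (\theta_r^P)^{-1}\odot v,\, w\rangle_{\theta_r^P} = \langle v,\, w\rangle$, which follows immediately by writing out the weighted inner product coordinate-wise. This identity cancels the weight vector against the $(\theta_r^P)^{-1}\odot$ factor in the gradient term and converts it into an ordinary inner product with $\nabla_r g(p^{(k)})$ scaled by $\alpha/\lambda_k$.

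Finally, I would specialize to $u = y_r^*$, which is admissible because $y^*\in\Xi\subseteq\mathcal{B}$ forces $y_r^*\in\mathcal{B}_r$, and then rearrange by flipping the sign in each inner product (swapping the order of its arguments) so as to isolate $\langle \nabla_r g(p^{(k)}),\, z_r^{(k+1)} - y_r^*\rangle$ on the left. The scalar $\alpha/\lambda_k$ then moves to the right-hand side as $\lambda_k/\alpha$, producing precisely the stated bound. I do not expect any substantive obstacle here: the result is a direct consequence of projection optimality, and the only step worth verifying carefully is the weight-cancellation identity $\langle (\theta_r^P)^{-1}\odot v, w\rangle_{\theta_r^P} = \langle v, w\rangle$, which is what makes the $(\theta_r^P)^{-1}\odot$ weighting inside the projection argument turn into a clean unweighted pairing with the gradient.
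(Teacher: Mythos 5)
Your argument is correct and is exactly what the paper intends: the paper's "proof" is the single remark that the lemma "is due to the optimality of $z_r^{(k+1)}$," and your write-up simply spells out that optimality condition (the variational inequality for the oblique projection, the cancellation $\langle (\theta_r^P)^{-1}\odot v, w\rangle_{\theta_r^P} = \langle v, w\rangle$, and the substitution $u = y_r^*\in\mathcal{B}_r$). No gaps; the sign flip and the transfer of $\alpha/\lambda_k$ to the right-hand side as $\lambda_k/\alpha$ both check out.
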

Once again, one can easily establish the following result pertaining to the Euclidean norm.
\begin{lemma}\label{PACDpthreepoints}
\begin{align*}
\frac{1}{2}\|z_r^{(k+1)} - & z_r^{(k)}\|_{2,\theta_r^P}^2 = \frac{1}{2}\|z_r^{(k+1)} - y_r^{*}\|_{2,\theta_r^P}^2  +\frac{1}{2} \|y_r^{*} - z_r^{(k)}\|_{2,\theta_r^P}^2 + \langle z_r^{(k+1)} - y_r^{*},  y_r^{*} - z_r^{(k)}\rangle_{\theta_r^P} \nonumber \\
 = &- \frac{1}{2}\|z_r^{(k+1)} - y_r^{*}\|_{2,\theta_r^P}^2   + \frac{1}{2} \|y_r^{*} - z_r^{(k)}\|_{2,\theta_r^P}^2 +  \langle z_r^{(k+1)} - y_r^{*},  z_r^{(k+1)} - z_r^{(k)}\rangle_{\theta_r^P}. 
\end{align*}
\end{lemma}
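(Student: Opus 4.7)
The plan is to treat this as a pure algebraic identity for the weighted inner product $\langle\cdot,\cdot\rangle_{\theta_r^P}$ and its induced norm $\|\cdot\|_{2,\theta_r^P}^2 = \langle\cdot,\cdot\rangle_{\theta_r^P}$, which is a genuine inner product because $\theta_r^P$ is strictly positive (this follows from $\theta_{r,i}^P \geq \max\{\alpha \mu_i, 1\} > 0$ by Lemma~\ref{lowerbound}, applied componentwise). Since the identity is bilinear in its arguments, no property of the iterates $z_r^{(k)}$, $z_r^{(k+1)}$, $y_r^*$ beyond their membership in $\mathbb{R}^N$ is used.

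First I would introduce the abbreviations $u = z_r^{(k+1)} - y_r^{*}$ and $v = y_r^{*} - z_r^{(k)}$, so that $z_r^{(k+1)} - z_r^{(k)} = u + v$. Expanding the squared weighted norm via bilinearity yields
\begin{equation*}
\tfrac{1}{2}\|u+v\|_{2,\theta_r^P}^2 \;=\; \tfrac{1}{2}\|u\|_{2,\theta_r^P}^2 + \tfrac{1}{2}\|v\|_{2,\theta_r^P}^2 + \langle u,v\rangle_{\theta_r^P},
\end{equation*}
which after substituting back the definitions of $u$ and $v$ is exactly the first line of the lemma.

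For the second line, I would rewrite the cross term using $v = (u+v) - u$, so that
\begin{equation*}
\langle u,v\rangle_{\theta_r^P} \;=\; \langle u, u+v\rangle_{\theta_r^P} - \|u\|_{2,\theta_r^P}^2.
\end{equation*}
Plugging this into the first identity replaces $+\tfrac{1}{2}\|u\|_{2,\theta_r^P}^2$ with $-\tfrac{1}{2}\|u\|_{2,\theta_r^P}^2$ and substitutes $\langle u, u+v\rangle_{\theta_r^P} = \langle z_r^{(k+1)} - y_r^*, z_r^{(k+1)} - z_r^{(k)}\rangle_{\theta_r^P}$, which reproduces the second line verbatim.

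There is no substantive obstacle here; the only thing to be careful about is the bookkeeping of signs (the variable $v = y_r^* - z_r^{(k)}$ appears with the opposite sign from $z_r^{(k)} - y_r^*$), and the implicit observation that the componentwise positive weight vector $\theta_r^P$ makes $\langle a,b\rangle_{\theta_r^P} = \sum_i \theta_{r,i}^P a_i b_i$ a symmetric bilinear form, so the standard parallelogram-type expansion is legitimate. This is essentially the same identity used to derive Lemma~\ref{threepoints} for the RCDM analysis, just restated for the auxiliary sequence $z_r^{(k)}$ that arises in the accelerated scheme.
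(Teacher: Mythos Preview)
Your proof is correct and matches the paper's approach, which simply states that the identity follows from ``a simple manipulation of the Euclidean norm'' (just as the companion Lemma~\ref{threepoints} does). One small remark: your appeal to Lemma~\ref{lowerbound} for positivity of $\theta_r^P$ is slightly misstated (that lemma bounds $\max_{r:i\in S_r}\theta_{r,i}^P$, not each $\theta_{r,i}^P$), but this is immaterial since the expansion $\|u+v\|^2 = \|u\|^2 + \|v\|^2 + 2\langle u,v\rangle$ holds for any symmetric bilinear form regardless of positivity.
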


The next result follows from the convexity property of the function $g$. 
\begin{lemma}  \label{PACDpeq3}
\begin{align}
\lambda_k\langle \nabla g(p^{(k)}), y^{*}-z^{(k)}\rangle &=  \langle \nabla g(p^{(k)}), \lambda_k y^{*}- \lambda_k z^{(k)}\rangle = \langle \nabla g(p^{(k)}), \lambda_k y^{*}-(p^{(k)} - (1-\lambda_k)y^{(k)}) \rangle \nonumber \\ 
&= \lambda_k\langle \nabla g(p^{(k)}), y^{*}-p^{(k)}\rangle + (1-\lambda_k)\langle \nabla g(p^{(k)}),y^{(k)}-p^{(k)}\rangle \nonumber \\ 
&\leq \lambda_k\left[g(y^{*})- g(p^{(k)})\right] + (1-\lambda_k)\left[g(y^{(k)})- g(p^{(k)})\right].\nonumber 
\end{align}
\end{lemma}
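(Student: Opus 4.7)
The plan is straightforward because the claim is essentially a three-point identity combined with the convexity of $g$. I will first manipulate the left-hand side algebraically using the definition of $p^{(k)}$ from step 4 of Algorithm 2, namely $p^{(k)} = (1-\lambda_k) y^{(k)} + \lambda_k z^{(k)}$, which rearranges to $\lambda_k z^{(k)} = p^{(k)} - (1-\lambda_k) y^{(k)}$. Substituting this into $\lambda_k \langle \nabla g(p^{(k)}), y^* - z^{(k)}\rangle$ and splitting via linearity of the inner product will yield the identity
\[
\lambda_k \langle \nabla g(p^{(k)}), y^* - z^{(k)}\rangle = \lambda_k \langle \nabla g(p^{(k)}), y^* - p^{(k)}\rangle + (1-\lambda_k)\langle \nabla g(p^{(k)}), y^{(k)} - p^{(k)}\rangle,
\]
which is exactly the middle line displayed in the lemma. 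No new ideas are needed here; it is pure algebra that relies only on the affine combination defining $p^{(k)}$.

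To finish, I will invoke convexity of $g$ (established in Lemma~\ref{lem:jeg} via $g(y) = \tfrac{1}{2}\|Ay\|_2^2$) at the anchor point $p^{(k)}$: the first-order characterization gives $\langle \nabla g(p^{(k)}), y^* - p^{(k)}\rangle \le g(y^*) - g(p^{(k)})$ and $\langle \nabla g(p^{(k)}), y^{(k)} - p^{(k)}\rangle \le g(y^{(k)}) - g(p^{(k)})$. Since $\lambda_k \in (0,1]$ (so that $1-\lambda_k \ge 0$), which follows by induction from the initialization $\lambda_k \leftarrow 1$ at restart iterations in step 3 and the update rule $\lambda_{k+1} = (\sqrt{\lambda_k^4 + 4\lambda_k^2} - \lambda_k^2)/2 \in (0, \lambda_k)$ in step 10, I may multiply these two convexity inequalities by the nonnegative weights $\lambda_k$ and $1-\lambda_k$ respectively and add them, yielding the desired upper bound.

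There is essentially no obstacle in this proof. The only thing to be slightly careful about is confirming that the weights $\lambda_k$ and $1-\lambda_k$ are nonnegative so that convexity can be applied in the weighted form; this is a direct consequence of the update rule for $\lambda_k$ in Algorithm 2 and does not require any additional argument beyond verifying that $\lambda_{k+1} \le \lambda_k \le 1$ inductively from the restart value $\lambda_k = 1$.
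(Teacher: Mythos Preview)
Your proposal is correct and matches the paper's own treatment: the paper simply remarks that the result ``follows from the convexity property of the function $g$,'' with the algebraic identities already displayed in the lemma statement, and your write-up makes this explicit by substituting $\lambda_k z^{(k)} = p^{(k)} - (1-\lambda_k)y^{(k)}$ and then applying first-order convexity at $p^{(k)}$ with nonnegative weights $\lambda_k$ and $1-\lambda_k$.
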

We are now ready to analyze the decrease of the objective function in each iteration of Algorithm 2. The expectation in the following equations is performed with respect to $C_{i_k}\sim P$. 
\begin{align}
&\quad  \mathbb{E}\left[g(y^{(k+1)})\right] \nonumber \\
& \stackrel{1)}{\leq}  g(p^{(k)}) +\frac{\lambda_k}{\alpha}\mathbb{E}\left\{ \sum_{r\in C_{i_k}} \left[\langle \nabla_r g(p^{(k)}), z_r^{(k+1)}-z_r^{(k)}\rangle + \frac{\lambda_k}{2\alpha}\| z_r^{(k+1)}-z_r^{(k)}\|_{2,\mu^C}^2\right]\right\} 
\nonumber \\
&\stackrel{2)}{=}  g(p^{(k)}) +\frac{\lambda_k}{\alpha}\mathbb{E}\left\{ \sum_{r\in C_{i_k}} \left[\langle \nabla_r g(p^{(k)}), z_r^{(k+1)}-z_r^{(k)}\rangle + \frac{\lambda_k}{2\alpha}\| z_r^{(k+1)}-z_r^{(k)}\|_{2,\theta_r^P}^2\right]\right\} 
\nonumber \\
& = g(p^{(k)}) + \frac{\lambda_k}{\alpha}\mathbb{E}\left\{\sum_{r\in C_{i_k}} \left[\langle \nabla_r g(p^{(k)}), y_r^{*}-z_r^{(k)}\rangle + \langle \nabla_r g(p^{(k)}), z_r^{(k+1)}-z_r^{*}\rangle  + \frac{\lambda_k}{2\alpha}\| z_r^{(k+1)}-z_r^{(k)}\|_{2,\theta_r^P}^2 \right]\right\} \nonumber \\
& \stackrel{3)}{\leq} g(p^{(k)}) +  \frac{\lambda_k}{\alpha}\mathbb{E}\left\{\sum_{r\in C_{i_k}} \left[\langle \nabla_r g(p^{(k)}), y_r^{*}-z_r^{(k)}\rangle - \frac{\lambda_k}{2\alpha}\|z_r^{(k+1)} - y_r^{*}\|_{2,\theta_r^P}^2   + \frac{\lambda_k}{2\alpha} \|y_r^{*} - z_r^{(k)}\|_{2,\theta_r^P}^2  \right]\right\} \nonumber \\
&= g(p^{(k)}) + \lambda_k\langle \nabla g(p^{(k)}), y^{*}-z^{(k)}\rangle +  \frac{\lambda_k^2}{2\alpha^2} \mathbb{E}\left[\|z_{[C_{i_k}]}^{(k)} - y_{[C_{i_k}]}^{*}\|_{2,\theta^P}^2 -\|z_{[C_{i_k}]}^{(k+1)} - y_{[C_{i_k}]}^{*}\|_{2,\theta^P}^2\right] \nonumber \\
&\stackrel{4)}{=} g(p^{(k)}) + \lambda_k\langle \nabla g(p^{(k)}), y^{*}-z^{(k)}\rangle +  \frac{\lambda_k^2}{2\alpha^2} \mathbb{E}\left[\|z^{(k)} - y^{*}\|_{2,\theta^P}^2 -\|z^{(k+1)} - y^{*}\|_{2,\theta^P}^2\right] \nonumber \\
& \stackrel{5)}{=} g(y^{*}) +(1-\lambda_k)\left[g(y^{(k)})- g(y^{*})\right] +  \frac{\lambda_k^2}{2\alpha^2} \left\{\|z^{(k)} - y^{*}\|_{2,\theta^P}^2 -\mathbb{E}\left[\|z^{(k+1)} - y^{*}\|_{2,\theta^P}^2\right]\right\},
\label{onestep2}
\end{align}
where $1)$ follows from~\eqref{paradescent}, $2)$ may be deduced from Lemma~\ref{eqnorm2}, $3)$ is a consequence of Lemma~\ref{PACDpoptcond} and Lemma~\ref{PACDpthreepoints}, $4)$ is due to the fact that $y^{(k+1)}_r = y^{(k)}_r$ for $r\notin C_{i_k},$ and $5)$  follows from Lemma~\ref{PACDpeq3}. 

Based on the definition of $\{\lambda_k\}_{k\geq 0}$, we also have
\begin{align} \label{theta} 
\frac{1-\lambda_k}{\lambda_k^2} = \frac{1}{\lambda_{k-1}^2},\quad 0<\lambda_{k+1}\leq\lambda_{k}\leq \frac{2}{k+2/\lambda_0} = \frac{2}{k+2}. 
\end{align}
Hence, combining the above expression with~\eqref{onestep2}, for $k\in[1,\frac{2}{\alpha}\lceil\sqrt{N\|\theta^P\|_{1,\infty}}\rceil+1]$, we have
\begin{align}
&\mathbb{E}\left[\frac{1-\lambda_{k}}{\lambda_k^2}\left[g(y^{(k)}) - g(y^{*})\right] +  \frac{1}{2\alpha^2}\|z^{(k)} - y^{*}\|_{2,\theta^P}^2 \right] \nonumber \\
  & = \mathbb{E}\left[\frac{1}{\lambda_{k-1}^2}\left[g(y^{(k)}) - g(y^{*})\right] +  \frac{1}{2\alpha^2}\|z^{(k)} - y^{*}\|_{2,\theta^P}^2 \right] \nonumber\\
 &  \leq  \mathbb{E}\left[\frac{1-\lambda_{k-1}}{\lambda_{k-1}^2}\left[g(y^{(k-1)}) - g(y^{*})\right] +  \frac{1}{2\alpha^2}\|z^{(k-1)} - y^{*}\|_{2,\theta^P}^2 \right]\nonumber \\
 & \leq \dots \leq  \frac{(1-\lambda_0)}{\lambda_0^2}\left[g(y^{(0)}) - g(y^{*})\right] +  \frac{1}{2\alpha^2}\|z^{(0)} - y^{*}\|_{2,\theta^P}^2. \label{PACDpbound}
\end{align}
Lemma~\ref{skewstrongconv} implies the strong convexity property as
\begin{align}
 \|Ay^{(k)} -Ay^*\|_2^2& \geq \frac{2}{N\|\theta^P\|_{1,\infty}} \|y^{(k)}-y^*\|_{2,\theta^P}^2 \Rightarrow \nonumber  \\
 g(y^{(k)}) - g(y^*)  &\geq \langle \nabla g(y^{*}),  y^{(k)} -  y^*\rangle + \frac{1}{N\|\theta^P\|_{1,\infty}}\|y^{(k)}-y^*\|_{2,\theta^P}^2  \nonumber \\
&\stackrel{1)}{\geq}  \frac{1}{N\|\theta^P\|_{1,\infty}}\|y^{(k)}-y^*\|_{2,\theta^P}^2. \label{sc5} 
\end{align} 
Here, $1)$ holds since $y^*$ is an optimal solution of $\min_y g(y)$ and thus $\langle \nabla g(y^{*}),  y^{(k)} -  y^*\rangle \geq 0$.

Combining \eqref{theta}, \eqref{PACDpbound} and \eqref{sc5}, we obtain  
\begin{align*}
\mathbb{E}\left[g(y^{(k)}) - g(y^{*})\right] & \leq \lambda_{k-1}^2\left[\frac{1-\lambda_0}{\lambda_0^2}(g(y^{(0)}) - g(y^{*}))+  \frac{1}{2\alpha^2}\|y^{(0)} - y^{*}\|_{2,\theta^P}^2\right] \\
&\leq\left( \frac{2}{k+1}\right)^2  \frac{1}{2\alpha^2}\|y^{(0)} - y^{*}\|_{2,\theta^P}^2  \\
&\leq \left( \frac{2}{k+1}\right)^2 \frac{N\|\theta^P\|_{1,\infty}}{2\alpha^2}(g(y^{(0)}) - g(y^{*})).
\end{align*}
Therefore, for $k =\left\lceil(1+c)\frac{\sqrt{2N\|\theta^P\|_{1,\infty} }}{\alpha} + c\right\rceil$, we have
\begin{align*}
\mathbb{E}\left[g(y^{\left(\left\lceil(1+c)\frac{\sqrt{2N\|\theta^P\|_{1,\infty} }}{\alpha} + c\right\rceil \right)}) - g(y^{*})\right] \leq \frac{1}{1+c}(g(y^{(0)}) - g(y^{*})).
\end{align*}
For each value of $k = l \times \left\lceil(1+c)\frac{\sqrt{2N\|\theta^P\|_{1,\infty} }}{\alpha} + c\right\rceil$, $l\in \mathbb{Z}_{\geq 0}$, the values $z^{(k)}$, $\lambda_k$ are reinitialized. Using a similar proof as above, we have
\begin{align*}
\mathbb{E}\left[g(y^{\left((l+1)\times\left\lceil(1+c)\frac{\sqrt{2N\|\theta^P\|_{1,\infty} }}{\alpha} + c\right\rceil \right)}) - g(y^{*})\right] \leq \frac{1}{1+c}\left[g(y^{\left(l\times\left\lceil(1+c)\frac{\sqrt{2N\|\theta^P\|_{1,\infty} }}{\alpha} + c\right\rceil \right)})- g(y^{*})\right].
\end{align*}
Therefore, 
\begin{align*}
\mathbb{E}\left[g(y^{\left(l\left\lceil(1+c)\frac{\sqrt{2N\|\theta^P\|_{1,\infty} }}{\alpha} + c\right\rceil \right)}) - g(y^{*})\right] \leq \frac{1}{(1+c)^l}(g(y^{(0)}) - g(y^{*})).
\end{align*}
This concludes the proof. 

\subsection{Avoiding Full-Dimensional Vector Operations}\label{blockversion}
\begin{table}[htb]
\centering
\begin{tabular}{l}
\hline
\textbf{Algorithm 3:} \textbf{Parallel ACDM for Solving Problem (7) (an efficient implementation)} \\
\hline
\label{PACD}
\ \textbf{Input}: $\mathcal{B}$, $\alpha$ \\
\ 0: Initialize $z^{(0)}\in\mathcal{B}$, $u^{(0)}\leftarrow 0 \in \mathbb{R}^N$, $k\leftarrow 0$.\\
\ 1: Do the following steps iteratively until the dual gap $<\epsilon$:\\
\ 2: \quad If $k = l\left\lceil(1+c)\sqrt{\frac{2N\|\theta^P\|_{1,\infty}}{\alpha}} +c\right\rceil$ for some $l\in \mathbb{Z}$ and $c>0$, \\
\quad\quad\quad\quad $z^{(k)} \leftarrow z^{(k)} + \lambda_{k-1}^2 u^{(k)}$, $u^{(k)}\leftarrow 0$, $\lambda_k \leftarrow 1$ \\
\ 3: \quad Sample one set $C_{i_k}$ according to a $\alpha$-proper distribution $P$  \\
\ 4: \quad For $r\in C_{i_k}$: \\
\ 5: \quad\quad $\triangle z_r\leftarrow \arg\min_{\triangle z + z_r^{(k)}\in B_r} \|\triangle z + \frac{\alpha}{\lambda_k}(\theta_r^P)^{-1}\odot\nabla_r g(z^{(k)}+ \lambda_k^2 u^{(k)})\|_{2,\theta_r^P}^2$ \\
\ 6: \quad\quad $z_r^{(k+1)} \leftarrow z_r^{(k)}+\triangle z_r $ \\
\ 7: \quad \quad $u_r^{(k+1)} \leftarrow u_r^{(k)} +\frac{\lambda_k-\alpha}{\alpha\lambda_k^2}\triangle z_r$ \\
\ 8: \quad $\lambda_{k+1} \leftarrow \frac{\sqrt{\lambda_k^4 + 4 \lambda_k^2}-\lambda_k^2}{2}$  \\
\ 9: \quad $k\leftarrow k+1$ \\
\ 10: Output $y^{(k)}$ \\
\hline
\end{tabular}
\end{table}
Algorithm 2 can be implemented without full-dimensional vector operations. In each step, only those coordinates within the blocks in $C$ are updated. Consequently, one only needs to replace $p^{(k)}$ and $y^{(k)}$ with $p^{(k)} = z^{(k)} + \lambda_{k}^2 u^{(k)}$ and $y^{(k)} = z^{(k)} + \lambda_{k-1}^2 u^{(k)}$, where $u^{(k)}$ is a new variable described in Algorithm 3. 

\section{Minimization of $\|\theta^P\|_{1,\infty}$}\label{sec:comb}
 \begin{table}[htb]
\centering
\begin{tabular}{l}
\hline
\label{Greedy}
\textbf{Algorithm 4: } \textbf{A Greedy Algorithm for Finding a Balanced-Partition Distribution} \\
\hline
\ \textbf{Input}: $\{S_r\}_{r\in[R]}$, $K$ \\
\ 0: Initialize the partition $\mathcal{C} = \{C_i\}_{1\leq i \leq m}$, $C_i\leftarrow \emptyset$, vectors $\{\mu^{C_i}\}_{1\leq i\leq m}$, $\mu^{C_i}\in \mathbb{R}^N$,  \\
\quad and $\mu^{\max}\in \mathbb{R}^N$, $\mu^{\max}\leftarrow 0$. \\
\ 1: For $r$ from 1 to $R$:\\
\ 2: \quad For $i$ from 1 to $m$:\\
\ 3: \quad\quad If $|C_i|< K$: \\
\ 4: \quad \quad \quad $\triangle \mu^{C_i} \leftarrow 0$  \\
\ 5: \quad \quad \quad For $v$ in $S_r$, if $\mu_v^{C_i}$ is equal to $\mu_v^{\max}$, $\triangle \mu^{C_i} \leftarrow \triangle \mu^{C_i} + 1$ \\
\ 6: \quad \quad else: $\triangle \mu^{C_i}\leftarrow \infty$ \\
\ 7: \quad $i^* \leftarrow \arg\min_{i} \triangle \mu^{C_i}$ \\
\ 8: \quad $C_{i^*} \leftarrow C_{i^*}\cup \{r\}$  \\
\ 9: \quad For $v$ in $S_r$, $\mu_v^{C_{i^*}}\leftarrow \mu_v^{C_{i^*}} + 1$, $\mu_v^{\max} \leftarrow \max\{\mu_v^{\max} , \mu_v^{C_{i^*}}\}$. \\
\ 10: Output $\mathcal{C}$. \\
\hline
\end{tabular}
\end{table}
We first define $\triangle_* \triangleq  \max_{r\in [R]} |\{r'\in [R]|S_{r'} \cap S_{r}\neq \emptyset\}|$, which we use in our subsequent derivations.

As shown in Theorem~\ref{PCDrate} and Theorem~\ref{paraACD}, $\|\theta^P\|_{1,\infty}$ plays an important role in the convergence rate of CDMs. Hence, we are interested in identifying the optimal sampling strategy $P$ that minimizes $\|\theta^P\|_{1,\infty}$. 

For this purpose, consider a partition of $[R]$ into $m=\lceil\frac{1}{\alpha}\rceil$ parts $\{C_i\}_{1\leq i \leq m},$ such that $|C_{i}| \in \{K-1, K\}$. We refer to such a partition as a \emph{balanced partition}. In this case, every block $r$ is in exactly one component $C_i$ and $\|\theta^P\|_{1,\infty} = \sum_{v\in[N]} \max_{i\in[m]} \mu_v^{C_i}$. As a result, the problem of minimizing $\|\theta^P\|_{1,\infty}$ is closely related to the so called \emph{equitable coloring problem} first proposed by Meyer~\cite{meyer1973equitable}.
\begin{definition}[Meyer~\cite{meyer1973equitable}]
Given a graph, an \emph{equitable coloring} is an assignment of colors to the vertices that satisfies the following two properties: no two adjacent vertices share the same color and the number of vertices in any two color classes differs by at most one. Moreover, the minimum number of colors in any equitable coloring is termed the \emph{equitable coloring number}.
\end{definition}
Hajnal-Szemer{\'e}di's Theorem~\cite{hajnal1970proof} established one of the most important results in equitable graph coloring: a graph is equitably $k$-colorable if $k$ is strictly greater than the maximum vertex degree. This bound is tight. We can construct a graph based on the incidence structure of DSFM problem so that a vertex corresponds to a component submodular function and two vertices are connected iff the corresponding submodular functions are incident to at least one common point. An equitable coloring of this graph can be used to assign submodular functions of the same color class to a set $C_i$ in $\mathcal{C}$. This guarantees that $\mu_v^{C_i}\leq 1$ for all $C_i$ and all $v\in[N]$. Note that the maximal degree of this graph is $\triangle_*$. By directly applying Hajnal-Szemer{\'e}di's Theorem, we have the following lemma. 
\begin{lemma}\label{equitablecoloring}
There exists a balanced-partition distribution $P$ such that $\|\theta^P\|_{1,\infty} = N$, provided that $\lceil\frac{1}{\alpha}\rceil \geq \triangle_*+1$. 
\end{lemma}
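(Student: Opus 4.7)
The plan is to reduce the existence question to an equitable graph coloring problem and then invoke Hajnal--Szemer{\'e}di's theorem as suggested in the statement. First, I introduce the \emph{conflict graph} $G = ([R], E)$ whose edges connect distinct $r, r' \in [R]$ whenever $S_r \cap S_{r'} \neq \emptyset$. By the definition of $\triangle_*$, every vertex of $G$ has degree at most $\triangle_* - 1$ (the $-1$ accounting for the exclusion of $r' = r$ from the neighborhood count). The hypothesis $m := \lceil 1/\alpha \rceil \geq \triangle_* + 1$ then makes $m$ strictly larger than the maximum degree of $G$, so Hajnal--Szemer{\'e}di's theorem yields an equitable proper $m$-coloring of $G$; let $C_1, \ldots, C_m$ denote its color classes.

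Second, I verify that $\{C_i\}_{1 \leq i \leq m}$ has the structural properties required by the definition of a balanced partition. Equitability gives $|C_i| \in \{\lfloor R/m \rfloor, \lceil R/m \rceil\}$, which lies in $\{K-1, K\}$ once one recalls $K = \alpha R$ and $m = \lceil 1/\alpha \rceil$, so $\{C_i\}$ is balanced. Because the coloring is proper, two functions that share any common incident element receive different colors, so $\mu_v^{C_i} \in \{0, 1\}$ for every $v \in [N]$ and every $i \in [m]$.

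Third, take $P$ to be the uniform distribution over $\{C_1, \ldots, C_m\}$. Since each $r \in [R]$ lies in exactly one color class, $\mathbb{P}_{C \sim P}(r \in C) = 1/m = \alpha$, so $P$ is $\alpha$-proper. The conditional expectation $\theta_r^P = \mathbb{E}_{C \sim P}[\mu^C \mid r \in C]$ is deterministic and equal to $\mu^{C_{i(r)}}$, where $i(r)$ denotes the unique color class containing $r$. Using $\mu_v^{C_i} \leq 1$ for all $v, i$,
\begin{align*}
\|\theta^P\|_{1, \infty} \;=\; \sum_{v \in [N]} \max_{r : v \in S_r} \mu_v^{C_{i(r)}} \;\leq\; \sum_{v \in [N]} 1 \;=\; N,
\end{align*}
and combining this with the matching lower bound $\|\theta^P\|_{1, \infty} \geq N$ from Lemma~\ref{lowerbound} forces equality.

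The principal obstacle is recognizing that the combinatorial requirement ``each block of the partition contains no two functions that share an incident element'' is exactly the condition that the partition be a proper coloring of $G$, and simultaneously balanced; once this translation is identified, Hajnal--Szemer{\'e}di's theorem handles the existence of the partition. A minor technical point is aligning the equitable block sizes $\{\lfloor R/m \rfloor, \lceil R/m \rceil\}$ with the balanced-partition definition $\{K-1, K\}$, which follows directly from $K = \alpha R$ and $m = \lceil 1/\alpha \rceil$.
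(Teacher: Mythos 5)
Your proof is correct and follows essentially the same route as the paper: form the conflict graph on $[R]$ with edges between functions sharing an incident element, apply Hajnal--Szemer{\'e}di to get an equitable proper $m$-coloring whose color classes form the balanced partition, and use properness to conclude $\mu_v^{C_i}\leq 1$ hence $\|\theta^P\|_{1,\infty}=N$. The only difference is cosmetic — you close the argument by invoking the lower bound of Lemma~\ref{lowerbound}, whereas one can also note directly that $\mu_v^{C_{i(r)}}=1$ whenever $v\in S_r$ — and you are slightly more careful than the paper about the self-inclusion of $r$ in the definition of $\triangle_*$.
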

As in many applications, such as image segmentation~\cite{stobbe2010efficient}, the value of $\triangle_*$ is small, and hence using a balanced-partition instead of one obtained through sampling uniformly at random may produce significantly better results. Unfortunately, finding the equitable coloring number is an NP-hard problem; still, a polynomial time algorithm for finding $\triangle_*+1$ equitable colorings was described in~\cite{kierstead2010fast}, with complexity $O(\triangle_*R^2)$. We describe a greedy algorithm that outputs a balanced-partition distribution and aims to minimize $\|\theta^P\|_{1,\infty}$ in Algorithm 4. According to our experimental results, the sampling strategy $P$ found by Algorithm 4 works better than sampling uniformly at random.

\section{Using Weighted Proximal Terms} \label{sec:orthproj} 
The AP and RCDM solvers discussed in the main text are designed to solve the convex optimization~\eqref{smooth}, but also produce a solution to the discrete optimization problem~\eqref{DSFM}. To solve the discrete optimization problem~\eqref{DSFM}, another convex optimization formulation may be considered instead:
\begin{align}\label{weightedsmooth}
\min_{x\in\mathbb{R}^N}\sum_{r\in[R]} f_r(x) + \frac{1}{2} \|x\|_{2, w}^2,
\end{align}
where the choice of $w\in\mathbb{R}_{> 0}^N$ will be described later. By using the arguments in~\cite{chambolle2009total} or in Chapter 8.1-8.2 of~\cite{bach2013learning}, we know that the solution of the discrete optimization problem~\eqref{DSFM} can be obtained as $S = \{i\in[N] | x_i^* > 0\}$, where $x^*$ is a solution of~\eqref{weightedsmooth}. 

Next, we describe how a proper choice of $w$ allows one to avoid compute oblique projections in the AP and parallel CDM algorithms. If oblique projections are allowed, a good choice for $w$ may also decrease the computational complexities listed in Table~\ref{tab:results}. The results obtained based on weighted proximal terms are summarized in Table~\ref{tab:results2}. 
 \begin{table*}[h] 
\begin{tabular}{|p{1cm}<{\centering}|p{4.0cm}<{\centering}|p{5.2cm}<{\centering}|}
\hline
\multirow{ 2}{*}{} & \multicolumn{2}{c|}{Using Orthogonal Projection $\Pi_{\mathcal{B}_r}(\cdot)$}  \\
\cline{2-3}
& The Value of $w$ & Complexity  \\
\hline
AP & $w = \mu$  & $O(N\|\mu\|_1\frac{R}{K})$ \\
\hline
RCDM & $w = \frac{R-K}{R-1} 1 +  \frac{K-1}{R-1} \mu$ & $O\left(\left(\frac{R-K}{R-1} N^2 +  \frac{K-1}{R-1} N\|\mu\|_1\right)\frac{R}{K}\right)$  \\
\hline
ACDM & $w = \frac{R-K}{R-1} 1 +  \frac{K-1}{R-1} \mu$ & $O\left(\left(\frac{R-K}{R-1} N^2 +  \frac{K-1}{R-1} N\|\mu\|_1\right)^{\frac{1}{2}}\frac{R}{K}\right)$  \\
\hline
\multirow{ 2}{*}{} & \multicolumn{2}{c|}{Using Oblique Projection $\Pi_{\mathcal{B}_r, w^{1/2}}(\cdot)$} \\
\cline{2-3}
& The Value of $w$ & Complexity  \\
\hline
AP  & $w = \mu^{\frac{1}{2}}$ &  $O(\|\mu^{\frac{1}{2}}\|_1^2 \frac{R}{K})$  \\
\hline
RCDM & $w = \left(\frac{R-K}{R-1} 1 +  \frac{K-1}{R-1} \mu\right)^{\frac{1}{2}}$& $O\left( \left\|\left(\frac{R-K}{R-1} 1 +  \frac{K-1}{R-1} \mu\right)^{\frac{1}{2}}\right\|_1^2\frac{R}{K}\right)$ \\
\hline
ACDM & $w = \left(\frac{R-K}{R-1} 1 +  \frac{K-1}{R-1} \mu\right)^{\frac{1}{2}}$ & $O\left(\left\|\left(\frac{R-K}{R-1} 1 +  \frac{K-1}{R-1} \mu\right)^{\frac{1}{2}}\right\|_1\frac{R}{K}\right)$ \\
\hline
\end{tabular}
\centering 
\caption{New complexity results based on weighted proximal terms: here, complexity refers to the required number of iterations needed to achieve an $\epsilon-$optimal solution (the dependence on $\epsilon$ is the same for all algorithms and hence omitted). As before, $K$ is the parallelization parameter and it equals the number of min-norm points problems that are solved within each iteration; $K=1$ reduces to the sequential case.}
\label{tab:results2}
\end{table*}

We now analyze the new objective~\eqref{weightedsmooth} in more detail. The proof techniques used in the main text carry over to the setting involving weighted proximal terms.

By using a dual strategy similar to those described in Lemma~\ref{lem:jeg} and Lemma~\ref{dualprob}, we arrive at the dual formulation of problem~\eqref{weightedsmooth} described in the next lemma. Note that the derivation of~\eqref{weighteddistance} takes into account the underlying incidence relations. 
\begin{lemma}
The dual problem of~\eqref{weightedsmooth} reads as 
\begin{align}\label{weighteddistance}
\min_{a,y} \|a-y\|_{2, I(w^{-1}\odot\mu)}^2 \quad \text{s.t.} \quad y\in \mathcal{B}, Aa = 0,\;\text{and}\; a_{r,i} = 0,\; \forall (r, i): i\notin S_r, r\in[R].  
\end{align}
Moreover, problem \eqref{distance} may be written in a more compact form as
\begin{align}\label{weightedcompact}
\min_{y} \|Ay\|_{2, w^{-1}}^2 \quad \text{s.t.} \quad y\in \mathcal{B}.
\end{align}
For both problems, the primal and dual variables are related according to $x = - w^{-1} \odot Ay$.
\end{lemma}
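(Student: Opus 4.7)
The plan is to mirror the derivations used in Lemmas~\ref{lem:jeg} and~\ref{dualprob}, with careful bookkeeping for the weight vector $w$. I would first establish the compact form~\eqref{weightedcompact} by direct convex conjugation. Using $f_r(x) = \max_{y_r\in\mathcal{B}_r}\langle y_r, x\rangle$, the primal~\eqref{weightedsmooth} can be rewritten as a saddle-point problem $\min_{x}\max_{y\in\mathcal{B}}\langle Ay, x\rangle + \tfrac{1}{2}\|x\|_{2,w}^2$. Since $\mathcal{B}$ is compact and the $x$-dependence is strongly convex, strong duality applies and we may swap the order to $\max_{y\in\mathcal{B}}\min_{x}\langle Ay, x\rangle + \tfrac{1}{2}\|x\|_{2,w}^2$. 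The inner problem is a separable quadratic; setting $\nabla_x = Ay + w\odot x = 0$ yields the closed-form solution $x = -w^{-1}\odot Ay$, which is exactly the primal-dual relation. Substituting back gives $\max_{y\in\mathcal{B}}\bigl(-\tfrac{1}{2}\|Ay\|_{2,w^{-1}}^2\bigr)$, and flipping the sign yields~\eqref{weightedcompact}.

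Next I would show that~\eqref{weighteddistance} is equivalent to~\eqref{weightedcompact}, following the chain of equalities in the proof of Lemma~\ref{dualprob} but with the weighted norm. Starting from $\min_{y\in\mathcal{B}}\min_{a:\,Aa=0,\,a_{r,i}=0\,\forall i\notin S_r}\tfrac{1}{2}\|y-a\|_{2,I(w^{-1}\odot\mu)}^2$, I introduce a Lagrange multiplier $\lambda\in\mathbb{R}^N$ for the constraint $Aa=0$, use the incidence constraints to restrict the double sum to $(r,i)$ with $i\in S_r$, and swap min and max by strong duality. The resulting unconstrained quadratic in each $a_{r,i}$ is stationarized at $a_{r,i} = y_{r,i} + w_i\mu_i^{-1}\lambda_i$. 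Plugging back and using $\mu_i = |\{r\in[R]:i\in S_r\}|$ to collapse the sum over $r$ (so that $\sum_{r: i\in S_r} w_i\mu_i^{-1}\lambda_i^2 = w_i\lambda_i^2$), the inner value becomes $-\tfrac{1}{2}\|\lambda\|_{2,w}^2 - \langle\lambda, Ay\rangle$. Maximizing over $\lambda$ gives $\lambda = -w^{-1}\odot Ay$ with optimal value $\tfrac{1}{2}\|Ay\|_{2,w^{-1}}^2$, matching~\eqref{weightedcompact}. Identifying $\lambda$ with $x$ (as in Lemma~\ref{lem:jeg}, where the same multiplier plays the role of the primal variable) confirms the claimed relation $x = -w^{-1}\odot Ay$.

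The main obstacle, if any, is the bookkeeping of weight factors when aggregating over $r$: one must verify that the factor $w_i^{-1}\mu_i$ appearing in the norm $I(w^{-1}\odot\mu)$ inverts correctly in the stationarity condition to produce a factor $w_i\mu_i^{-1}$ per block, and that the subsequent sum over $r\in\{r:i\in S_r\}$ multiplies this by $\mu_i$ to yield exactly $w_i$, so that the final aggregate matches $\|\lambda\|_{2,w}^2$ and hence $\|Ay\|_{2,w^{-1}}^2$ after the outer maximization. Once this arithmetic is verified, no new ideas are required beyond those already present in Lemmas~\ref{lem:jeg} and~\ref{dualprob}, and the proof proceeds in a routine fashion.
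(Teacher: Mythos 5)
Your proposal is correct and follows essentially the same route the paper intends: the paper proves this lemma only by remarking that one applies ``a dual strategy similar to those described in Lemma~\ref{lem:jeg} and Lemma~\ref{dualprob},'' and your argument carries out exactly that computation, with the weight bookkeeping (stationarity at $a_{r,i}=y_{r,i}+w_i\mu_i^{-1}\lambda_i$, collapse of the sum over $r$ via $\mu_i=|\{r: i\in S_r\}|$ to give $\|\lambda\|_{2,w}^2$, and $\lambda=-w^{-1}\odot Ay$) all checking out.
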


\subsection{The Incidence Relations AP (IAP) Method for Solving~\eqref{weighteddistance}}
The steps of the IAP method are listed in Algorithm 5. 
\begin{table}[htb]
\centering
\begin{tabular}{l}
\hline
\label{APM}
\textbf{Algorithm 5: } \textbf{The IAP Method for Solving~\eqref{weighteddistance}} \\
\hline
\ 0: For all $r$, initialize $y_r^{(0)} \in \mathcal{B}_r$, and $k\leftarrow 0$\\
\ 1: In iteration $k$:\\
\ 2: \; For all $r\in [R]$:\\
\ 3: \quad \; $a_{r, i}^{(k+1)} \leftarrow y_{r,i}^{(k)} - \mu_i^{-1} (Ay^{(k)})_i$ for all $i\in S_r$ \\
\ 4: \quad \; $y_r^{(k+1)}\leftarrow  \Pi_{\mathcal{B}_r, w^{-1}\odot\mu}(a_r^{(k+1)})$\\
\hline
\end{tabular}
\end{table}
\\The convergence properties of Algorithm 5 can be characterized similarly as those of IAP for solving~\eqref{newdistance}. The latter relies on a finite upper bound for $\kappa_* \triangleq \sup\limits_{y\in\mathcal{Z} \cup \mathcal{B} / \Xi}\frac{d_{I(w^{-1}\odot \mu)}(y, \Xi)}{\max\{d_{I(w^{-1}\odot \mu)}(y,\mathcal{Z}), d_{I(w^{-1}\odot \mu)}(y, \mathcal{B})\}}$. 
\begin{lemma}\label{weightedkappabound}
One has $\kappa_* \leq \sqrt{\frac{\|w^{-1}\odot \mu\|_1 \|w\|_1}{2}} + 1$. When $w=\mu$, $\kappa_* \leq \sqrt{\frac{N\|\mu\|_1}{2}} + 1$.
\end{lemma}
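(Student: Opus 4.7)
The plan is to mirror the proof of Lemma~\ref{kappabound}, but to track the weight vector $w$ throughout and specialize the two inequalities of Lemma~\ref{lemmakappabound} differently than in the unweighted case. Split the supremum defining $\kappa_*$ into the two regimes $y\in\mathcal{B}/\Xi$ and $y\in\mathcal{Z}/\Xi$, and bound $\kappa(y)$ on each separately.

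For $y\in\mathcal{B}/\Xi$, I will first compute $d_{I(w^{-1}\odot\mu)}(y,\mathcal{Z})$ in closed form using the same Lagrangian calculation as in the proof of Lemma~\ref{kappabound}. The weights now multiply the quadratic by $w_i^{-1}\mu_i$ rather than $\mu_i$, and zeroing out the gradient in $a_{r,i}$ gives $a_{r,i}=y_{r,i}+w_i\mu_i^{-1}\lambda_i$; imposing $Aa=Ay^*$ yields $\lambda_i=w_i^{-1}(Ay^*-Ay)_i$, and substitution produces
\begin{align*}
d_{I(w^{-1}\odot\mu)}^{2}(y,\mathcal{Z}) \;=\; \|Ay-Ay^*\|_{2,w^{-1}}^{2}.
\end{align*}
Then I invoke Lemma~\ref{lemmakappabound} with $\theta=I(w^{-1}\odot\mu)$ (so $\|\theta\|_{1,\infty}=\|w^{-1}\odot\mu\|_1$) and its second (weighted) bound, with the weight vector there chosen to be $w$, to obtain $\xi\in\mathcal{B}$ with $A\xi=Ay^*$ satisfying $\|\xi-y\|_{2,I(w^{-1}\odot\mu)}\leq \sqrt{\|w^{-1}\odot\mu\|_1\|w\|_1/2}\;\|Ay-Ay^*\|_{2,w^{-1}}$. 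This bounds $\kappa(y)$ by the claimed constant on this regime.

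For $y\in\mathcal{Z}/\Xi$, let $y'\in\mathcal{B}$ attain $d_{I(w^{-1}\odot\mu)}(y,\mathcal{B})$ and use the first (unweighted) bound of Lemma~\ref{lemmakappabound} with $\theta=I(w^{-1}\odot\mu)$ to produce $\xi\in\mathcal{B}$ with $A\xi=Ay$ and $\|\xi-y'\|_{2,I(w^{-1}\odot\mu)}\leq \sqrt{\|w^{-1}\odot\mu\|_1/2}\,\|Ay'-Ay\|_{1}$. The crucial step is to convert this $\ell_1$ norm into the skewed norm, which is where the $\|w\|_1$ factor enters. Expanding componentwise and applying Cauchy--Schwarz twice---once inside each vertex $v$, using the identity $\sum_{r:v\in S_r} w_v\mu_v^{-1}=w_v$, and once across $v$---gives
\begin{align*}
\|Ay'-Ay\|_{1} \;\leq\; \sum_{v} \sqrt{w_v}\Big(\sum_{r:v\in S_r} w_v^{-1}\mu_v (y'_{r,v}-y_{r,v})^{2}\Big)^{1/2} \;\leq\; \sqrt{\|w\|_1}\;\|y'-y\|_{2,I(w^{-1}\odot\mu)}.
\end{align*}
A triangle inequality $d_{I(w^{-1}\odot\mu)}(y,\Xi)\leq \|y-y'\|_{2,I(w^{-1}\odot\mu)}+\|y'-\xi\|_{2,I(w^{-1}\odot\mu)}$ then delivers $\kappa(y)\leq 1+\sqrt{\|w^{-1}\odot\mu\|_1\|w\|_1/2}$ on this regime as well, and combining both cases yields the general bound. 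The specialization to $w=\mu$ is immediate since $\|w^{-1}\odot\mu\|_1=N$ and $\|w\|_1=\|\mu\|_1$.

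The one delicate step is the two-stage Cauchy--Schwarz above: it has to be carried out with the precise splitting $\sqrt{w_v^{-1}\mu_v}\cdot\sqrt{w_v\mu_v^{-1}}$ so that the combinatorial sum $\sum_{r:v\in S_r}1=\mu_v$ cancels $\mu_v^{-1}$ and leaves exactly a factor $w_v$, whose outer sum gives $\|w\|_1$. Any other pairing would produce a looser constant (such as $\|w^{-1}\|_1\|w\|_\infty^2$) that fails to recover the clean unweighted Lemma~\ref{kappabound} at $w=\mu$, so this is the place the proof must be handled carefully.
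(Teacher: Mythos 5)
Your proposal is correct and takes essentially the same route as the paper's (much terser) proof, which likewise instantiates Lemma~\ref{lemmakappabound} with $\theta=I(w^{-1}\odot\mu)$ and repeats the two-case argument of Lemma~\ref{kappabound} with the weights tracked. One small wording slip: you say the weight vector in Lemma~\ref{lemmakappabound} is ``chosen to be $w$,'' but the inequality you then display (with the factor $\|w\|_1$ and the norm $\|Ay-Ay^*\|_{2,w^{-1}}$) corresponds to choosing that weight vector to be $w^{-1}$, which is what the argument actually requires and what the paper prescribes.
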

\begin{proof}
The result follows using the same strategy as the one used to prove Lemma~\ref{kappabound}. Note that when using Lemma~\ref{lemmakappabound}, one should set $\theta$ to $I(w^{-1}\odot \mu)$ and replace $w$ by $w^{-1}$. 
\end{proof}
By setting $w = \mu$, Step 4 of Algorithm 5 reduces to orthogonal projections. In this case, based on Lemma~\ref{weightedkappabound}, Algorithm 5 requires $O(N\|\mu\|_1\log \frac{1}{\epsilon})$ iterations to achieve an $\epsilon$-optimal solution. By setting $w = \mu^{\frac{1}{2}}$ for all $i\in [R]$, Step 4 of Algorithm 5 reduces to the projections $\Pi_{\mathcal{B}_r, w^{\frac{1}{2}}}(\cdot)$. In this case, Algorithm 5 requires $O\left(\left\|\mu^{\frac{1}{2}}\right\|^2\log \frac{1}{\epsilon}\right)$ iterations to achieve an $\epsilon$-optimal solution. The latter result is slightly better because $\left\|\mu^{\frac{1}{2}}\right\|^2\leq N \|\mu\|_1$.

\subsection{A Parallel RCD Method for Solving~\eqref{weightedcompact} with Uniform Sampling Strategies}
As discussed in Section~\ref{sec:CDMs}, RCDM strongly depends on an $\alpha$-proper distribution $P$ that characterizes the parallel coordinate sampling strategy. In what follows, we choose $P$ to be a uniform distribution. From Lemma~\ref{eqnorm}, we know that when $P$ is uniform, one has $\theta_r^P = \frac{K - 1}{R-1}\mu + \frac{R - K}{R-1}1$ for all $r\in[R]$, where $K$ denotes the number of projections computed in parallel as part of each iteration. In Algorithm 1, $\theta_r^P$ defines the normed space over which to minimize $g(y)$. As our goal is to minimize $g_{w}(y) = \frac{1}{2} \|Ay\|_{2,w^{-1}}^2$, the vector used to define the normed space is 
$$\nu= w^{-1}\odot \theta_r^P = w^{-1} \odot (\frac{K - 1}{R-1}\mu + \frac{R - K}{R-1}1).$$ 
The parallel RCDM procedure in this setting is described in Algorithm 6. 
 \begin{table}[htb]
\centering
\begin{tabular}{l}
\hline
\label{PRCD}
\textbf{Algorithm 6: } \textbf{Parallel RCDM with Uniform Sampling for Solving ~\eqref{weightedcompact}} \\
\hline
\ \textbf{Input}: $\mathcal{B}$, $K$ \\
\ 0: Initialize $y^{(0)}\in\mathcal{B}$, $k\leftarrow 0$\\
\ 1: Do the following steps iteratively until the dual gap $<\epsilon$:\\
\ 2: \;\quad Uniformly sample $C_{ i_k} \subseteq [R]$ so that $|C_{ i_k}| = K$. \\
\ 3: \;\quad For $r\in C_{i_k}$: \\
\ 4: \;\quad\quad  $y_r^{(k+1)}\leftarrow \Pi_{\mathcal{B}_r, \nu} (y_r^{(k)} - (\nu^{-1}) \odot \nabla_r g_{w}(y) \ )$ \\
\ 5: \;\quad Set $y_r^{(k+1)}\leftarrow y_r^{(k)}$ for $r\not \in C_{i_k}$ \\
\ 6: \;\quad $k\leftarrow k+1$\\
\ 7: Output $y^{(k)}$ \\
\hline
\end{tabular}
\end{table}

Similarly to what was done in Lemma~\ref{skewstrongconv}, we can establish weak strong convexity of $g_w(y)$ with respect to the norm $\|\cdot\|_{2,\nu}$ by invoking Lemma~\ref{lemmakappabound}.
\begin{lemma}~\label{weightedskewstrongconv}
For any $y\in \mathcal{B}$, let $y^* = \arg\min_{\xi\in\Xi}\|\xi- y\|_{2,\nu}^2$. Then,
$$\|Ay - Ay^*\|_{2,w^{-1}}^2 \geq \frac{2}{\|w\|_1\|\nu\|_{1}}\|y-y^*\|_{2, \nu}^2.$$ 
\end{lemma}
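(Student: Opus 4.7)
The plan is to mirror the strategy used in the proof of Lemma~\ref{skewstrongconv}, but with the two free parameters of Lemma~\ref{lemmakappabound} chosen to produce the weighted 2-norm on the right-hand side of the desired inequality. Specifically, I will invoke Lemma~\ref{lemmakappabound} with $\theta$ set to the induced vector $I(\nu)$ (so that each block carries the same weight $\nu \in \mathbb{R}^N$, matching our convention for $\|\cdot\|_{2,\nu}$) and with the auxiliary weight vector that Lemma~\ref{lemmakappabound} calls $w$ set to $w^{-1}$, using our $w$. The target point $z$ will be chosen as $z=Ay^*$, which lies in the base polytope of $F=\sum_r F_r$ because $y^*\in\mathcal{B}$ and this base polytope is the Minkowski sum of the $\mathcal{B}_r$.

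First I would produce a $\xi\in\mathcal{B}$ with $A\xi=Ay^*=-x^*$ satisfying
\[
\|\xi-y\|_{2,I(\nu)} \;\leq\; \sqrt{\frac{\|I(\nu)\|_{1,\infty}\,\|w\|_1}{2}}\;\|Ay-Ay^*\|_{2,w^{-1}},
\]
directly from the second statement of Lemma~\ref{lemmakappabound}. Then I would simplify the prefactor using the identity $\|I(\nu)\|_{1,\infty}=\|\nu\|_1$ (noted in the paragraph introducing induced vectors), leaving the factor $\tfrac{1}{2}\|\nu\|_1\|w\|_1$ that appears in the claim.

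Next I would verify that $\xi$ actually lies in $\Xi=\mathcal{Z}\cap\mathcal{B}$, not merely in $\mathcal{B}$. Membership in $\mathcal{Z}$ requires $A\xi=-x^*$, which holds by construction, and $\xi_{r,i}=0$ whenever $i\notin S_r$, which is automatic for any $\xi\in\mathcal{B}_r$ by the incidence observation in Section~\ref{sec:preliminaries}. Since $y^*$ is defined as the minimizer over $\Xi$ of $\|\cdot - y\|_{2,\nu}^2$, and since for induced vectors the two notations agree, $\|y-y^*\|_{2,\nu}\leq\|y-\xi\|_{2,\nu}=\|y-\xi\|_{2,I(\nu)}$. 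Squaring and rearranging gives the desired lower bound on $\|Ay-Ay^*\|_{2,w^{-1}}^2$.

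I do not anticipate any substantive obstacle: the entire argument is a one-line application of Lemma~\ref{lemmakappabound} together with a projection inequality, and the proof of Lemma~\ref{skewstrongconv} already provides a template. The only subtlety is bookkeeping around the two meanings of $w$ (the weight in the statement versus the weight variable inside Lemma~\ref{lemmakappabound}); I would handle this by renaming the latter when invoking the lemma so that $(w^{-1})^{-1}=w$ produces the $\|w\|_1$ factor cleanly.
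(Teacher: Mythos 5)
Your proposal is correct and matches the paper's own argument: the paper proves this lemma by the same one-line application of Lemma~\ref{lemmakappabound} with $z=Ay^*$, $\theta=I(\nu)$, and the lemma's weight vector set to $w^{-1}$, followed by the observation that the resulting $\xi$ lies in $\Xi$ so that the minimality of $y^*$ applies, exactly as in Lemma~\ref{skewstrongconv}. Your bookkeeping of the two roles of $w$ and the identity $\|I(\nu)\|_{1,\infty}=\|\nu\|_1$ is handled the same way the paper handles it in the proof of Lemma~\ref{weightedkappabound}.
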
  
Therefore, using a strategy similar to the one outlined in the proof of Theorem~\ref{PCDrate}, the convergence rates of Algorithm 6 can be derived as summarized in the next theorem. 
\begin{theorem} \label{weightedPCDrate}
At each iteration of Algorithm 6, $y^{(k)}$ satisfies 
\begin{align*}
&\mathbb{E}\left[g_w(y^{(k)})- g_w(y^*) + \frac{1}{2}d_{I(\nu)}^2(y^{k}, \xi) \right]  \\
 &\quad \leq \left[1- \frac{4K}{R(\|w\|_1\|\nu\|_{1} + 2)}\right]^{k}\left[g_w(y^{(0)})- g_w(y^*) + \frac{1}{2}d_{I(\nu)}^2(y^{0}, \xi) \right].
\end{align*}
\end{theorem}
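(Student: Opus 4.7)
The plan is to mirror the argument used for Theorem~\ref{PCDrate}, with the Euclidean-type quantities replaced by their $w$-weighted analogues. The three pillars are: a per-iteration descent inequality tailored to the weighted objective $g_w$, the equality-in-expectation trick of Lemma~\ref{eqnorm2} that converts the random per-sample norm $I(w^{-1}\odot\mu^{C_{i_k}})$ into the deterministic norm $I(\nu)$ used by the algorithm, and the weak strong convexity of Lemma~\ref{weightedskewstrongconv}.

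First, I would establish the analogue of the descent inequality~\eqref{paradescent}. Writing $g_w(y+h)=g_w(y)+\langle\nabla g_w(y),h\rangle+\tfrac{1}{2}\|Ah\|_{2,w^{-1}}^2$ and bounding $\|Ah\|_{2,w^{-1}}^2\le \|h\|_{2,I(w^{-1}\odot\mu^C)}^2$ for any $h$ supported on the blocks of a group $C$ (and on the incident coordinates within each block) gives
\begin{align*}
g_w(y+h)\le g_w(y)+\sum_{r\in C}\langle\nabla_r g_w(y),h_r\rangle+\frac{1}{2}\sum_{r\in C}\|h_r\|_{2,w^{-1}\odot\mu^C}^2.
\end{align*}
Taking expectation over a uniform $K$-subset $C_{i_k}$ and invoking the identity in Lemma~\ref{eqnorm2} (which transfers verbatim after replacing $\theta^P$ by $\nu=w^{-1}\odot\theta^P$) replaces the random per-sample weight by the fixed weight $\nu$. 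This is the point at which the update rule in Step~4 of Algorithm~6 -- performing a projection in the $\|\cdot\|_{2,\nu}$ norm -- becomes the correct gradient step for the upper bound.

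Next, following the proof of Theorem~\ref{PCDrate}, I would apply the first-order optimality condition of the oblique projection defining $y_r^{(k+1)}$ (giving the weighted analogue of Lemma~\ref{optcond}) together with the three-point identity for the $\|\cdot\|_{2,\nu}$ inner product (the weighted analogue of Lemma~\ref{threepoints}). This yields, in expectation,
\begin{align*}
\mathbb{E}\!\left[g_w(y^{(k+1)})\right]\le g_w(y^{(k)})+\alpha\langle\nabla g_w(y^{(k)}),y^*-y^{(k)}\rangle+\tfrac{1}{2}\|y^{(k)}-y^*\|_{2,\nu}^2-\tfrac{1}{2}\mathbb{E}\!\left[\|y^{(k+1)}-y^*\|_{2,\nu}^2\right],
\end{align*}
with $\alpha=K/R$. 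The linear term is then controlled via the weak strong convexity of Lemma~\ref{weightedskewstrongconv}, which gives $\|Ay-Ay^*\|_{2,w^{-1}}^2\ge \tfrac{2}{\|w\|_1\|\nu\|_1}\|y-y^*\|_{2,\nu}^2$. Just as in the unweighted case, combining the convexity inequality $g_w(y^*)\ge g_w(y)+\langle\nabla g_w(y),y^*-y\rangle+\tfrac{1}{\|w\|_1\|\nu\|_1}\|y-y^*\|_{2,\nu}^2$ with the optimality inequality $g_w(y^*)-g_w(y)\le -\tfrac{1}{\|w\|_1\|\nu\|_1}\|y-y^*\|_{2,\nu}^2$ produces the contractive bound
\begin{align*}
\langle\nabla g_w(y),y^*-y\rangle\le \frac{4}{\|w\|_1\|\nu\|_1+2}\!\left[g_w(y^*)-g_w(y)-\tfrac{1}{2}\|y-y^*\|_{2,\nu}^2\right].
\end{align*}
Substituting this into the preceding display and regrouping yields a one-step contraction of the Lyapunov function $g_w(y^{(k)})-g_w(y^*)+\tfrac{1}{2}d_{I(\nu)}^2(y^{(k)},\Xi)$ by the factor $1-\tfrac{4\alpha}{\|w\|_1\|\nu\|_1+2}=1-\tfrac{4K}{R(\|w\|_1\|\nu\|_1+2)}$; iterating $k$ times gives the claim, after using $d_{I(\nu)}^2(y^{(k+1)},\Xi)\le \|y^{(k+1)}-y^*\|_{2,\nu}^2$.

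The main obstacle I expect is bookkeeping rather than conceptual: carefully tracking which of the two norms $\|\cdot\|_{2,w^{-1}}$ (acting on $Ay$-space) and $\|\cdot\|_{2,\nu}$ (acting on $y$-space) appears at each step, and verifying that Lemma~\ref{eqnorm2} extends to the weight $w^{-1}\odot\mu^C$. The latter goes through unchanged because the identity only relies on the tower property of conditional expectation together with $\mathbb{E}_{C\sim P}[\mu^C\mid r\in C]=\theta_r^P$; multiplying coordinate-wise by the deterministic vector $w^{-1}$ preserves this equality. Once this is checked, the contraction constant and Lyapunov function fall out exactly as above.
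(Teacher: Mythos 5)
Your proposal is correct and follows essentially the same route as the paper, which itself proves Theorem~\ref{weightedPCDrate} by transplanting the proof of Theorem~\ref{PCDrate} with $\theta^P$ replaced by $\nu=w^{-1}\odot\theta^P$ and the strong-convexity constant supplied by Lemma~\ref{weightedskewstrongconv}. The points you flag as needing verification (the Cauchy--Schwarz bound $\|Ah\|_{2,w^{-1}}^2\le\sum_{r\in C}\|h_r\|_{2,w^{-1}\odot\mu^C}^2$ and the extension of Lemma~\ref{eqnorm2} to the weight $w^{-1}\odot\mu^C$) do go through exactly as you describe.
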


By setting $w = \frac{K - 1}{R-1}\mu + \frac{R - K}{R-1}1$, we reduce the projections in Step 4 of Algorithm 6 to orthogonal projections. In this case, based on Theorem~\ref{weightedPCDrate}, Algorithm 6 requires $O\left(\left(\frac{K - 1}{R-1}N\|\mu\|_1 + \frac{R - K}{R-1}N^2\right)\frac{R}{K}\log\frac{1}{\epsilon}\right)$ iterations to achieve an $\epsilon$-optimal solution. 

By setting $w = \left(\frac{K - 1}{R-1}\mu + \frac{R - K}{R-1}\right)^{1/2}$ for all $i\in [R]$, the projections in Step 4 of Algorithm 6 reduce to oblique projections $\Pi_{\mathcal{B}_r, w^{\frac{1}{2}}}(\cdot)$. In this case, Algorithm 6 requires $O\left(\left\| \left(\frac{K - 1}{R-1}\mu_i + \frac{R - K}{R-1}\right)^{1/2}\right\|_1^2\log \frac{1}{\epsilon}\right)$ iterations to achieve an $\epsilon$-optimal solution, which is slightly better than the previous case. The accelerated methods can be analyzed in the same manner. 

\subsection{Simulations}
We now describe simulation results that empirically evaluate Algorithms 5 and 6. The DSFM problem is designed as follows. We consider $N=100$ vertices. The unary potentials of different elements are iid standard Gaussian variables. We construct a network over these vertices based on the Barab$\acute{a}$si-Albert model (BA)~\cite{albert2002statistical}, initialized with a single edge between vertices 1 and 2. Each edge in the network gives a pairwise potential for the corresponding vertices. We use the BA model so that the number of incidence relations corresponding to different vertices vary to a large extent. As we are using weighted proximal terms, the continuous objectives are not consistent for different $w$. However, here, we are only interested in generating solutions for the discrete problem~\eqref{DSFM} and thus regard the discrete gap $\nu_d$ as the relevant metric for characterizing convergence properties. The following results are obtained from $100$ independent experiments.

In IAP (Algorithm 5), we set $w\in\{1, \mu, \mu^{1/2}\}$, corresponding to three cases: \emph{unweighted proximal term + oblique projections}, \emph{weighted proximal term + orthogonal projections}, \emph{weighted proximal term + oblique projections}, respectively. In RCDM-U (Algorithm 6), we set $w\in\{1, \frac{K-1}{R-1}\mu + \frac{R-K}{R-1}1, (\frac{K-1}{R-1}\mu + \frac{R-K}{R-1}1)^{1/2}\}$, corresponding to the same three cases. We control the number of parallel projection operations in each iteration by choosing $K\in\{10, 20, 30, 40, 50\}$. Figure~\ref{fig:weightedprim} shows the convergence curve of the discrete gap for different solvers and different choices of $w$. We only plotted results for $K=10, 50$ as other values of $K$ produce similar patterns. For both IAP and RCDM-U, when $w$ corresponds to the weighted proximal term + orthogonal projections case, we obtain the best convergence rates. The value $w=1$, corresponding to the case unweighted proximal term + oblique projections, results in the worst convergence rates. Albeit somewhat inconsistent with the results listed in Table~\ref{tab:results2}, the simulations simply imply that using weighted proximal terms can reduce the complexity of the algorithms at hand and that the weighted proximal term with orthogonal projections in the inner loop may represent the best choice in practice. 

\begin{figure}[t]
\centering
\includegraphics[trim={0cm 0cm 0cm 0cm},clip,width=.24\textwidth]{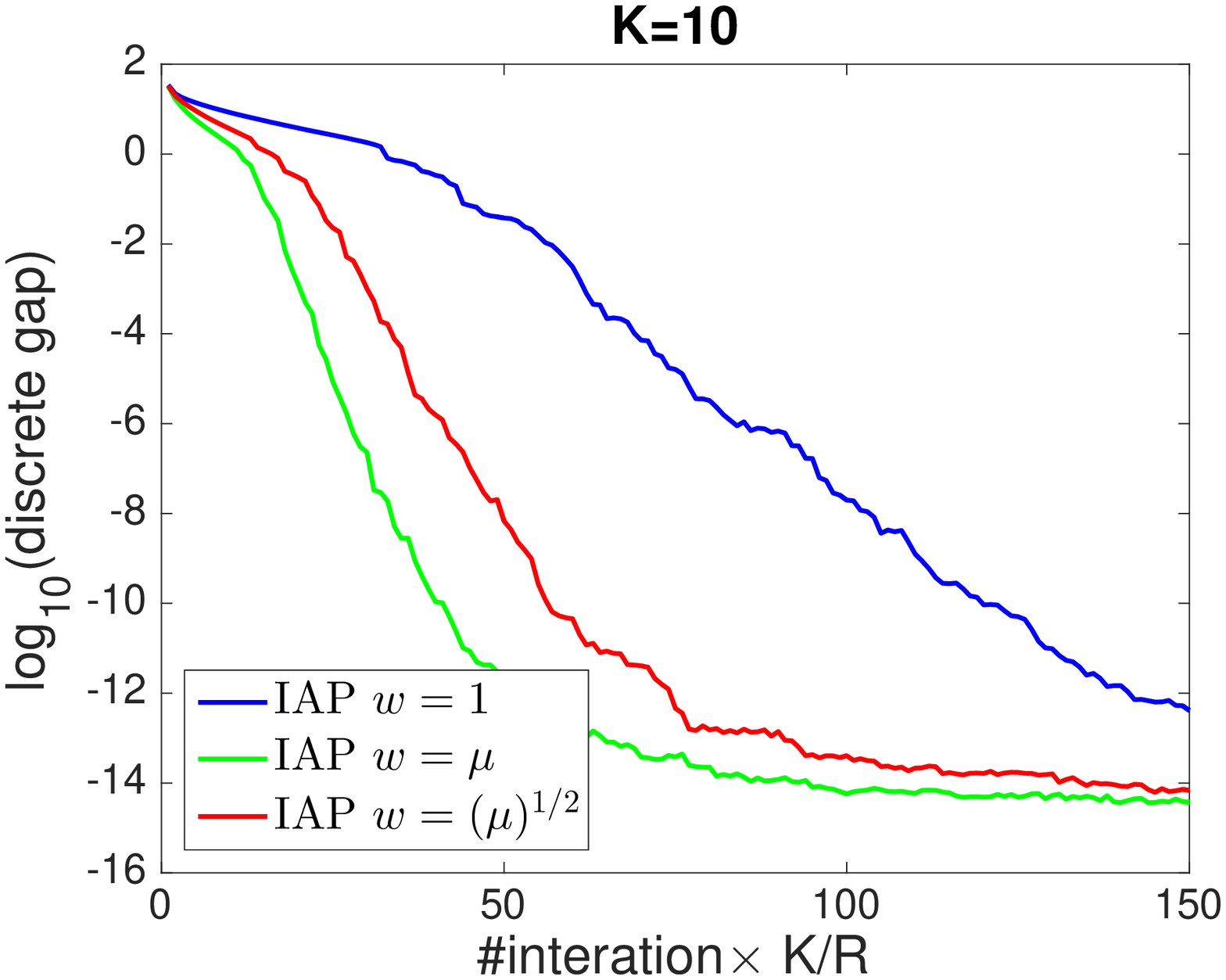}
\includegraphics[trim={0cm 0cm 0cm 0cm},clip, width=.24\textwidth]{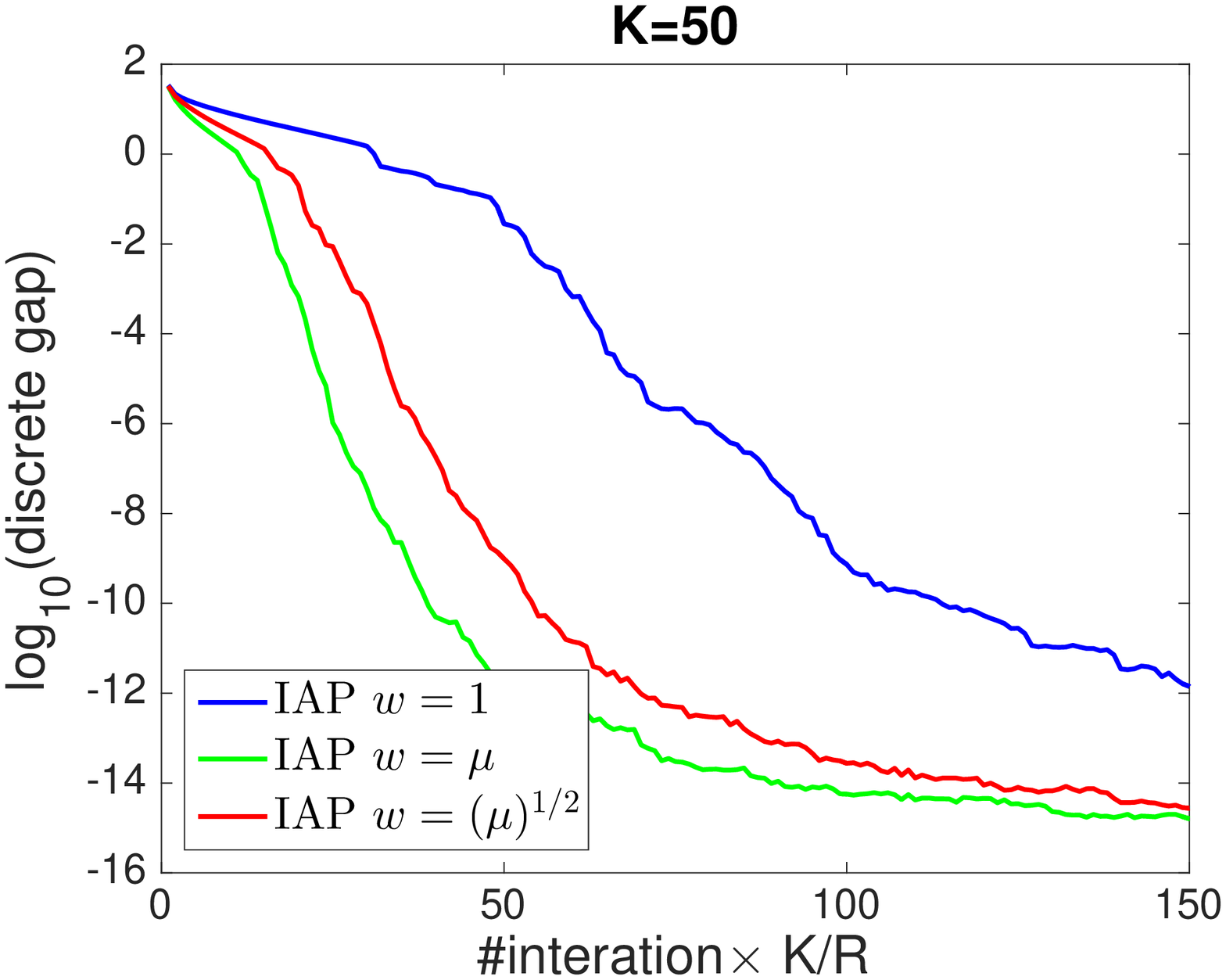}
\includegraphics[trim={0cm 0cm 0cm 0cm},clip,width=.24\textwidth]{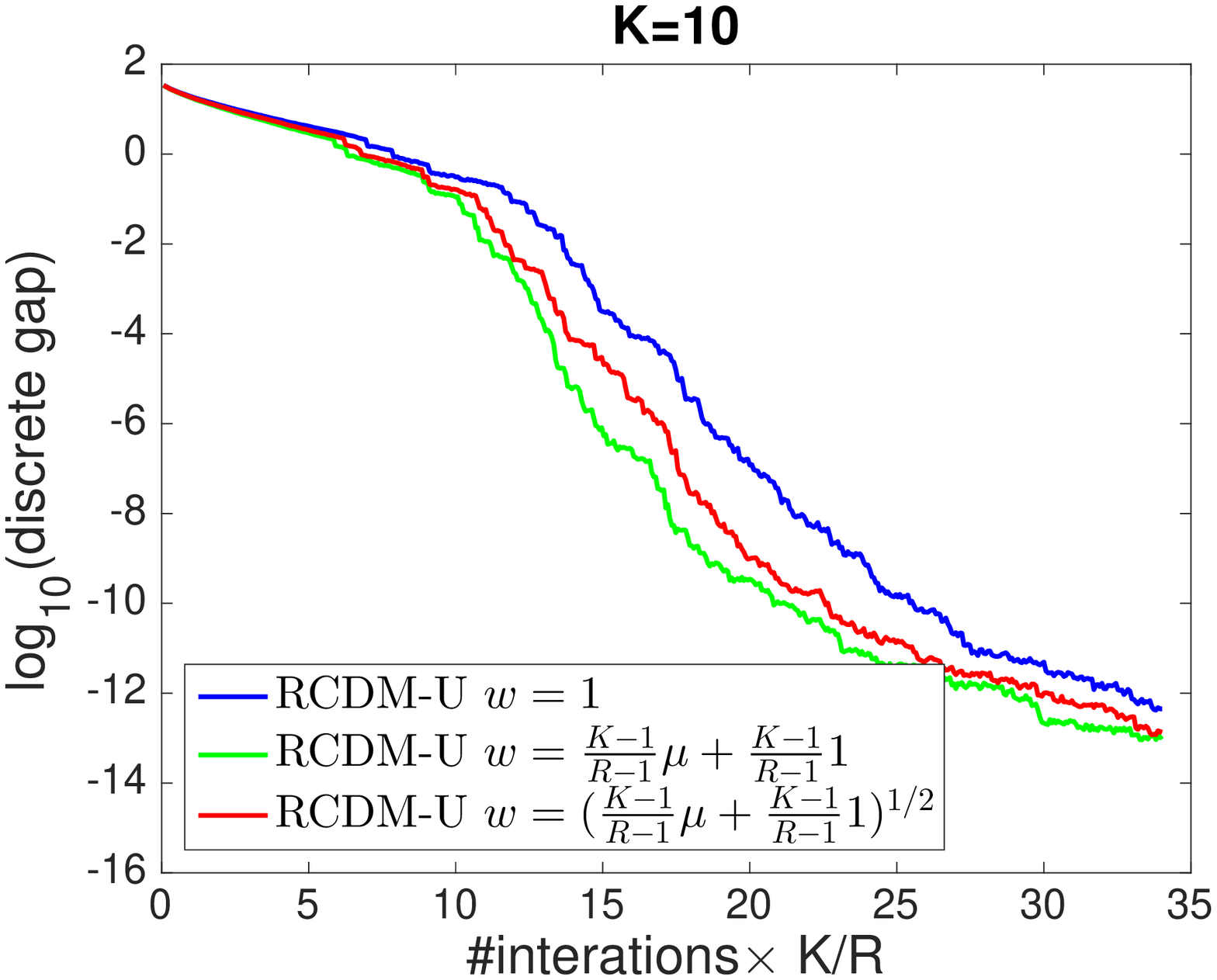}
\includegraphics[trim={0cm 0cm 0cm 0cm},clip, width=.24\textwidth]{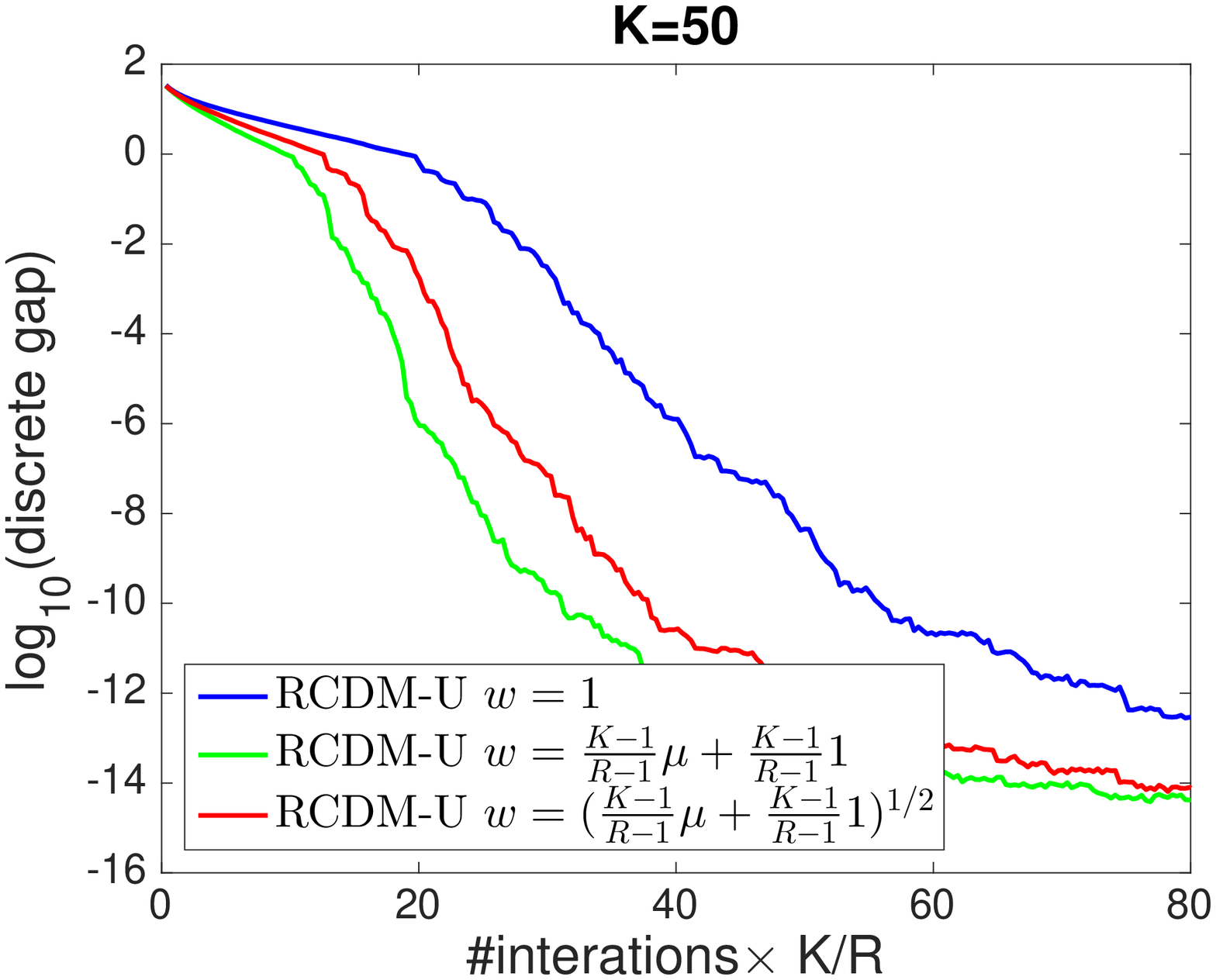}\\
\caption{Simulations for Algorithm 5 and 6: $\log_{10}(\text{discrete gap})$ vs (number of iterations $\times K/R$). }
\label{fig:weightedprim}
\end{figure}

In Table~\ref{tab:weightedprimalsim}, we also list the number of iterations needed by different solvers to obtain a solution for the discrete problem~\eqref{DSFM}. Again, the $w$ corresponding to the weighted proximal term + orthogonal projections case results in the smallest number of iterations, while the $w$ corresponding to the unweighted proximal term + oblique projections case results in the largest number of iterations. Note that as $K$ increases, the number of iterations $\times K/R$ in IAP does not change as IAP is fully parallelizable, while the number of operations in RCDM-U slightly increases due to the overlapping incidence sets of different submodular functions. 

\begin{table}[t]
\scriptsize{
\begin{tabular}{p{1.2cm}<{\centering}|p{3.0cm}<{\centering}| p{0.45cm}<{\centering}| p{0.45cm}<{\centering}| p{0.45cm}<{\centering}| p{0.45cm}<{\centering}| p{0.45cm}<{\centering}| p{0.45cm}<{\centering}| p{0.45cm}<{\centering}| p{0.45cm}<{\centering}|  p{0.45cm}<{\centering}| p{0.45cm}<{\centering}}
\hline
\multirow{2}{*}{Solvers} & \multirow{2}{*}{$w$} &  \multicolumn{2}{c|}{$K = 10$} &  \multicolumn{2}{c|}{$K = 20$}&  \multicolumn{2}{c|}{$K = 30$}&  \multicolumn{2}{c|}{$K = 40$}&  \multicolumn{2}{c}{$K = 50$} \\
\cline{3-12}
&& MN & MD& MN & MD& MN & MD& MN & MD& MN & MD\\
\hline
\multirow{3}{*}{IAP} & 1 & 109 & 103 & 109 & 103 & 109 & 103 & 109 & 103 & 109 & 103 \\
\cline{2-12}
& $\mu$ & 43 & 34 & 43 & 34 & 43 & 34 & 43 & 34 & 43 & 34 \\
\cline{2-12}
& $\mu^{1/2}$ & 59 & 50 & 59 & 50 & 59 & 50 & 59 & 50 & 59 & 50 \\
\hline
\multirow{3}{*}{RCDM-U} & 1 & 27 & 22 & 34 & 28 & 43 & 38 & 51 & 46 & 54 & 49 \\
\cline{2-12}
& $\frac{K-1}{R-1}\mu + \frac{R-K}{R-1}1$ & 22 & 17 & 25 & 20 & 29 & 24 & 32 & 24 & 33 & 25 \\
\cline{2-12}
&  $\left(\frac{K-1}{R-1}\mu + \frac{R-K}{R-1}1\right)^{1/2}$ & 25 & 19 & 28 & 23 & 33 & 28 & 37 & 31 & 38 & 32 \\
\hline
\end{tabular}}
\caption{The number of iterations $\times K/R$ needed to find an optimal solution to the discrete problem~\eqref{DSFM}. MN: mean; MD: median.}
\label{tab:weightedprimalsim}
\end{table}

\section{Supplementary experiments}\label{supexp}
\textbf{Semi-supervised learning over hypergraphs}. We also evaluate the proposed approaches over the 20Newsgroups from the University of California Irvine (UCI) data repository. This dataset is used as a benchmark example for evaluating semisupervised learning algorithms over hypergraphs~\cite{hein2013total,yadati2018hypergcn}. Here, for simplicity, we focused on binary classification tasks and thus paired the four 20Newsgroups classes, so that one group includes ``Comp.'' and ``Sci'', and the other one includes ``Rec.'' and ``Talk''. The 20Newsgroups dataset consists of categorical features and we adopt the same approach as the one described in~\cite{hein2013total} to construct hyperedges: each feature corresponds to one hyperedge and contributes one submodular function to the decomposition. Hence, 20Newsgroups contains $N=16242$ elements and $R=100$ submodular functions.
 
\begin{figure}[t]
\centering
\includegraphics[trim={0cm 0cm 0cm 0cm},clip,width=.24\textwidth]{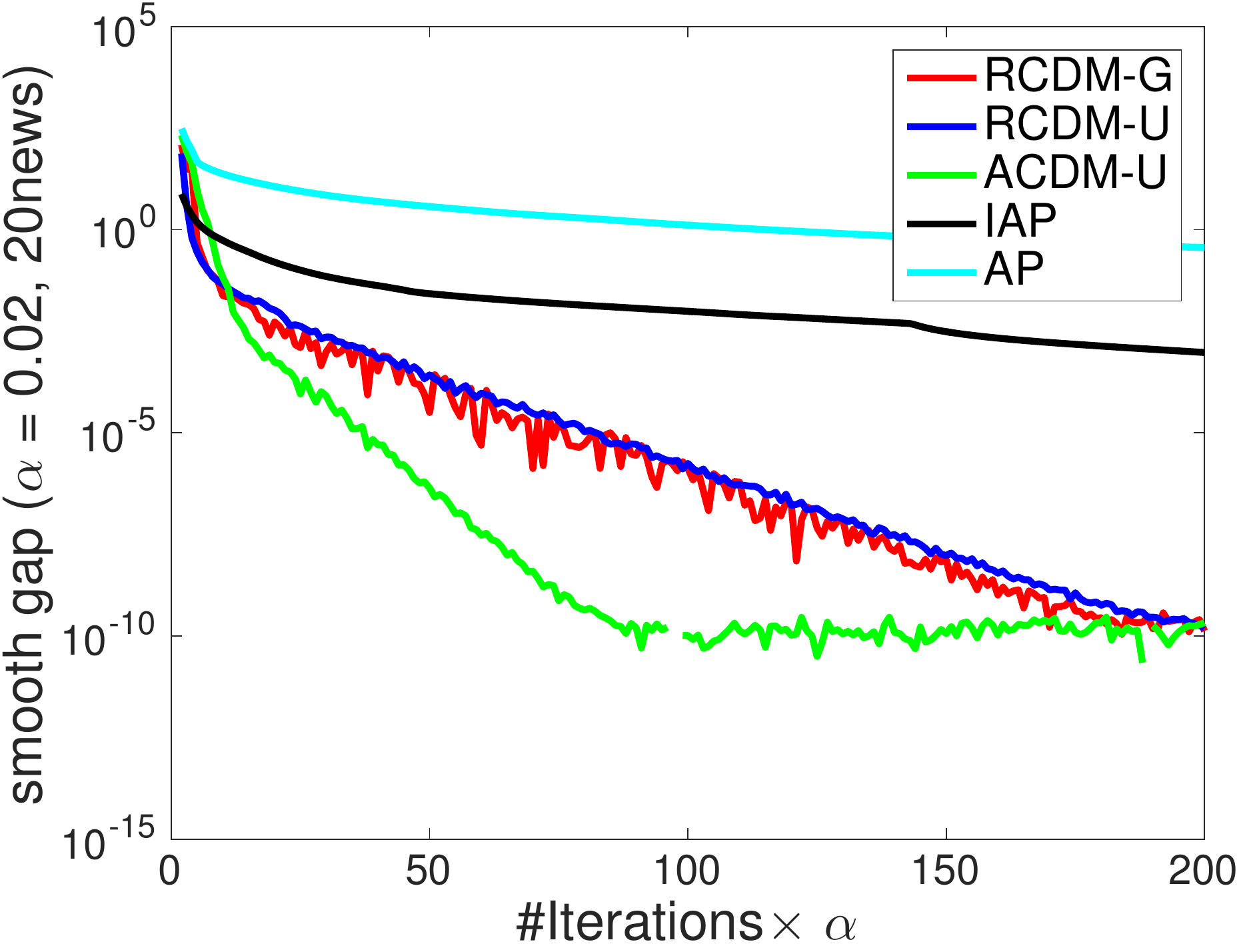}
\includegraphics[trim={0cm 0cm 0cm 0cm},clip, width=.24\textwidth]{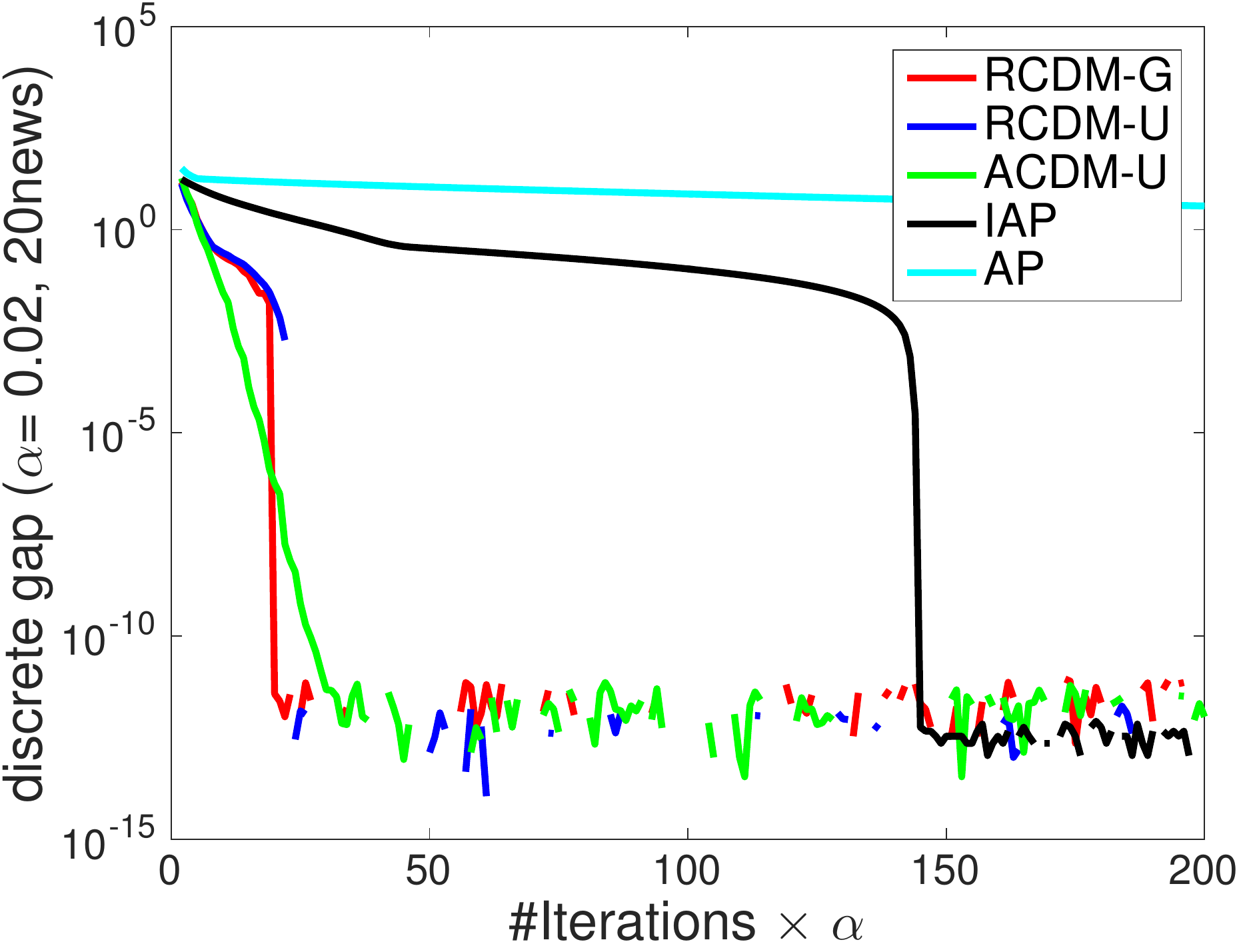}
\includegraphics[trim={0cm 0cm 0cm 0cm},clip,width=.24\textwidth]{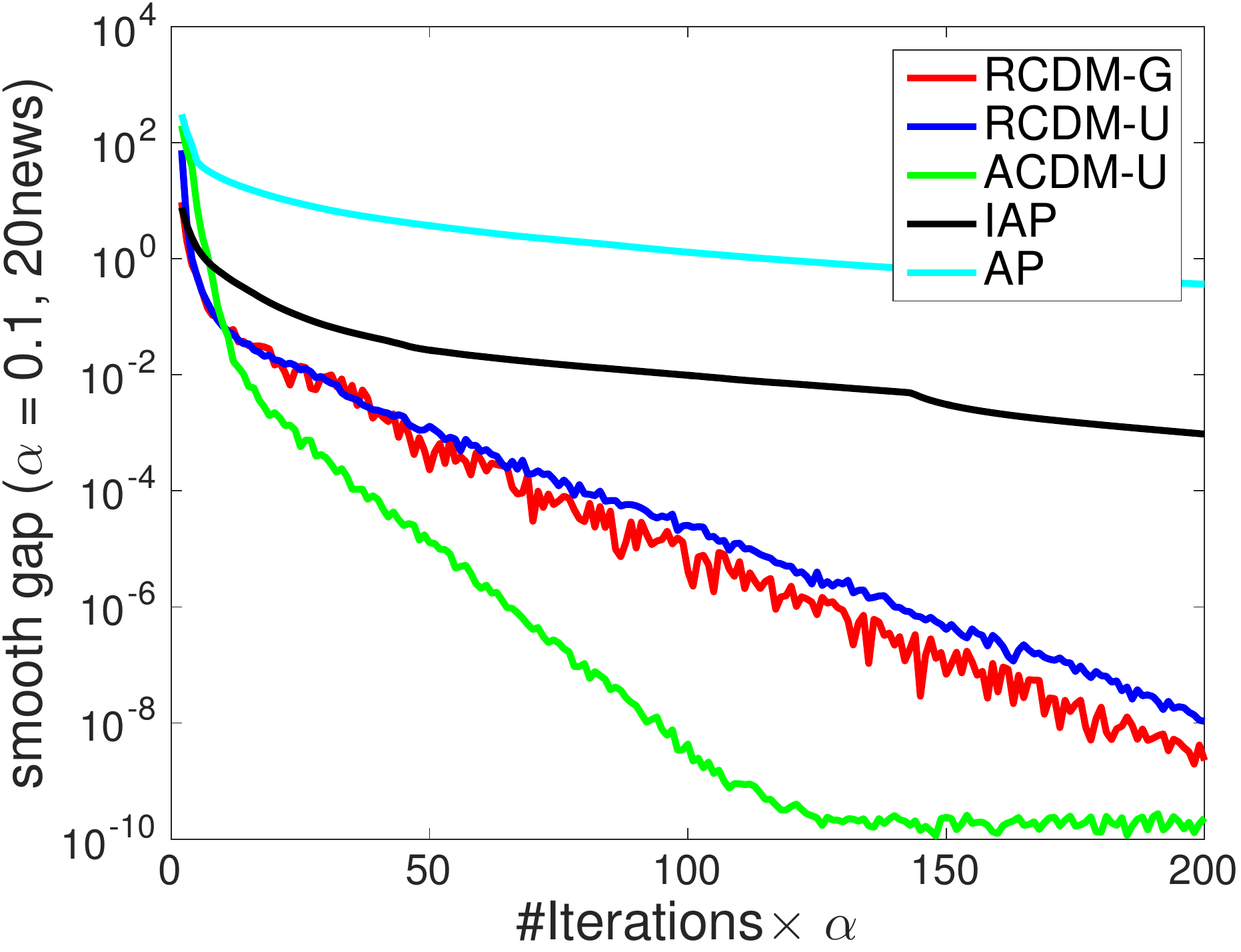}
\includegraphics[trim={0cm 0cm 0cm 0cm},clip, width=.24\textwidth]{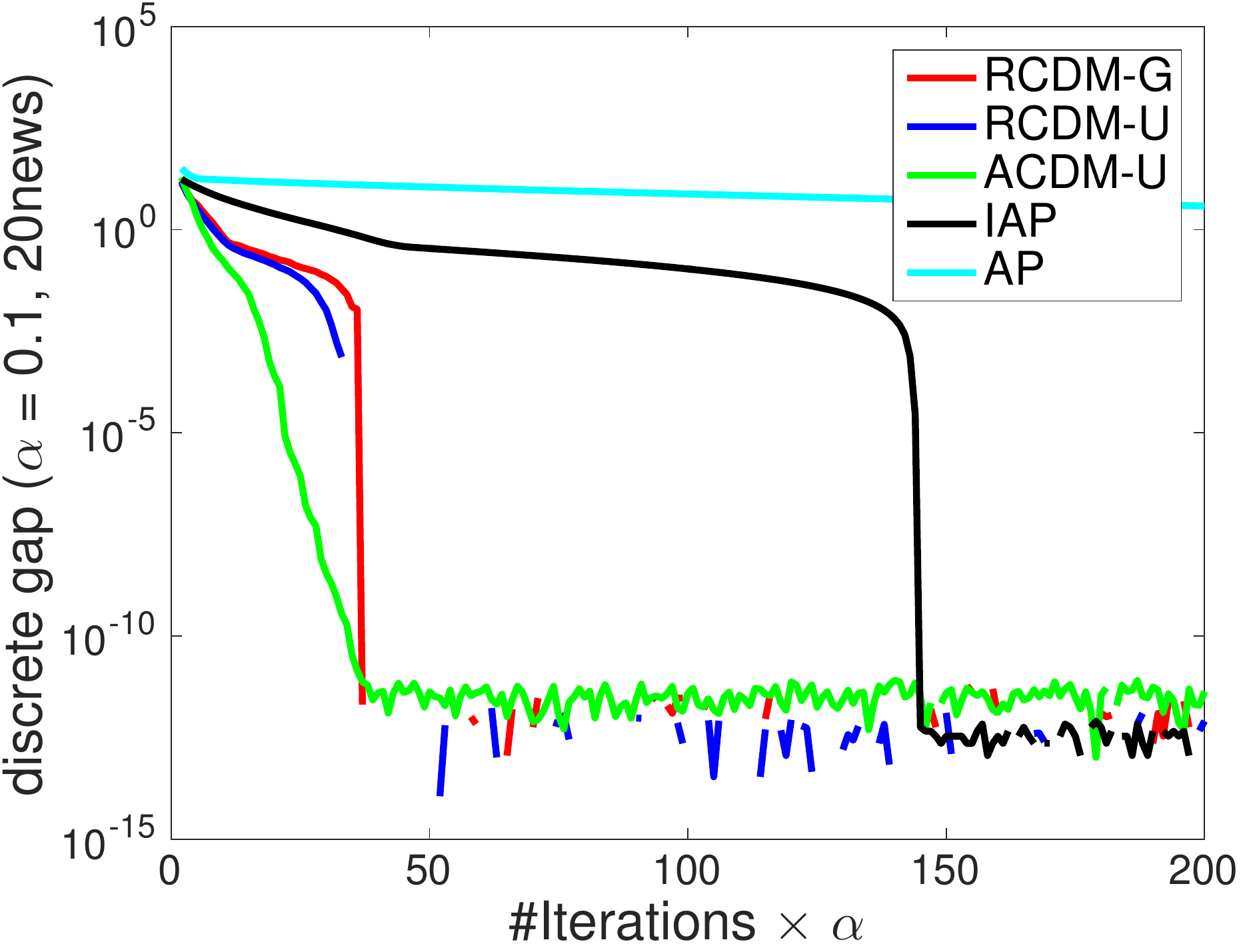}\\
\caption{20Newsgrounp: Smooth/discrete gap vs the (number of iterations $\times \alpha$). }
\label{20news}
\end{figure}

In the experiments for 20Newsgroups, we uniformly at random picked $200$ elements and set their corresponding components in $x_0$ of equation~\eqref{semisupervise} to the true labels and set all other entries to zero. Figure~\ref{20news} shows the results of the experiments pertaining to 20Newsgroup. We compared the convergence rate of different algorithms for different values of the parameter $\alpha\in\{0.02,0.1\}$.  The value on the horizontal axis, $\#$ iterations $\times \alpha$, equals the total number of projections, scaled by $R$. The results are averaged over $10$ independent experiments. Once again, we observe that CD-based methods outperform AP-based methods. ACDM-U offers the best performance among all CD-based methods and IAP significantly outperforms AP. Similarly, RCDM-G has better performance than RCDM-U, due to the use of the greedy algorithm for the sampling procedure.

\end{document}